\documentclass{article}

\PassOptionsToPackage{numbers, compress}{natbib}

\usepackage[table]{xcolor}

\usepackage[final]{neurips_2025}

\usepackage{algorithm}
\usepackage{algpseudocode}
\usepackage{wrapfig}
\usepackage[utf8]{inputenc} %
\usepackage[T1]{fontenc}    %
\usepackage{hyperref}       %
\usepackage{url}            %
\usepackage{booktabs}       %
\usepackage{amsfonts}       %
\usepackage{nicefrac}       %
\usepackage{microtype}      %

\usepackage{amsmath}
\usepackage{amssymb}
\usepackage{mathtools}
\usepackage{amsthm}
\usepackage{bm}
\usepackage{bbm}

\newtheorem{theorem}{Theorem}

\newtheorem{prop}{Proposition}

\theoremstyle{definition}

\newtheorem{defn}{Definition}

\usepackage{refcount}
\usepackage{multirow}
\usepackage{booktabs}

\usepackage{tabularx} %
\usepackage{graphicx}

\usepackage{caption}
\usepackage{subcaption}

\usepackage{enumitem}
\setlist[itemize]{noitemsep}

\usepackage{wrapfig}
\usepackage{sidecap}

\definecolor{darkblue}{rgb}{0, 0, 0.5}
\hypersetup{colorlinks=true, citecolor=darkblue, linkcolor=darkblue, urlcolor=darkblue}

\title{Know What You Don't Know: \\ Uncertainty Calibration of Process Reward Models}

\author{%
  Young-Jin Park\textsuperscript{1}, 
  Kristjan Greenewald\textsuperscript{2}, 
  Kaveh Alim\textsuperscript{1}, 
  Hao Wang\textsuperscript{2,3}, 
  Navid Azizan\textsuperscript{1} \\[2mm]
  \textsuperscript{1}Massachusetts Institute of Technology \\
  \textsuperscript{2}MIT-IBM Watson AI Lab \\
  \textsuperscript{3}Red Hat AI Innovation \\
}

\newenvironment{revision}{\color{black}}{}

\begin{document}

\maketitle

\begin{abstract}
Process reward models (PRMs) play a central role in guiding inference-time scaling algorithms for large language models (LLMs).
However, we observe that even state-of-the-art PRMs can be poorly calibrated. 
Specifically, they tend to overestimate the success probability that a partial reasoning step will lead to a correct final answer, particularly when smaller LLMs are used to complete the reasoning trajectory.
To address this, we present a calibration approach---performed via quantile regression---that adjusts PRM outputs to better align with true success probabilities. 
Leveraging these calibrated success estimates and their associated confidence bounds, we introduce an \emph{instance-adaptive scaling} (IAS) framework that dynamically adjusts the compute budget based on the estimated likelihood that a partial reasoning trajectory will yield a correct final answer.
Unlike conventional methods that allocate a fixed number of reasoning trajectories per query, this approach adapts to each instance and reasoning step when using our calibrated PRMs. Experiments on mathematical reasoning benchmarks show that (i) our PRM calibration method achieves small calibration error, outperforming the baseline methods, (ii) calibration is crucial for enabling effective IAS, and (iii) the proposed IAS strategy reduces inference costs while maintaining final answer accuracy, utilizing less compute on more confident problems as desired.

\end{abstract}

\begin{center}
  \resizebox{0.818\linewidth}{!}{%
    Project Page: \url{http://young-j-park.github.io/know-what-you-dont-know}
  }
\end{center}
  
\section{Introduction}

Inference-time scaling is an emerging paradigm that improves the output quality of LLMs by trading off inference speed. Just as humans approach complex problems by breaking them down, reasoning through each step, and revising their thoughts, LLMs can also benefit from being given more time to ``think'' during inference \citep{abbe2025far}. 
While a myriad of inference-time scaling approaches have emerged, they often involve prompting the model with explicit reasoning instructions to encourage a structured thought process \citep{wei2022chain, kojima2022large}. Recent studies extend this idea by generating multiple candidate reasoning paths or answers and aggregating them to improve robustness \citep{snell2024scaling}. 
Such strategies allow small-sized LLMs to achieve comparable (or even better) performance with large models when tackling challenging tasks, such as advanced mathematics or complex coding \citep{jaech2024openai, liu2025can}.

An important component in many inference-time scaling algorithms is the chosen process reward model (PRM). PRMs are trained to quantify, as a reward, how good or desirable a model’s intermediate-step outputs are with respect to a given task and/or to alignment with human preferences \citep{cobbe2021training, uesato2022solving, lightman2024lets}. When normalized, reward scores are often interpreted as the probability that continuing from the current output will lead to a desirable outcome (e.g., producing a correct final answer in a math problem) \citep{rlhf2024}. 
In our experiments, we find, however, that even state-of-the-art PRMs can be miscalibrated, assigning overly optimistic scores---particularly on challenging, out-of-distribution problems. This is perhaps unsurprising since the probability of success depends on factors such as choice of model and inference method, which are not input into the PRM.

Miscalibration and overconfidence of PRMs can limit their utility beyond simply ranking partial reasoning steps. We highlight three important applications where calibrated PRMs are desirable: (1) providing interpretable uncertainty estimates to monitor LLM outputs during generation~\citep{ye2025uncertainty}, (2) identifying when to say ``I don't know'' \citep[see, e.g.,][]{tian2023just, cohen2024don} or backtrack based on a low predicted probability of success, and (3) assessing the current problem difficulty or likelihood of success to adaptively adjust the inference-time compute budget. These applications motivate a fundamental question: 
\textbf{Can we calibrate off-the-shelf PRMs and leverage them to enhance inference-time scaling methods?}

\begin{SCfigure}[1.0][t]  %
  \centering
\includegraphics[width=0.47\textwidth]{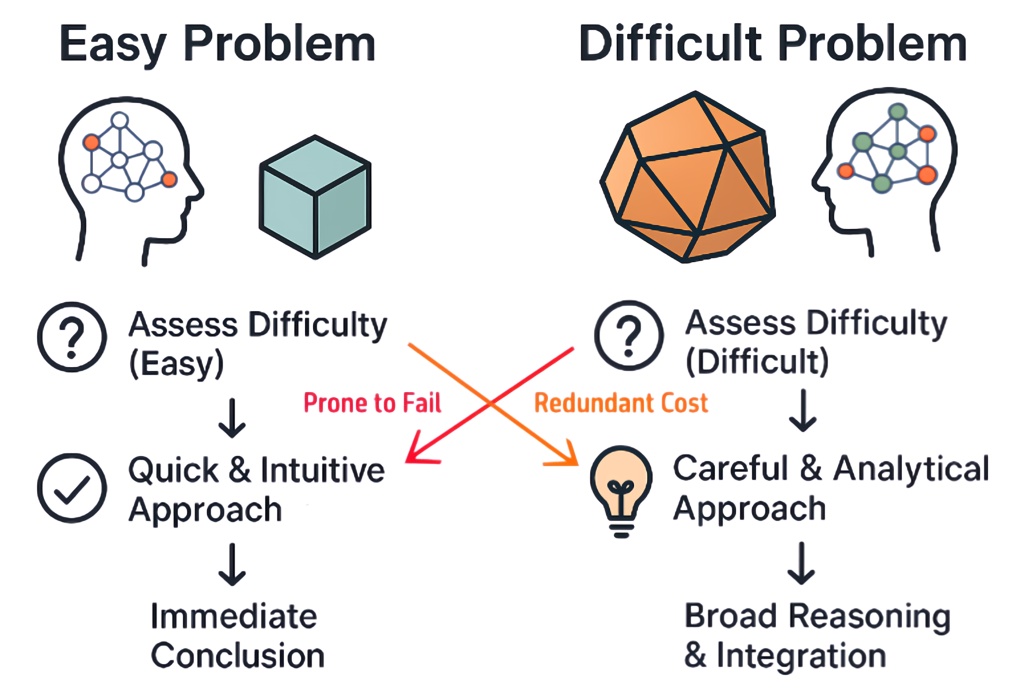}
  \caption{%
  Why should problem-solving strategies adapt to task difficulty? Simple problems can often be solved through quick, intuitive solutions, whereas harder ones require extended, deliberate reasoning steps. Accordingly, it is desirable for LLMs to adjust their compute usage adaptively. Fixed-budget methods like best-of-$N$ are inefficient in this context: they either waste computation on easy tasks or fail to allocate sufficient resources to solve harder ones accurately. %
  }
  \label{fig:abstract}
\end{SCfigure}

In this paper, we introduce a pipeline for improving the calibration of any off-the-shelf PRMs, allowing their scores to more accurately reflect the \emph{uncertainty} that a given LLM will reach the correct answer. We begin by showing that standard calibration techniques, such as temperature scaling \citep{guo2017calibration}, are inadequate for calibrating PRMs.
To address this, we introduce a quantile regression \citep{koenker2005quantile} based scheme that reduces the calibration error of PRM scores.
The resulting model predicts success probabilities, together with confidence bounds, for every query and intermediate reasoning step.
Our method collects intermediate reasoning steps generated by a given LLM and completes them by independently rolling out full responses. This Monte Carlo rollout provides an empirical estimate of the true success probability for each prefix trajectory. Given this data, we fine-tune the PRM after replacing its prediction head with a quantile regression model. The resulting model produces reliable success probability estimates when evaluating intermediate steps in multi-step reasoning tasks.

We use these now-calibrated PRMs to enable \emph{instance-adaptive scaling (IAS)}, a framework where test-time compute is adaptively scaled based on the calibrated PRM's assessment of difficulty/likelihood of success. Beyond saving compute, this concept has the potential to reduce overall latency for the user waiting for a response. 
The intuition mirrors how humans solve problems: we spend more time on difficult questions and allocate more effort to promising solution paths (see Figure~\ref{fig:abstract}). Similarly, IAS guides LLMs to invest more computation in challenging or high-potential reasoning paths. We focus on two widely used inference-time scaling methods: best-of-$N$ (BoN) and beam search (BS). We prove that, for any given query and reasoning trajectory, the calibrated reward score can estimate the minimum number of additional trajectories required to produce at least one correct answer.
In short, our key contributions are:
\vspace{-0.5em}
\begin{itemize}[leftmargin=10pt]
    \setlength\itemsep{0.25em}
    \item We address the overestimation of success probabilities in existing PRMs by introducing a calibration approach using quantile regression.
    \item We propose an instance-adaptive inference-time scaling strategy that dynamically allocates compute budgets by leveraging calibrated PRMs.
    \item We empirically validate our methods by: 1) demonstrating low calibration error, and 2) enhancing best-of-$N$ and beam search through instance-adaptive scaling, leading to improved efficiency.
\end{itemize}

\subsection{Related Work}

\textbf{Inference-time scaling.}
LLM performance can be improved through multi-stage reasoning, which breaks problems into simpler subtasks \citep{wei2022chain, kojima2022large, abbe2025far}. Although more computationally expensive, techniques such as sampling multiple outputs and aggregating them substantially improve performance and robustness \citep{yao2023tree,snell2024scaling}. Methods such as majority voting, verifier-based selection \citep{cobbe2021training,lightman2024lets,brown2024large}, and reward-based Monte Carlo search \citep{uesato2022solving,want2024shepherd,guan2025rstar} show strong gains in complex reasoning tasks. Yet, most focus on performance gains, with limited attention to its reliability and cost-effective utilization of computational resources.

\textbf{Process reward models.}
Process reward models are specialized tools used in inference-time scaling that provide a step-by-step verification of LLMs' reasoning process, rather than evaluating only the final outcome \cite{prmlessons}.
Typically, PRMs are trained on datasets labeled at each reasoning step, either by humans or automated approaches like Monte Carlo rollouts and LLM-based judges. For example, Qwen-PRM \citep{qwen2.5} utilizes consensus-filtered labels from combined human and automated sources, achieving state-of-the-art accuracy \citep{song2025prmbench}, while Shepherd-PRM \citep{want2024shepherd} effectively relies on purely automated labeling despite lower precision in detecting errors.

\textbf{Adaptive sampling for efficient state estimation.}
\citet{puri2025probabilistic} propose viewing LLM reasoning as a probabilistic state estimation problem. Note that the classical state estimation literature has explored methods for dynamically adjusting sample sizes based on state uncertainty \citep{fox2001kld,fox2003adapting,soto2005self,elvira2016adapting}.
Similarly, information-driven adaptive strategies have been studied in the context of planning algorithms \citep{hollinger2014sampling,choi2015potential,lee2018efficient}. 
However, to the best of our knowledge, prior work has not proposed instance-adaptive sampling strategies within the context of LLM inference-time scaling.

For an extended discussion of related work, see Appendix~\ref{app:related}.

\section{Preliminaries and Notation: Inference-Time Scaling with PRMs}
We recall how reward models are used by standard inference-time scaling methods. 
Let $\mathsf{LLM}$ denote a language model that generates responses, and let $\mathsf{PRM}$ denote a process reward model used to evaluate the quality of the outputs produced by $\mathsf{LLM}$. Given a query $q$, $\mathsf{LLM}$ generates a multi-step reasoning trajectory $\mathbf{x} = (x_1, x_2, \ldots, x_T)$, where $x_i$ represents the $i$-th reasoning step and $T$ is the total length of the trajectory. We denote the prefix of the trajectory up to step $t$ as $\mathbf{x}_{0:t}$ (in particular, $x_0:=$`` '' and $x_{0:0}$ is an empty reasoning sequence). Below, we recap two standard PRM-based inference-time scaling methods.

\textbf{Best-of-\texorpdfstring{$N$}{N} (BoN)} \citep{brown2024large}:
Given a query $q$, we first use $\mathsf{LLM}$ to generate $N$ complete trajectories 
\begin{align*}
    \mathbf{x}^{(i)} = (x^{(i)}_1, x^{(i)}_2, \ldots, x^{(i)}_{T^{(i)}}) \sim \mathsf{LLM}(q), \quad \text{for}~i = 1,\ldots,N.
\end{align*}
Then we apply $\mathsf{PRM}$ to assign a score to each trajectory: $r^{(i)} = \mathsf{PRM}\left(q, \mathbf{x}^{(i)}\right)$. The final output is the trajectory with the highest reward $\mathbf{x}^{(i^*)}$
where $i^* = \arg\max_{i} r^{(i)}$.

\textbf{Beam Search (BS)} \citep{snell2024scaling}:
BS iteratively builds reasoning trajectories by alternating between generation and evaluation, rather than generating full sequences in a single pass. At step $t$, each of the $K$ surviving partial trajectories $\mathbf{x}_{0:t-1}^{(i)}$ is extended by one reasoning step. Specifically, for each trajectory,  $\mathsf{LLM}$ generates $M$ candidate next steps: $\mathbf{x}_{t}^{(i,j)} \sim \mathsf{LLM}\left(q, \mathbf{x}_{0:t-1}^{(i)}\right)$ where $j = 1,\ldots,M$ denotes different continuations for the $i$-th trajectory. Then  $\mathsf{PRM}$ evaluates each of the $N = K \times M$ extended trajectories: $r_t^{(i,j)} = \mathsf{PRM}\left(q, \mathbf{x}_{0:t}^{(i,j)}\right)$. Finally, among all $N$ candidates, only the top $K$ partial trajectories with the highest scores are kept for the next step. This pruning step ensures the search remains computationally tractable while focusing on promising reasoning paths. %

\begin{figure}[t!]
 \centering
 \begin{subfigure}[b]{0.24\textwidth}
     \centering
     \includegraphics[width=\textwidth]{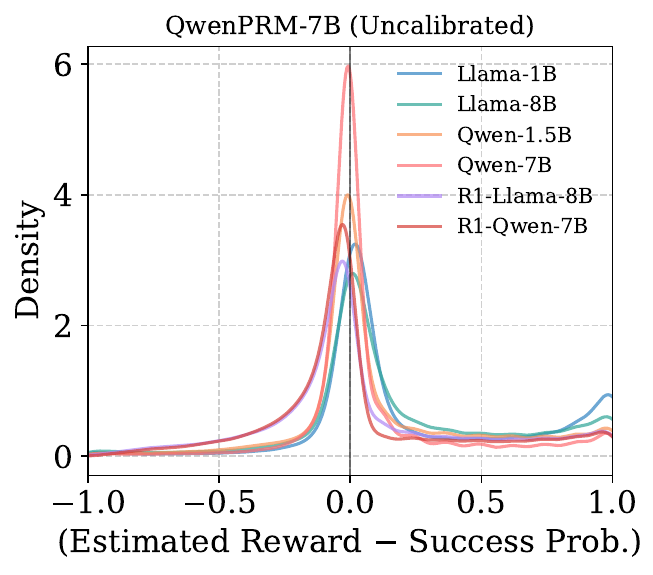}
     \caption{Qwen-PRM (\texttt{MATH500})}
     \label{fig:err_qwen_uncal_math500}
 \end{subfigure}
 \begin{subfigure}[b]{0.24\textwidth}
     \centering
     \includegraphics[width=\textwidth]{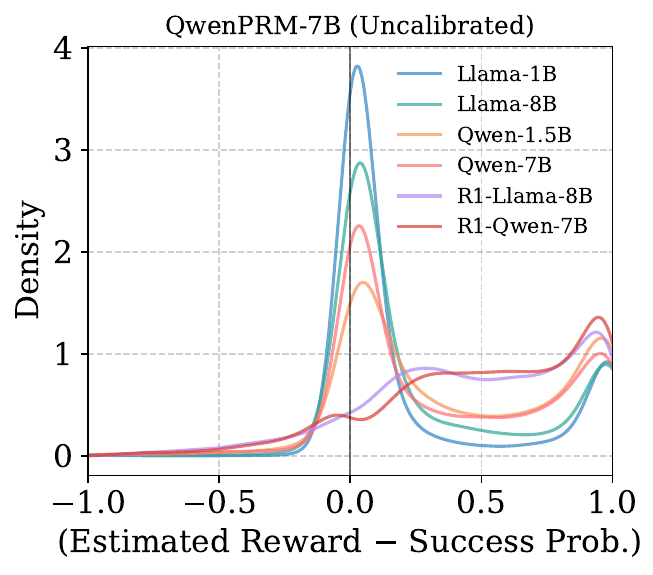}
     \caption{Qwen-PRM (\texttt{AIMEs})}
     \label{fig:err_qwen_uncal_aime}
 \end{subfigure}
 \begin{subfigure}[b]{0.24\textwidth}
     \centering
     \includegraphics[width=\textwidth]{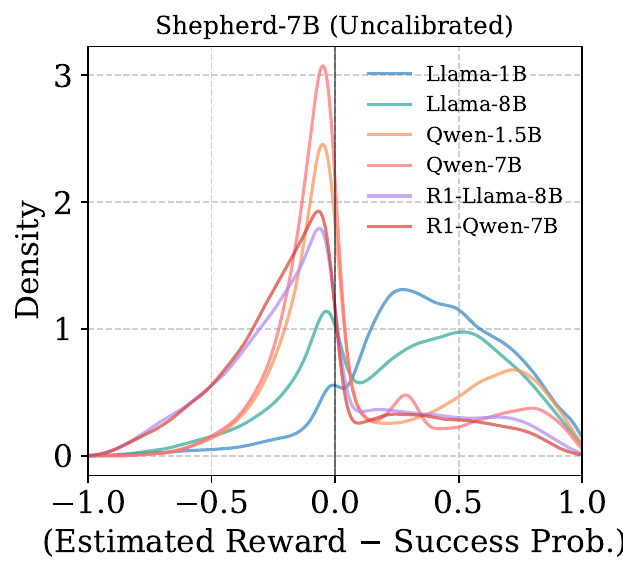}
     \caption{Shepherd (\texttt{MATH500})}
     \label{fig:err_shepherd_uncal_math500}
 \end{subfigure}
 \begin{subfigure}[b]{0.24\textwidth}
     \centering
     \includegraphics[width=\textwidth]{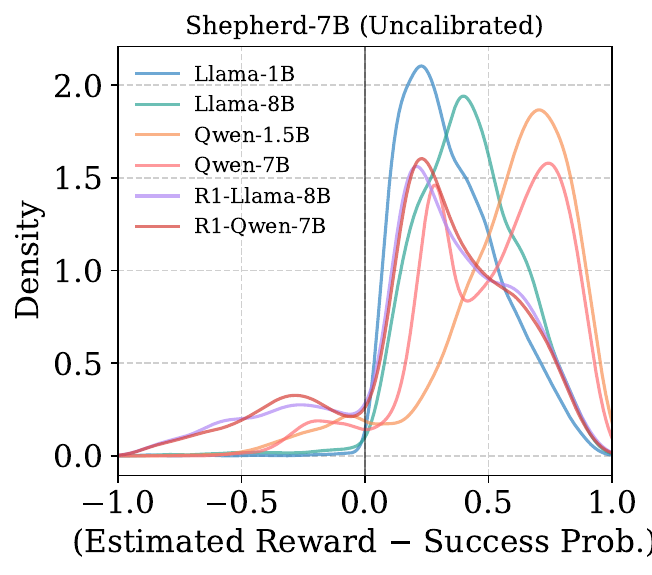}
     \caption{Shepherd (\texttt{AIMEs})}
     \label{fig:err_shepherd_uncal_aime}
 \end{subfigure}
 \caption{Histogram of signed deviations between PRM rewards (i.e., estimated success probabilities) and ground-truth success probabilities. Ground truth is estimated via Monte Carlo sampling: for each question and partial reasoning step prefix, we use a given LLM to generate multiple completions and compute the empirical success rate. We evaluate Qwen-PRM-7B and Shepherd-PRM-7B on the \texttt{MATH500} (in-distribution) and \texttt{AIME24-25} (out-of-distribution) datasets. Positive deviations indicate overestimation. 
 PRMs consistently overestimate success probabilities, as evidenced by the distribution skewing right and/or peaking near $1.0$. This miscalibration is particularly pronounced for weaker completion models and more challenging, out-of-distribution problems.}
\label{fig:cal_qwen_shepherd}
\end{figure}

\section{Calibration and Process Rewards}

PRMs are typically trained to quantify the quality of (partial) model outputs, i.e., their correctness, how well they advance the reasoning towards the solution to the query, and/or how well they align with human preferences. We highlight an additional, underexplored perspective: when normalized, PRM scores can be interpreted as an estimate of the probability that continuing from the current output will ultimately lead to a desirable outcome, such as producing a correct final answer in a multi-step reasoning task \citep{want2024shepherd}. We observe that state-of-the-art PRMs often overestimate the chance of eventual success when interpreted in this way. %
We define the success probability as follows. 
\begin{revision}
\begin{defn} \label{def:success}
Given a query $q$ and a partial trajectory $\mathbf{x}_{0:t}$ generated so far, the \emph{success probability} $p$ is the probability that autoregressively continuing this trajectory with a stochastic language model (i.e., non-zero temperature), $\mathsf{LLM}$, will ultimately yield a correct final answer.
\begin{align*}
p \;&\triangleq\;
\Pr\bigl(x_{t+1:T} \text{ generated by } \mathsf{LLM}\text{ yields a correct answer}\,\bigm|\,q,\mathbf{x}_{0:t}\bigr)
\end{align*}
Here, we denote the index of a reasoning step by $t$ and the partial reasoning trajectory $\mathbf{x}_{0:t}$ is the sequence of outputs generated by the model from step $1$ to step $t \le T$.

For the base case at $t=0$, the partial trajectory $\mathbf{x}_{0:0}$ is an empty sequence, representing the state before any reasoning steps have been generated, where only the query $q$ is given. In this case, $p$ essentially estimates the difficulty of the query for $\mathsf{LLM}$. 
\end{defn}
\end{revision}
Since we have partial outputs, there is inherent uncertainty of future success (i.e., $p \in (0,1)$), unlike single-turn settings where an output can definitively be evaluated for correctness. Note that this is a significant difference between our work and typical prior work on calibrated certainty estimates for LLMs, and is crucial to our quantile-based approach being feasible. That said, calibration metrics like Brier score and expected calibration error (ECE) still apply.

\begin{revision}

\textbf{The inherent calibration challenge of off-the-shelf PRMs.}
Off-the-shelf PRMs lack a guarantee of calibration because their training is inherently dependent on a specific policy model. They learn a reward function from reasoning paths (rollouts) generated by one particular model (e.g., Qwen-Math-Instruct), tying their accuracy to that model's unique generative process and capabilities.

The problem originates from the autoregressive nature of language models, where sequence generation is policy-dependent. The probability of future tokens, $\pi_\theta(x_{t+1:T} | x_{0:t})$, is conditioned on the model's parameters, $\theta$. Consequently, \emph{a PRM is only calibrated to the statistical patterns of the policy it was trained with}. Using it with a different policy introduces a distributional mismatch, breaking any calibration guarantee.

For instance, a PRM trained on a highly capable 72B model will overestimate the success probability when paired with a weaker 1B model. The PRM is calibrated to the stronger model's performance and cannot account for the weaker model's higher tendency to make errors on complex logical steps. This mismatch results in inflated and unreliable confidence scores, as presented in Figure~\ref{fig:cal_qwen_shepherd}.
\end{revision}

\textbf{Why calibrate PRMs?} 
While we will show that a well-calibrated PRM is desirable for several important tasks, calibration is not required for the canonical task of ranking for BoN or BS, since relative ordering is all that matters. %
Specifically, any monotonic transformation of the true success probability preserves the ranking but changes calibration.
As noted in the introduction, however, calibrated probabilities are essential for interpretability, systematic safety monitoring, and efficient budget allocation. In particular, calibration enables \emph{instance‑adaptive scaling} (IAS; discussed further in Section~\ref{sec:ias}), where we dynamically adjust the number of trajectories to keep based on the estimated likelihood of success. In this setting, accurate probability estimates are essential for balancing efficiency and performance: a well-calibrated PRM allows us to reduce computations and maintain output quality.
Furthermore, using a PRM rather than an outcome reward model is critical here, as we need to estimate success probabilities before complete solutions exist. For beam search, we evaluate each partial reasoning prefix to guide search decisions. For best-of-$N$, we evaluate only the question itself (with no prefix) to determine the optimal sampling budget.

\subsection{Calibration Data Collection Methodology} \label{sec:calibration_data}

\begin{revision}
Now, we present a calibration-via-finetuning approach to mitigate the aforementioned issues.
To calibrate a PRM for a given LLM, we construct a validation dataset through a systematic multi-stage generation and evaluation process.
Please see details in Appendix~\ref{app:cal_method}.

Our validation set is defined as $\cup_{q \in \mathcal{Q}_{\mathrm{val}}} \{\mathbf{x}^{(q, 1)}, \mathbf{x}^{(q, 2)}, \dots, \mathbf{x}^{(q, N_{\mathrm{val}})}\}$, where $\mathcal{Q}_{\mathrm{val}}$ represents a curated set of validation questions; in our experiments, we sample 500 random questions from \texttt{MATH} training split. For notational simplicity, we omit the superscript $q$ in the subsequent discussion, with the understanding that all operations are performed independently for each question in $\mathcal{Q}_{\mathrm{val}}$.

\textbf{Stage 1: Initial Trajectory Generation.} 
For each validation question $q \in \mathcal{Q}_{\mathrm{val}}$, we generate $N_{\mathrm{val}} = 8$ independent reasoning trajectories using the target LLM. This yields a set of complete reasoning chains: $\{\mathbf{x}^{(1)}, \mathbf{x}^{(2)}, \dots, \mathbf{x}^{(N_{\mathrm{val}})}\}$. These trajectories represent diverse solution paths the model naturally explores for the given question.

\textbf{Stage 2: Prefix Extraction and Monte Carlo Rollouts.} 
For each reasoning trajectory $\mathbf{x}^{(i)}=x_{0:T^{(i)}}^{(i)}$ generated in Stage 1, we consider all possible prefix trajectories $x_{0:t}^{(i)}$, where $t \in \{0, 1, \dots, T^{(i)}\}$ ranges from the initial step to the final step of the trajectory. For each prefix $x_{0:t}^{(i)}$, we perform a Monte Carlo estimation by generating $N_{\mathrm{MC}} = 8$ additional follow-up trajectories. These follow-up trajectories are conditioned on both the original question $q$ and the specific prefix $x_{0:t}^{(i)}$, allowing us to estimate the probability of eventual success from that intermediate reasoning state.

\textbf{Stage 3: Success Probability Estimation.} 
For each prefix trajectory $x_{0:t}^{(i)}$, we evaluate all $N_{\mathrm{MC}}$ follow-up trajectories to determine correctness. Let $Z^{(i,t)}$ denote the count of correct completions among these follow-ups. We then compute the empirical success probability: $\tilde{p}^{(i,t)} = Z^{(i,t)} / N_{\mathrm{MC}}$, which serves as our ground-truth label for calibrating the PRM's predicted rewards.

Our calibration datasets, consisting of $\bigl(q, x_{0:t}^{(q, i)}, \tilde{p}^{(q, i,t)}\bigr)$ triplets (question, prefix, and empirical success probability), are publicly available for various reasoning LLMs.\footnote{ \url{https://huggingface.co/datasets/young-j-park/prm_calibration}}

\end{revision}

\subsection{Uncertainty-Aware Calibration through Quantile Regression}
\begin{revision}
We seek to fine-tune PRMs to align their predictions with success rates $p^{(i,t)}$ given the input $q$ and generated steps so far $x_{0:t}^{(i)}$. 
However, predicting success probabilities is inherently uncertain: calibration data is non-exhaustive and model capacity is finite. As with any machine learning model trained via empirical risk minimization (ERM), PRMs make loss-minimizing predictions that approximate the \emph{conditional mean} of the success probability.\footnote{Exactly the conditional mean for MSE loss and approximately so for cross-entropy loss when the posterior variance is low.} 

While the conditional mean is a reasonable point estimate, it presents a critical problem: for any given test case, there is roughly a 50\% chance the true success probability falls below this estimate. This means standard PRM architectures will systematically \emph{overestimate} success probability for half of all instances. For applications requiring conservative estimation---such as instance-adaptive scaling (IAS), where we allocate computational budget based on estimated difficulty---overestimation is particularly problematic. If we overestimate $p^{(i,t)}$, we allocate insufficient samples, leading to high failure rates on problems that genuinely require more computation.

Instead of predicting the conditional mean, we propose predicting a \emph{lower quantile} of the posterior distribution over $p^{(i,t)}$. By targeting, say, the 10th percentile, we ensure that for at least 90\% of test cases, our estimate is below or equal to the true success probability. This provides a conservative \emph{lower bound} that protects against underallocation of compute while still enabling meaningful instance-adaptive decisions.

To achieve this, we modify the PRM architecture to perform quantile regression. Specifically, we expand the output dimension of the prediction head to generate multiple predictions---one for each target quantile level $\beta_n$ (e.g., 10\%, 50\%, and 90\%)---rather than a single reward value. Details of this architectural modification and initialization are in Appendix~\ref{app:cal_ft}.

We then fine-tune the model using a weighted quantile loss (wQL) \citep{koenker1978regression}:
\begin{equation*}
\mathsf{wQL}(\hat{r}, \tilde{p}) \triangleq \frac{1}{N_q} \sum_{n=1}^{N_q}  \left[ \beta_n \cdot \max\left(0,\; \tilde{p} - \hat{r}^{(\beta_n)}\right) + (1 - \beta_n) \cdot \max\left(0,\; \hat{r}^{(\beta_n)} - \tilde{p}\right) \right],
\end{equation*}
where $N_q$ is the number of quantiles, $\hat{r}^{(\beta_n)}$ is the modified PRM's prediction for the $\beta_n$-quantile, and $\tilde{p}$ is the empirical success rate. This asymmetric loss penalizes overestimation more heavily for lower quantiles and underestimation more heavily for higher quantiles, encouraging the model to learn the distinct quantile levels. Importantly, training requires only point estimates $\tilde{p}$ from Monte Carlo sampling—we do not need to estimate the full distribution of $p$.
\end{revision}

\section{Instance-Adaptive Inference-Time Scaling} \label{sec:ias}

Existing inference-time scaling methods typically allocate a fixed sampling budget $N$---for example, by generating $N$ complete reasoning trajectories in BoN. However, this approach can be inefficient: for challenging tasks, $N$ samples may be insufficient to produce a correct answer, while for easier ones, it may waste computation. Just as humans allocate more effort to harder problems and less to simpler ones, it is desirable for LLMs to adjust their compute usage adaptively. Note that in the present work, we (perhaps erring on the side of optimism) assign the maximum budget to the hardest problems, but in principle, we could also choose to say ``I don't know'' or route to a more capable model if the probability of success is too low (difficulty too high).

In this section, we explore how to allocate the sampling budget dynamically based on the likelihood that an intermediate reasoning path will ultimately yield the correct answer, which we use as a proxy for the difficulty of the query. We begin with the following proposition, which characterizes the theoretical minimum number of samples needed on a per-instance basis:

\begin{defn}[]
\label{def:minN}
Fix a question–prefix pair and let $p\in[0,1]$ be the success probability (see Definition~\ref{def:success}), conditioned on the current generation, that a single continued trajectory sampled from the language model is correct. For a target probability $C\in(0,1)$ we define the \emph{independent‑sampling sample‑complexity}
\begin{equation*}
N^\star(p,C)\;\triangleq\;\min\Bigl\{n\in\mathbb{N}:\;\Pr\bigl(\text{at least one out of $n$ trajectories is correct}\bigr)\ge C \Bigr\}.
\end{equation*}
\end{defn}

\begin{prop} \label{prop:nis}
For every $p\in(0,1)$ and $C\in(0,1)$, 
\begin{equation*}
\mathrm{set} \quad N_{\mathrm{IAS}}(p,C) \triangleq \frac{\log(1-C)}{\log(1-p)} \; , \quad \mathrm{then} \quad   \Bigl\lceil\, N_{\mathrm{IAS}}(p,C) \Bigr\rceil \geq N^\star(p, C) .
\end{equation*}
In other words, if $\mathsf{PRM}$ could perfectly distinguish correct from incorrect reasoning, then selecting the best trajectory among $ N_{\mathrm{IAS}}(p,C) $ samples (i.e., ``best-of-$N_{\mathrm{IAS}}$'') guarantees an average accuracy of (at least) $C$ for questions whose per-trajectory success probability is $p$.
\end{prop}

See Appendix~\ref{app:succes_prob_proof} for a proof.
The key takeaway is that the required sample size $N_{\mathrm{IAS}}$ scales inversely with $\log(1 - p)$. Intuitively, for a fixed target accuracy, ``easier'' questions (with larger $p$) require significantly fewer trajectories, offering compute savings. Thus, a well-calibrated PRM enables adaptive sampling by estimating $p$ and then using it to guide how many trajectories to generate. 

As noted previously, \emph{PRMs frequently overestimate LLMs' success probabilities---particularly for weaker models or on hard, out-of-distribution queries} (Figure~\ref{fig:cal_qwen_shepherd}). 
Relying on these inflated scores leads to suboptimal inference-time scaling by selecting an excessively small $N_\mathrm{IAS}$, whereas calibration mitigates these pitfalls and enables principled, cost-effective decision making.

\subsection{IAS: Calibrated Reward-based Instance-Adaptive Scaling}

Now, we propose a framework that can dynamically scale inference-time computes using an instance-adaptive scaling (IAS) strategy with calibrated PRMs. Theoretical justifications are in Appendix~\ref{app:ias_bs}.

\begin{revision}
\textbf{BoN+IAS.} 
Rather than drawing a fixed sample size $N$ for best-of-$N$ decoding, our IAS framework adaptively determines the number of trajectories to generate based on the estimated difficulty of each problem. Specifically, we compute the minimum number of samples $N_{\mathrm{IAS}}$ needed to achieve the target correctness level $C$ given the PRM's estimated success probability $\hat{r}^{(\beta)}$:
\begin{equation*}
    N_{\mathrm{IAS}} 
    \triangleq
    \min\{\lceil N_{\mathrm{IAS}}(\hat{r}^{(\beta)},C) \rceil \;,\; N_{\max}\},
\end{equation*}
where $N_{\max}$ is the maximum budget constraint. For problems with high estimated success probability, $N_{\mathrm{IAS}}$ will be small, while challenging problems receive larger budgets up to $N_{\max}$.

\textbf{BS+IAS-of-$M$ (with fixed $K$).}
In beam search with a fixed beam width $K$, we can adaptively determine how many continuations to sample per prefix at each step. Suppose at a given beam-search step we have $K$ candidate prefixes. Let $\hat{r}^{(\beta)}{\mathrm{min}}$ denote the minimum calibrated success probability among these $K$ prefixes. To ensure at least one correct completion across all $K$ prefixes with confidence level $C$, IAS computes the number of samples per prefix as: 
\begin{equation*}
  M_{\mathrm{IAS}} 
  \triangleq 
  \min\Bigl\{
      \Bigl\lceil 
        \tfrac{N_{\mathrm{IAS}}\!\bigl(\hat{r}^{(\beta)}_{\mathrm{min}},C\bigr)}
              {\,K\,} 
      \Bigr\rceil 
      \;,\; 
      M_{\max}
  \Bigr\} .
\end{equation*}
where $M_{\max}$ is the maximum number of expansions allowed per prefix. By using the most pessimistic estimate $\hat{r}^{(\beta)}_{\mathrm{min}}$, we ensure sufficient sampling even for the most challenging prefix in the beam.

\textbf{BS+IAS-of-$K$ (with fixed $M$).}
Conversely, when the number of expansions per prefix $M$ is fixed, we can adaptively determine the beam width itself. After generating $M$ continuations for each of $K$ candidate prefixes and ranking them by their calibrated rewards $\hat{r}^{(\beta)}$, IAS determines how many top-ranked prefixes to retain:
\begin{equation*}
  K_{\mathrm{IAS}} \triangleq \min \left( \{K_{\max}\} \cup \{k \mid N_{\mathrm{IAS}}(\hat{r}_k^{(\beta)},C) \le k \times M, \quad 1 \le k \le K_{\max}\} \right)
\end{equation*}
where $\hat{r}_k^{(\beta)}$ is the calibrated reward of the $k$-th best prefix and $K_{\max}$ is the maximum allowed beam width. This ensures that the total budget $K_{\mathrm{IAS}} \times M \;\ge\; N_{\mathrm{IAS}}\!\bigl(\hat{r}_{K_{\mathrm{IAS}}}^{(\beta)},C\bigr)$, maintaining the target confidence level $C$ that at least one prefix will lead to a correct answer.
\end{revision}

\textbf{Selecting $\beta$ involves a trade-off:} prioritizing efficiency could suggest using the median or a higher quantile, whereas selecting a lower quantile (e.g., 10\%) follows our motivation of ensuring that a specified probability of success is achieved.
We can formalize this latter point as follows, using the framework of conformal prediction as applied to quantile regression in \cite{romano2019conformalized}. 
\begin{theorem} \label{thm:quantile}
    Set $N = \infty$. Let $\hat{r}^{(\beta)}$ be the prediction of the $\beta$th quantile, and suppose we have held out an $n$-sample validation set $\mathcal{V}_n$. Then, on test input $X_{n+1}$,
    \begin{equation*}
    P(\text{success best-of-}N_{\mathrm{IAS}}(\hat{r}^{(\beta)},C)  | X_{n+1}) \geq C(1-\tilde{\beta})
    \end{equation*}
    where $\tilde{\beta} \geq \beta$ depends on $\mathcal{V}_n$ (see Appendix~\ref{app:thm1_proof} for its specification).
\end{theorem}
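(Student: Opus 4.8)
The plan is to decompose the claimed bound into two independent ingredients—a deterministic monotonicity fact that converts ``success of best-of-$N_{\mathrm{IAS}}$'' into the coverage event $\{p\ge \hat r^{(\beta)}\}$, and a distribution-free coverage guarantee for that event supplied by conformalized quantile regression—and then recombine them via the law of total probability.

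First I would establish the monotonicity reduction. Because $N=\infty$, the truncation in the definition of $N_{\mathrm{IAS}}$ is inactive, so we always draw exactly $\lceil N_{\mathrm{IAS}}(\hat r^{(\beta)},C)\rceil$ trajectories. Since $N_{\mathrm{IAS}}(p,C)=\log(1-C)/\log(1-p)$ is decreasing in $p$ while the per-instance success probability $1-(1-p)^n$ is increasing in $n$, on the event $\{p\ge \hat r^{(\beta)}\}$ we get $\lceil N_{\mathrm{IAS}}(\hat r^{(\beta)},C)\rceil \ge \lceil N_{\mathrm{IAS}}(p,C)\rceil \ge N^\star(p,C)$, so by Proposition~\ref{prop:nis} the conditional success probability over trajectory sampling is at least $C$. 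Hence the success indicator dominates $C\cdot\mathbbm{1}\{p\ge \hat r^{(\beta)}\}$ after taking conditional expectation over the sampling randomness.

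Next I would supply the coverage bound $\Pr(p\ge \hat r^{(\beta)})\ge 1-\tilde\beta$. Treating $\mathcal V_n$ together with the test point as exchangeable, I would form the one-sided conformity scores $E_i = \hat r^{(\beta)}(X_i) - p_i$ and invoke the split-conformal/CQR guarantee \citep{romano2019conformalized}: the event $\{p_{n+1}\ge \hat r^{(\beta)}(X_{n+1})\}$ coincides with $\{E_{n+1}\le 0\}$, whose probability is controlled by the rank of $E_{n+1}$ among $E_1,\dots,E_{n+1}$. Defining $\tilde\beta$ as the conformal miscoverage level at which the finite-sample-corrected empirical quantile of $\{E_i\}$ crosses zero—equivalently, one minus the empirical coverage of $\hat r^{(\beta)}$ on $\mathcal V_n$—yields $\Pr(p_{n+1}\ge \hat r^{(\beta)}(X_{n+1}))\ge 1-\tilde\beta$, with the dependence on $\mathcal V_n$ inherited from the empirical quantile. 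Since the paper's PRMs systematically overestimate $p$, the raw $\beta$-quantile head tends to predict too high, so its realized coverage is at most the nominal $1-\beta$; this is exactly what delivers $\tilde\beta\ge\beta$. Combining, by the tower property,
\begin{equation*}
\Pr(\text{success}) \;=\; \mathbb{E}\bigl[\,1-(1-p)^{\lceil N_{\mathrm{IAS}}(\hat r^{(\beta)},C)\rceil}\bigr] \;\ge\; C\cdot\Pr\bigl(p\ge \hat r^{(\beta)}\bigr) \;\ge\; C(1-\tilde\beta).
\end{equation*}

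The main obstacle I anticipate is the conditional-versus-marginal subtlety: conformal prediction delivers coverage that is marginal over the test draw (and, conditioned on $\mathcal V_n$, only approximately the nominal level), whereas the statement is written as $\Pr(\cdot\mid X_{n+1})$. I would therefore state the guarantee as an average over test inputs with $\tilde\beta$ fixed by the realized $\mathcal V_n$, and note that a genuinely input-conditional, distribution-free bound is unattainable. The honest bookkeeping of the finite-sample correction inside $\tilde\beta$—and in particular justifying $\tilde\beta\ge\beta$ rather than merely $\tilde\beta\approx\beta$, which hinges on the empirical overprediction of the quantile head—is where the real work lies.
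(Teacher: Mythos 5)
Your proposal is correct and follows essentially the same route as the paper's proof: a one-sided conformalized-quantile-regression coverage guarantee for the event $\{p \ge \hat r^{(\beta)}\}$, combined with the monotonicity of $N_{\mathrm{IAS}}(\cdot,C)$ and Proposition~\ref{prop:nis}, then multiplied together by independence to give $C(1-\tilde\beta)$. The only cosmetic difference is how $\tilde\beta$ is pinned down---you define it directly via the empirical coverage of the raw quantile head on the calibration split, while the paper absorbs the conformal correction into a shift $\hat q^{(\beta)} = \hat r^{(\beta)} + \delta$ with $\delta \ge Q_{1-\tilde\beta}(E,\mathcal I_2)$---and your explicit flagging of the marginal-versus-conditional coverage caveat is, if anything, more careful than the paper's treatment.
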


\section{Numerical Experiments}

We evaluate our method on two mathematical-reasoning benchmarks---\texttt{MATH500} \citep{hendrycksmath2021} and \texttt{AIME24-25} (i.e., \texttt{AIME2024} and \texttt{AIME2025})---using six LLMs: 
Llama-3.2-1B \& 3.1-8B-Instruct \citep{touvron2023llama}, 
Qwen2.5-Math-1.5B \& 7B-Instruct \citep{qwen2.5}, 
DeepSeek-R1-Distill-Llama \& Qwen-8B \citep{deepseekai2025}. %

We use Qwen2.5-Math-PRM-7B \citep{prmlessons} as the primary PRM throughout the main manuscript, unless otherwise specified, as it was the top-performing open-source small-sized PRM in PRMBench \citep{song2025prmbench}. 
We present experimental details and additional results for ReasonEval-7B \citep{xia2024evaluating} and Math-Shepherd-Mistral-7B \citep{want2024shepherd} PRMs, along with comprehensive analyses in Appendices~\ref{app:setup} and \ref{app:results}, respectively.

\subsection{Fine-Tuning PRMs for Better Calibration}
\label{sec:exp_calibration}

We evaluate calibration errors of off-the-shelf PRMs and then show how our fine-tuning strategy reduces these errors. To start with, we construct a calibration dataset by randomly sampling 500 questions from the \texttt{MATH} training set. We assess calibration performance using standard metrics: Brier score \citep{brier1950verification}, positive Brier score (i.e., the mean square of overestimation error, $\max\{\hat{y} - y, 0\}$), AdaptiveCE \citep{nixon2019measuring}, ECE \citep{naeini2015obtaining}, and AverageCE \citep{neumann2018relaxed} between ground-truth Monte Carlo success rates and the PRMs' (median) predictions.

\newcommand{\best}[1]{#1}
 
\begin{table}[t]
\centering
\caption{Calibration error before and after applying our calibration-via-finetuning method. We evaluate on the \texttt{MATH500} (in-distribution) and \texttt{AIME24-25} (out-of-distribution) datasets, using various LLMs to generate responses. Four calibration error metrics are reported (lower is better); the worst and best values for each dataset are highlighted in red and blue, respectively. Results show that our method consistently improves PRM calibration across different models and datasets.}
\vspace{0.5em}
\label{tab:cal_qwen}
\renewcommand{\arraystretch}{0.75}
\resizebox{0.95\textwidth}{!}{
\begin{tabular}{ll|rr|rr|rr|rr|rr}
\toprule
Dataset & Model & \multicolumn{2}{c}{Brier} & \multicolumn{2}{c}{PosBrier} & \multicolumn{2}{c}{AdaptiveCE} & \multicolumn{2}{c}{ECE} & \multicolumn{2}{c}{AverageCE} \\
    &       & Uncal. & Calib. & Uncal. & Calib. & Uncal. & Calib. & Uncal. & Calib. & Uncal. & Calib. \\
\midrule
\multirow{6}{*}{\texttt{MATH500}} & Llama-3.2-1B & \cellcolor[RGB]{255,213,224}0.2414 & \best{\cellcolor[RGB]{145,241,239}0.0692} & \cellcolor[RGB]{255,213,224}0.2226 & \best{\cellcolor[RGB]{145,241,239}0.0472} & \cellcolor[RGB]{255,213,224}0.2830 & \best{\cellcolor[RGB]{145,241,239}0.0811} & \cellcolor[RGB]{255,213,224}0.2791 & \best{\cellcolor[RGB]{154,242,240}0.0942} & \cellcolor[RGB]{255,213,224}0.3130 & \best{\cellcolor[RGB]{236,252,252}0.1840} \\
 & Llama-3.1-8B & \cellcolor[RGB]{255,230,237}0.2045 & \best{\cellcolor[RGB]{211,249,248}0.1210} & \cellcolor[RGB]{255,234,240}0.1771 & \best{\cellcolor[RGB]{210,249,248}0.0994} & \cellcolor[RGB]{255,221,230}0.2625 & \best{\cellcolor[RGB]{239,252,252}0.1674} & \cellcolor[RGB]{255,231,237}0.2368 & \best{\cellcolor[RGB]{219,250,249}0.1515} & \cellcolor[RGB]{255,254,254}0.2048 & \best{\cellcolor[RGB]{240,253,252}0.1876} \\
 & Qwen-2.5-1.5B & \cellcolor[RGB]{253,254,254}0.1541 & \best{\cellcolor[RGB]{218,250,249}0.1271} & \cellcolor[RGB]{249,254,254}0.1305 & \best{\cellcolor[RGB]{220,250,249}0.1072} & \cellcolor[RGB]{255,240,244}0.2176 & \best{\cellcolor[RGB]{224,251,250}0.1545} & \cellcolor[RGB]{224,251,250}0.1554 & \best{\cellcolor[RGB]{208,249,248}0.1414} & \best{\cellcolor[RGB]{175,244,243}0.1229} & \cellcolor[RGB]{198,247,246}0.1462 \\
 & Qwen-2.5-7B & \cellcolor[RGB]{185,246,244}0.1008 & \best{\cellcolor[RGB]{161,243,241}0.0818} & \cellcolor[RGB]{191,246,245}0.0846 & \best{\cellcolor[RGB]{151,241,240}0.0527} & \cellcolor[RGB]{215,249,249}0.1459 & \best{\cellcolor[RGB]{165,243,241}0.0999} & \cellcolor[RGB]{159,242,241}0.0981 & \best{\cellcolor[RGB]{145,241,239}0.0864} & \best{\cellcolor[RGB]{145,241,239}0.0920} & \cellcolor[RGB]{175,244,243}0.1227 \\
 & R1-Llama-8B & \cellcolor[RGB]{255,252,252}0.1614 & \best{\cellcolor[RGB]{169,244,242}0.0888} & \cellcolor[RGB]{228,251,251}0.1140 & \best{\cellcolor[RGB]{154,242,240}0.0546} & \cellcolor[RGB]{255,226,233}0.2505 & \best{\cellcolor[RGB]{179,245,244}0.1128} & \cellcolor[RGB]{196,247,246}0.1311 & \best{\cellcolor[RGB]{150,241,239}0.0905} & \cellcolor[RGB]{204,248,247}0.1517 & \best{\cellcolor[RGB]{162,243,241}0.1094} \\
 & R1-Qwen-7B & \cellcolor[RGB]{245,253,253}0.1480 & \best{\cellcolor[RGB]{162,243,241}0.0828} & \cellcolor[RGB]{218,250,249}0.1056 & \best{\cellcolor[RGB]{158,242,240}0.0578} & \cellcolor[RGB]{255,230,236}0.2413 & \best{\cellcolor[RGB]{172,244,243}0.1066} & \cellcolor[RGB]{172,244,242}0.1095 & \best{\cellcolor[RGB]{145,241,239}0.0857} & \cellcolor[RGB]{248,254,254}0.1957 & \best{\cellcolor[RGB]{165,243,242}0.1130} \\
 \midrule
 \multirow{6}{*}{\texttt{AIME24-25}} & Llama-3.2-1B & \cellcolor[RGB]{246,253,253}0.1936 & \best{\cellcolor[RGB]{145,241,239}0.0029} & \cellcolor[RGB]{249,254,254}0.1918 & \best{\cellcolor[RGB]{145,241,239}0.0005} & \cellcolor[RGB]{235,252,252}0.2364 & \best{\cellcolor[RGB]{145,241,239}0.0108} & \cellcolor[RGB]{246,253,253}0.2364 & \best{\cellcolor[RGB]{145,241,239}0.0041} & \cellcolor[RGB]{255,213,224}0.4921 & \best{\cellcolor[RGB]{147,241,239}0.1306} \\
 & Llama-3.1-8B & \cellcolor[RGB]{255,251,252}0.2274 & \best{\cellcolor[RGB]{165,243,241}0.0414} & \cellcolor[RGB]{255,250,251}0.2227 & \best{\cellcolor[RGB]{164,243,241}0.0354} & \cellcolor[RGB]{254,254,254}0.2839 & \best{\cellcolor[RGB]{175,244,243}0.0862} & \cellcolor[RGB]{255,250,251}0.2839 & \best{\cellcolor[RGB]{177,245,243}0.0782} & \cellcolor[RGB]{255,220,229}0.4580 & \best{\cellcolor[RGB]{255,234,239}0.3988} \\
 & Qwen-2.5-1.5B & \cellcolor[RGB]{255,230,236}0.3302 & \best{\cellcolor[RGB]{182,245,244}0.0727} & \cellcolor[RGB]{255,229,236}0.3220 & \best{\cellcolor[RGB]{173,244,243}0.0528} & \cellcolor[RGB]{255,237,241}0.4007 & \best{\cellcolor[RGB]{183,245,244}0.1054} & \cellcolor[RGB]{255,230,237}0.4007 & \best{\cellcolor[RGB]{181,245,244}0.0865} & \cellcolor[RGB]{255,235,240}0.3936 & \best{\cellcolor[RGB]{242,253,253}0.2889} \\
 & Qwen-2.5-7B & \cellcolor[RGB]{255,238,242}0.2894 & \best{\cellcolor[RGB]{182,245,244}0.0721} & \cellcolor[RGB]{255,238,242}0.2820 & \best{\cellcolor[RGB]{180,245,244}0.0657} & \cellcolor[RGB]{255,244,246}0.3547 & \best{\cellcolor[RGB]{180,245,244}0.0982} & \cellcolor[RGB]{255,238,242}0.3547 & \best{\cellcolor[RGB]{186,246,244}0.0982} & \cellcolor[RGB]{255,236,241}0.3892 & \best{\cellcolor[RGB]{239,252,252}0.2829} \\
 & R1-Llama-8B & \cellcolor[RGB]{255,219,228}0.3846 & \best{\cellcolor[RGB]{185,246,244}0.0782} & \cellcolor[RGB]{255,219,228}0.3712 & \best{\cellcolor[RGB]{160,243,241}0.0296} & \cellcolor[RGB]{255,217,227}0.5259 & \best{\cellcolor[RGB]{191,246,245}0.1275} & \cellcolor[RGB]{255,218,227}0.4764 & \best{\cellcolor[RGB]{176,245,243}0.0761} & \cellcolor[RGB]{255,242,246}0.3614 & \best{\cellcolor[RGB]{163,243,241}0.1566} \\
 & R1-Qwen-7B & \cellcolor[RGB]{255,213,224}0.4144 & \best{\cellcolor[RGB]{180,245,244}0.0694} & \cellcolor[RGB]{255,213,224}0.4018 & \best{\cellcolor[RGB]{163,243,241}0.0338} & \cellcolor[RGB]{255,213,224}0.5575 & \best{\cellcolor[RGB]{176,245,243}0.0898} & \cellcolor[RGB]{255,213,224}0.5078 & \best{\cellcolor[RGB]{173,244,243}0.0689} & \cellcolor[RGB]{255,241,245}0.3680 & \best{\cellcolor[RGB]{145,241,239}0.1261} \\
\bottomrule
\end{tabular}
}
\end{table}

\begin{figure}[t!]
     \centering
     \begin{subfigure}[b]{0.24\textwidth}
         \centering
         \includegraphics[width=\textwidth]{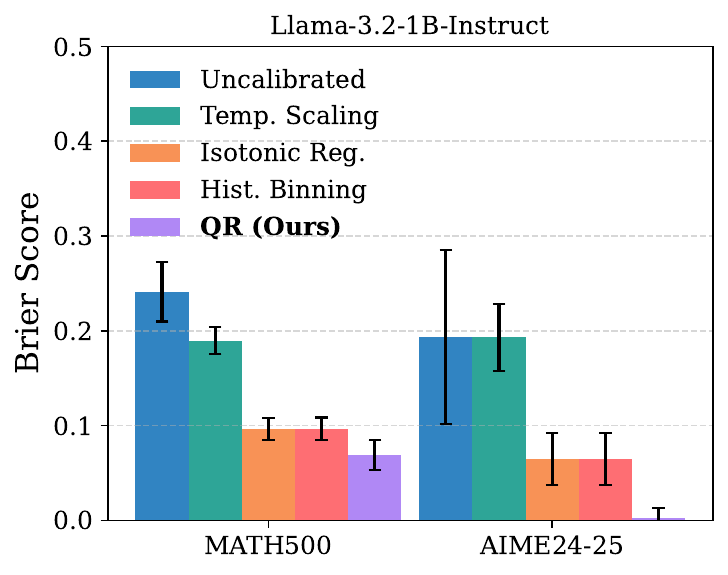}
     \end{subfigure}
     \begin{subfigure}[b]{0.24\textwidth}
         \centering
         \includegraphics[width=\textwidth]{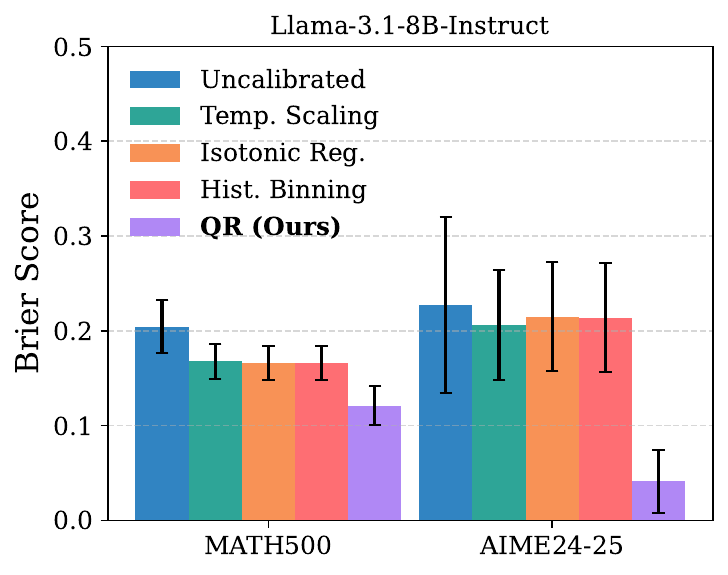}
     \end{subfigure}
     \begin{subfigure}[b]{0.24\textwidth}
         \centering
         \includegraphics[width=\textwidth]{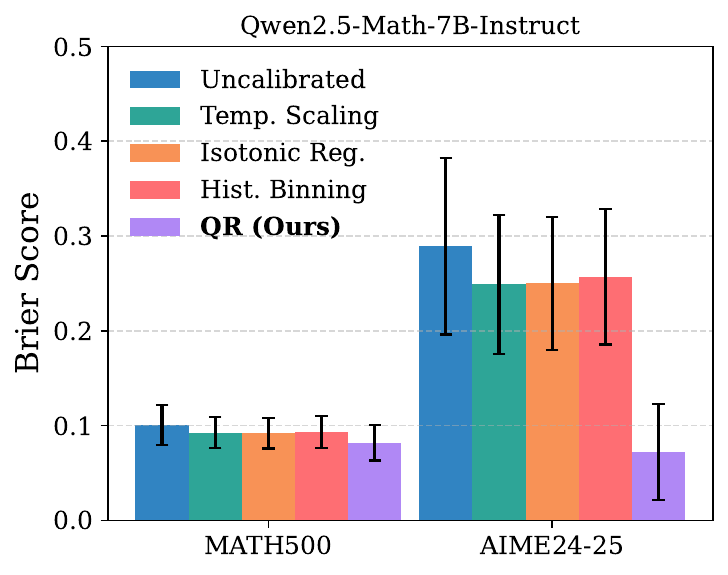}
     \end{subfigure}
     \begin{subfigure}[b]{0.24\textwidth}
         \centering
         \includegraphics[width=\textwidth]{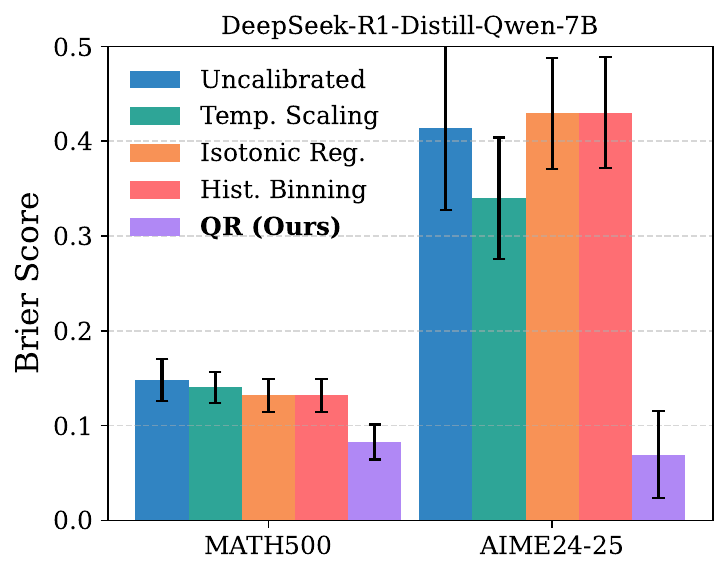}
     \end{subfigure}
     \caption{Comparison of our calibration method with popular techniques—temperature scaling, isotonic regression, and histogram binning—on \texttt{MATH500} and \texttt{AIME24-25}. As shown, our quantile regression (QR) method reduces calibration error more effectively than these baselines.
     }
    \label{fig:posthoc_cal_qwen}
\end{figure}

\textbf{Off-the-shelf PRMs are overconfident.}
We examine the histogram of the deviations of the estimated reward from the true success probability ($\hat{r}^{(i,t)} - \tilde{p}^{(i,t)}$).
As shown in Figure~\ref{fig:cal_qwen_shepherd}, the error densities are skewed to the right, with minimal mass on the left, indicating consistent overestimation. This effect is further amplified when weaker LLMs are used to generate responses, or when evaluated on more challenging datasets such as \texttt{AIME24-25}. 

\textbf{Significance of calibration.}
Table~\ref{tab:cal_qwen} presents numerical evidence supporting our earlier claim that PRMs frequently suffer from poor calibration and tend to overestimate.
In contrast, our proposed calibration approach effectively mitigates these calibration issues. Additionally, the approach substantially improves results on out-of-distribution datasets, underscoring its effectiveness across diverse conditions.
This finding further indicates that calibrated PRMs are better at differentiating more challenging questions or reasoning tasks from those with higher certainty.

\textbf{Calibration baselines.} We evaluate several standard calibration methods for calibrating PRMs, including temperature scaling \citep{guo2017calibration}, isotonic regression \citep{zadrozny2002transforming}, and histogram binning \citep{zadrozny2001obtaining}. 
Detailed descriptions of these methods are provided in the Appendix~\ref{app:cal_baselines}.
Figure~\ref{fig:posthoc_cal_qwen} compares our calibration-via-finetuning method with these baseline techniques. 
Since baseline methods correct calibration by uniformly shifting or rescaling the output probabilities (or logits) of PRMs, they are effective only when every instance's prediction is consistently overestimated or underestimated.
For instance, weaker models like Llama-3.2-1B often benefit from simple methods that reduce overall overconfidence. However, these techniques typically fail to calibrate well on out-of-distribution tasks, such as the \texttt{AIME24-25} dataset. In contrast, our method leverages contextual information---such as question categories and the position of the current reasoning step---to achieve robust calibration across diverse models and dataset distributions.

\subsection{Calibrated Reward Enables Instance-Adaptive Scaling}

We demonstrate IAS by applying it to two simple yet widely used inference scaling techniques: best-of-$N$ and $N$-beam search on steps. %

\textbf{Instance-adaptive BoN.}
We evaluate the BoN framework ($N_{\max}=64$) using both the uncalibrated and calibrated PRMs, comparing their performance against our proposed IAS strategy (BoN+IAS). We adopt a conservative setting with $C=0.99$ and $\beta=0.1$ (see Appendix~\ref{app:ablation} for an ablation study). 
For a fair comparison, we use the calibrated PRM to determine $N_{\text{IAS}}$ and the original PRM for ranking after adaptive sampling. 
We report the accuracy of each method and the relative computational cost associated with IAS, the average number of samples per question normalized by $N_{\max}$. 

Table~\ref{tab:bon_qwen} shows IAS, when combined with calibrated PRMs, \emph{achieves significant computational savings while maintaining performance close to fixed-$N$ strategies}. 
More importantly, as previously discussed, the original PRMs often severely overestimate success probabilities, especially for smaller models (e.g., Llama-3.2-1B) or challenging tasks (e.g., \texttt{AIME24-25}). As a result, IAS reduces budgets too aggressively, leading to performance sacrifice. In contrast, our calibrated PRMs allow IAS to reduce compute more conservatively, preserving accuracy while saving computational cost.

Our findings also show that to effectively reduce computational cost, adaptive sampling requires a well-calibrated PRM. Uncalibrated PRMs, which often overestimate success, lead the BoN+IAS framework to reduce computational budgets too aggressively, resulting in degraded performance.

\textbf{Instance-adaptive Beam Search.}
To further verify the efficacy of the IAS for intermediate reasoning steps in addition to the initial stage, we test IAS-of-$K$ (IASoK) and IAS-of-$M$ (IASoM), with a beam search setup ($N=64$, $M=8$, $K=8$). We use the same setting of $C=0.99$ and $\beta=0.1$.

As shown in Table~\ref{tab:bs_qwen}, our IAS strategy maintains accuracy while reducing the budget up to about 75\%.
The IASoM variant tends to be more conservative than ISAoK, yet often achieves higher accuracy with a comparable budget usage.

\begin{table}[tb]
\centering
\caption{Comparison between the best-of-$N$ method using a fixed-$N$ strategy (BoN) and our proposed instance-adaptive sampling strategy (BoN+IAS). 
We report both accuracy and relative computational cost (budget), measured by the average number of samples per question normalized by $N=64$: $\mathrm{Budget}= N_{\mathrm{IAS}}/N$. 
Relative improvements over Pass@1 accuracy are highlighted in light blue.
Our IAS with calibrated PRMs achieves substantial compute savings without significant performance loss. Crucially, the effectiveness of IAS depends strongly on PRM calibration quality: uncalibrated PRMs tend to overestimate success probabilities, causing overly optimistic downsampling. Moreover, uncalibrated PRM’s model-independent design prevents it from offering adaptive strategies for different LLMs, resulting in further degradation for weaker Llama models.}
\label{tab:bon_qwen}
\vspace{0.5em}
\renewcommand{\arraystretch}{0.75}
\resizebox{1.0\textwidth}{!}{%
\begin{tabular}{ll|rr|rr|rr}
\toprule
&  & \multicolumn{2}{c}{Baselines} & \multicolumn{2}{c}{w/ Uncal. PRM} & \multicolumn{2}{c}{w/ Calib. PRM}\\
\cmidrule(lr){3-4} \cmidrule(lr){5-6} \cmidrule(lr){7-8}
Dataset & Model & Pass@1 & BoN & BoN+IAS & Budget  Ratio & BoN+IAS & Budget Ratio \\
\midrule
\multirow{6}{*}{\centering \texttt{MATH500}} 
 & Llama-3.2-1B    & \cellcolor[RGB]{255,255,255}0.2255 & \cellcolor[RGB]{145,241,239}0.4760 & \cellcolor[RGB]{253,254,254}0.2278 & 0.0162 & \cellcolor[RGB]{150,241,239}0.4623 & 0.6381 \\
 & Llama-3.1-8B    & \cellcolor[RGB]{255,255,255}0.4659 & \cellcolor[RGB]{176,245,243}0.6440 & \cellcolor[RGB]{254,254,254}0.4674 & 0.0162 & \cellcolor[RGB]{186,246,244}0.6223 & 0.3631 \\
 & Qwen-2.5-1.5B   & \cellcolor[RGB]{255,255,255}0.6970 & \cellcolor[RGB]{224,251,250}0.7660 & \cellcolor[RGB]{254,254,254}0.6973 & 0.0162 & \cellcolor[RGB]{230,251,251}0.7537 & 0.2461 \\
 & Qwen-2.5-7B     & \cellcolor[RGB]{255,255,255}0.7994 & \cellcolor[RGB]{230,251,251}0.8540 & \cellcolor[RGB]{255,254,254}0.7993 & 0.0162 & \cellcolor[RGB]{238,252,252}0.8368 & 0.2342 \\
 & R1-Llama-8B     & \cellcolor[RGB]{255,255,255}0.6734 & \cellcolor[RGB]{188,246,245}0.8240 & \cellcolor[RGB]{255,254,254}0.6729 & 0.0162 & \cellcolor[RGB]{197,247,246}0.8042 & 0.3439 \\
 & R1-Qwen-7B      & \cellcolor[RGB]{255,255,255}0.7556 & \cellcolor[RGB]{207,248,248}0.8640 & \cellcolor[RGB]{254,254,254}0.7569 & 0.0162 & \cellcolor[RGB]{210,249,248}0.8568 & 0.3133 \\
\midrule
\multirow{6}{*}{\centering \texttt{AIME24-25}} 
 & Llama-3.2-1B    & \cellcolor[RGB]{255,255,255}0.0042 & \cellcolor[RGB]{255,254,254}0.0000 & \cellcolor[RGB]{255,254,254}0.0040 & 0.0195 & \cellcolor[RGB]{249,254,254}0.0167 & 1.0000 \\
 & Llama-3.1-8B    & \cellcolor[RGB]{255,255,255}0.0268 & \cellcolor[RGB]{244,253,253}0.0500 & \cellcolor[RGB]{254,254,254}0.0273 & 0.0195 & \cellcolor[RGB]{252,254,254}0.0333 & 0.9685 \\
 & Qwen-2.5-1.5B   & \cellcolor[RGB]{255,255,255}0.0932 & \cellcolor[RGB]{230,251,251}0.1500 & \cellcolor[RGB]{253,254,254}0.0962 & 0.0195 & \cellcolor[RGB]{229,251,251}0.1522 & 0.9372 \\
 & Qwen-2.5-7B     & \cellcolor[RGB]{255,255,255}0.0885 & \cellcolor[RGB]{242,253,253}0.1167 & \cellcolor[RGB]{253,254,254}0.0920 & 0.0195 & \cellcolor[RGB]{211,249,248}0.1885 & 0.9099 \\
 & R1-Llama-8B     & \cellcolor[RGB]{255,255,255}0.0784 & \cellcolor[RGB]{208,249,248}0.1833 & \cellcolor[RGB]{252,254,254}0.0838 & 0.0195 & \cellcolor[RGB]{235,252,252}0.1217 & 0.9661 \\
 & R1-Qwen-7B      & \cellcolor[RGB]{255,255,255}0.1411 & \cellcolor[RGB]{199,247,246}0.2667 & \cellcolor[RGB]{254,254,254}0.1425 & 0.0195 & \cellcolor[RGB]{238,252,252}0.1795 & 0.9635 \\
\bottomrule
\end{tabular}%
}
\end{table}

\begin{table}[tb]
\centering
\caption{Comparison between the beam search (BS) method using a fixed-$N$/$M$, and our proposed adaptive sampling strategy (IASoK and IASoM) using calibrated PRMs. 
We report both accuracy and relative computational cost (budget) to the baseline, measured by the average number of LLM generations per question normalized by that of a fixed-budget BS baseline. 
Relative improvements over the Pass@1 accuracy are highlighted in light blue. 
The proposed IAS strategy yields substantial computational savings without significant performance loss.}
\label{tab:bs_qwen}
\vspace{0.5em}
\renewcommand{\arraystretch}{0.75}
\resizebox{1.0\textwidth}{!}{%
\begin{tabular}{ll|rr|rr|rr}
\toprule
&  & \multicolumn{2}{c}{Baselines} & \multicolumn{4}{c}{IAS w/ Calibrated PRM} \\
\cmidrule(lr){3-4} \cmidrule(lr){5-8}
Dataset & Model & Pass@1 & BS & BS+IASoK & Budget Ratio & BS+IASoM & Budget Ratio \\
\midrule
\multirow{6}{*}{\texttt{MATH500}} 
 & Llama-3.2-1B  & \cellcolor[RGB]{255,255,255}0.2255 & \cellcolor[RGB]{145,241,239}0.5360 & \cellcolor[RGB]{145,241,239}0.5180 & 0.8996 & \cellcolor[RGB]{145,241,239}0.5320 & 0.9891 \\
 & Llama-3.1-8B  & \cellcolor[RGB]{255,255,255}0.4659 & \cellcolor[RGB]{167,243,242}0.6640 & \cellcolor[RGB]{173,244,243}0.6500 & 0.5309 & \cellcolor[RGB]{163,243,241}0.6740 & 0.6686 \\
 & Qwen-2.5-1.5B & \cellcolor[RGB]{255,255,255}0.6970 & \cellcolor[RGB]{207,248,248}0.8060 & \cellcolor[RGB]{216,250,249}0.7840 & 0.5528 & \cellcolor[RGB]{205,248,247}0.8100 & 0.5993 \\
 & Qwen-2.5-7B   & \cellcolor[RGB]{255,255,255}0.7994 & \cellcolor[RGB]{224,251,250}0.8680 & \cellcolor[RGB]{230,251,251}0.8540 & 0.5332 & \cellcolor[RGB]{230,251,251}0.8560 & 0.5757 \\
 & R1-Llama-8B   & \cellcolor[RGB]{255,255,255}0.6734 & \cellcolor[RGB]{193,247,246}0.8140 & \cellcolor[RGB]{185,246,244}0.8320 & 0.3581 & \cellcolor[RGB]{175,244,243}0.8540 & 0.3962 \\
 & R1-Qwen-7B    & \cellcolor[RGB]{255,255,255}0.7556 & \cellcolor[RGB]{223,250,250}0.8280 & \cellcolor[RGB]{215,249,249}0.8460 & 0.3345 & \cellcolor[RGB]{202,248,247}0.8740 & 0.3652 \\
\midrule
\multirow{6}{*}{\texttt{AIME24-25}} 
 & Llama-3.2-1B  & \cellcolor[RGB]{255,255,255}0.0042 & \cellcolor[RGB]{249,254,254}0.0167 & \cellcolor[RGB]{249,254,254}0.0167 & 1.0364 & \cellcolor[RGB]{249,254,254}0.0167 & 1.0619 \\
 & Llama-3.1-8B  & \cellcolor[RGB]{255,255,255}0.0268 & \cellcolor[RGB]{252,254,254}0.0333 & \cellcolor[RGB]{237,252,252}0.0667 & 0.2620 & \cellcolor[RGB]{244,253,253}0.0500 & 0.5243 \\
 & Qwen-2.5-1.5B & \cellcolor[RGB]{255,255,255}0.0932 & \cellcolor[RGB]{237,252,252}0.1333 & \cellcolor[RGB]{237,252,252}0.1333 & 0.5027 & \cellcolor[RGB]{222,250,250}0.1667 & 0.6097 \\
 & Qwen-2.5-7B   & \cellcolor[RGB]{255,255,255}0.0885 & \cellcolor[RGB]{198,247,246}0.2167 & \cellcolor[RGB]{227,251,251}0.1500 & 0.6499 & \cellcolor[RGB]{213,249,248}0.1833 & 0.7399 \\
 & R1-Llama-8B   & \cellcolor[RGB]{255,255,255}0.0784 & \cellcolor[RGB]{230,251,251}0.1333 & \cellcolor[RGB]{201,248,247}0.2000 & 0.4210 & \cellcolor[RGB]{157,242,240}0.3000 & 0.5209 \\
 & R1-Qwen-7B    & \cellcolor[RGB]{255,255,255}0.1411 & \cellcolor[RGB]{255,253,254}0.1333 & \cellcolor[RGB]{229,251,251}0.2000 & 0.3640 & \cellcolor[RGB]{177,245,243}0.3167 & 0.4464 \\
\bottomrule
\end{tabular}%
}
\end{table}

\textbf{Why do we need an instance-adaptive scaling?}
Recall that our goal is not to introduce a new inference-time scaling method that universally outperforms existing approaches.
Instead, we aim to enable inference-time scaling to allocate compute budgets dynamically based on a model's estimated likelihood of answering correctly.
To this end, IAS adaptively determines, on a \emph{per-instance} basis, the number of samples that best balance accuracy and computational cost.

Figure~\ref{fig:acc_by_difficulty_qwen} reports the performance plot across varying question difficulty levels for different values of $N$, along with the resulting accuracy and cost of the proposed IAS strategy (see Appendix~\ref{app:ias} for full results).
As expected, enlarging $N$ improves accuracy in a nearly monotonic fashion; nevertheless, the absence of a ``sweet spot'' renders the selection of a single budget inherently difficult. 
Furthermore, accuracy declines with increasing question difficulty, implying that the sample budget required to attain a target accuracy should be uncertainty-dependent. Consequently, choosing a fixed $N$ on a convenient validation set may lead to pronounced performance deficits under distributional shift. For instance, tuning on Level 1 of the \texttt{MATH500} benchmark would suggest a small $N$ regime, but this could be highly suboptimal for harder datasets.

IAS addresses this mismatch by adapting its budget to instance success probability: it expends roughly four times fewer samples on Level 1 items than on Level 5, while allocating additional samples where they are most needed. In doing so, IAS aligns computational expenditure with uncertainty, yielding superior cost-effectiveness without pre-defining a universally optimal $N$.

\begin{figure}[htb]
     \centering
     \begin{subfigure}[b]{0.24\textwidth}
         \centering
         \includegraphics[width=\textwidth]{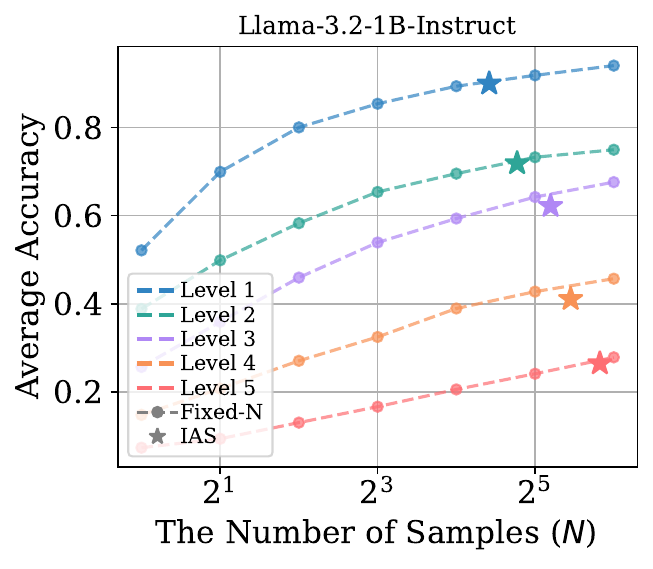}
     \end{subfigure}
     \begin{subfigure}[b]{0.24\textwidth}
         \centering
         \includegraphics[width=\textwidth]{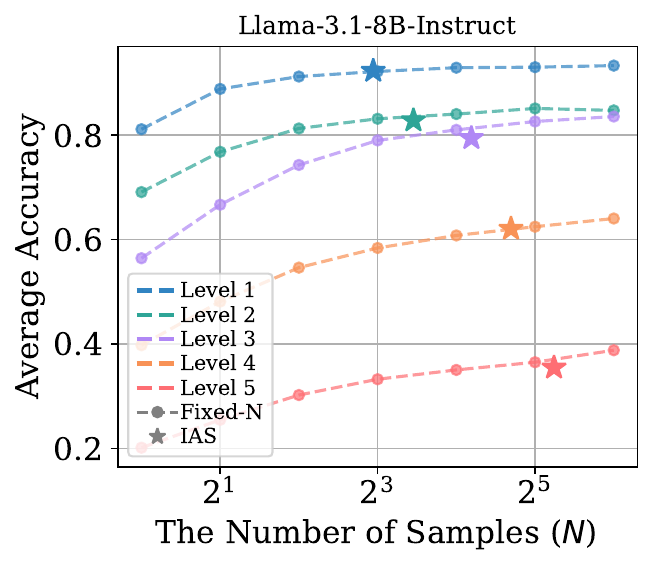}
     \end{subfigure}
     \begin{subfigure}[b]{0.24\textwidth}
         \centering
         \includegraphics[width=\textwidth]{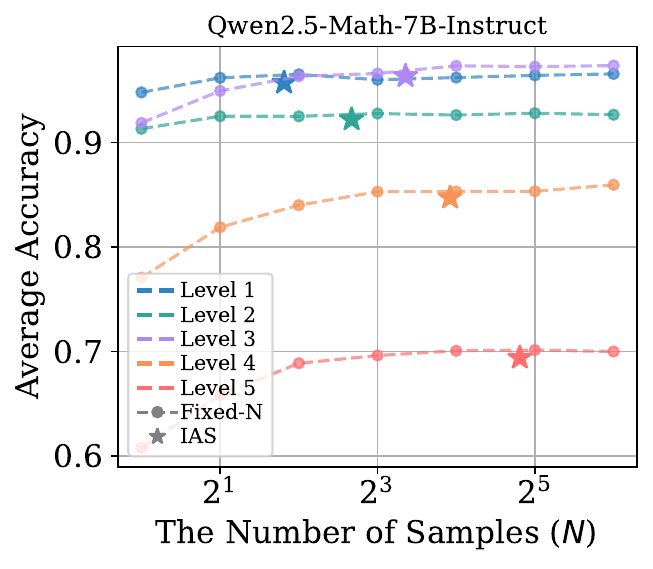}
     \end{subfigure}
     \begin{subfigure}[b]{0.24\textwidth}
         \centering
         \includegraphics[width=\textwidth]{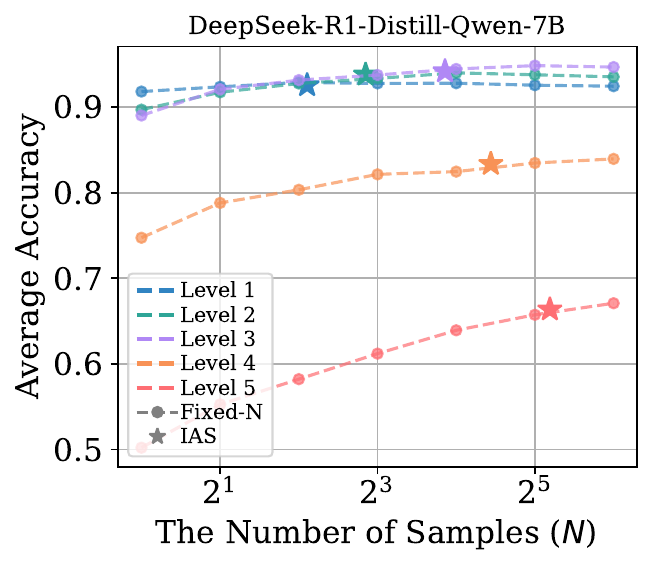}
     \end{subfigure}
     \caption{%
     We illustrate average accuracy across test points of varying difficulty levels (1: easy to 5: hard). Results from the fixed-$N$ baseline and our instance-adaptive sampling (IAS) method are shown as dashed lines and stars, respectively. %
     As shown, IAS dynamically adjusts sampling based on problem difficulty in \texttt{MATH500}, allocating more samples to harder tasks. 
     }
    \label{fig:acc_by_difficulty_qwen}
\end{figure}

\section{Conclusion and Limitations}
This paper introduces a calibration strategy that enables PRMs to produce more reliable estimates of success probability: the likelihood that a partial reasoning step (or initial question) will lead to a correct final answer when completed by a given policy LLM. Through quantile regression, our approach provides conservative lower bounds rather than overoptimistic point estimates, addressing a key limitation of standard PRMs.

We demonstrate a practical application of well-calibrated PRMs in instance-adaptive inference-time scaling (IAS), where computational budget is allocated based on estimated success probability. By providing reliable estimates, our calibrated PRMs enable more efficient resource allocation, investing more computation on challenging problems while avoiding waste on easier ones.

Future work could explore fine-tuning strategies for better generalization across datasets and quantile levels, extend this approach beyond mathematical reasoning to domains like code generation and LLM agents, and investigate IAS in settings beyond best-of-N and beam search. We hope this work establishes a foundation for more reliable reward models and cost-effective adaptive scaling.

\begin{ack}
 This work was supported in part by the MIT-IBM Watson AI Lab, the MIT-Amazon Science Hub, the MIT-Google Program for Computing Innovation, and MathWorks.
\end{ack}

\bibliography{references}
\bibliographystyle{apalike}

\appendix

\section{Extended Related Work} \label{app:related}
\paragraph{Inference-time scaling.}
Recent studies have demonstrated that the capabilities of LLMs can be significantly enhanced by employing \emph{multi-stage reasoning}, rather than directly generating answers in a single step \citep{wei2022chain, kojima2022large}. Although such methods typically require greater computational resources at inference time, performance improvements can be substantial by decomposing problems into simpler sub-tasks \citep{abbe2025far}. This strategy can be further improved by expanding the reasoning process: instead of relying solely on single-pass decoding, recent inference-time scaling techniques sample multiple candidate outputs and aggregate them for increased robustness \citep{yao2023tree,snell2024scaling}.
Those approaches---ranging from majority voting to verifier-based selection \citep{cobbe2021training,lightman2024lets,brown2024large}, as well as sophisticated Monte Carlo search algorithms \citep{guan2025rstar} employing reward models \citep{uesato2022solving,want2024shepherd}---have demonstrated significant advances, particularly in reasoning-intensive tasks such as mathematical problem-solving. Nevertheless, most existing research has emphasized performance enhancement via extensive inference-time computation, while its reliability and cost-effective utilization of computational resources remain relatively understudied.

\paragraph{Process Reward Models.}

PRMs are specialized inference-time scaling tools designed to enhance the reliability of LLMs by verifying each intermediate step of their reasoning processes, rather than evaluating only the final outcomes \cite{prmlessons}. Training PRMs typically involves step-labeled datasets, which are generated either through detailed human annotation or via automated methods like Monte Carlo rollouts---sampling multiple solution trajectories to assess correctness probabilistically---and evaluations by other large language models acting as judges \citep{qwen2.5, want2024shepherd}. Exemplifying this approach, Qwen-PRM \citep{qwen2.5} integrates a rigorous consensus-filtered labeling strategy that combines both human judgments and automated assessments, enabling it to achieve state-of-the-art accuracy in step-wise reasoning verification tasks \citep{song2025prmbench}. Alternatively, Shepherd-PRM \citep{want2024shepherd} illustrates that purely automated labeling approaches, while somewhat less precise in pinpointing individual reasoning errors, still significantly enhance overall model performance, showcasing their practicality and scalability. Recently introduced PRMs, such as ReasonEval \citep{xia2024evaluating}, extend beyond basic validity checks by incorporating redundancy evaluation—assessing if reasoning steps are unnecessarily repetitive or redundant, thus further improving robustness and effectiveness in LLM inference.

\paragraph{Adaptive sampling for efficient state estimation.}
Recent work by \citet{puri2025probabilistic} suggests that LLM reasoning processes can be framed as particle filtering---\emph{a sampling-based, probabilistic state estimation} approach. Classical state estimation literature has explored methods for dynamically adjusting sample sizes based on state uncertainty \citep{fox2001kld,fox2003adapting,soto2005self,elvira2016adapting}. The core principle behind these adaptive sampling methods is to decrease sample sizes when uncertainty is low; this reduces computational costs while maintaining accuracy, particularly important for resource-constrained environments such as mobile robotics. Similarly, information-driven adaptive strategies have been studied in contexts of planning algorithms \citep{hollinger2014sampling,choi2015potential,lee2018efficient}. However, to the best of our knowledge, this paper represents the first exploration of instance-adaptive sampling strategies within the context of LLM inference-time scaling.

We would like to remark that \citet{yu2025think} also tackles the adaptive reasoning problem, and their approach shares a similar spirit with ours. Their adaptation, however, is based on the initial success probability (i.e., question difficulty), whereas our method additionally adapts at intermediate reasoning steps (and thus is extendable to beam search).
Specifically, our approach is enabled by calibrating PRMs, which allows us to estimate success probabilities throughout the multi-step reasoning process---a direction that has not been explored before.
Furthermore, while they adapt the reasoning length, our method focuses on controlling the reasoning width (i.e., sample size), which we believe is a promising avenue to explore for integrating both approaches.

\paragraph{Reliability of LLMs.}
Although state-of-the-art inference-time scaling methods frequently rely on reward models, the reliability of these models has been relatively less explored. \citet{johnson2024experts} introduced the theoretical framework \emph{Experts Don’t Cheat}, proposing that reliable models generate predictions independent of paired hints when confident about their input; \citet{yadkori2024believe} further verified this idea extends to LLMs. Parallel research focuses on the concept of \emph{semantic entropy} (SE) \citep{kuhn2023semantic,farquhar2024detecting}, identifying hallucinations through contextual inconsistencies. \citet{ye2025uncertainty} subsequently expanded SE to assess the reliability of process reward models. While these studies have made strides in quantifying LLM reliability, they do not specifically integrate reliability assessment into inference-time scaling strategies.
Furthermore, rather than relying on PRM to estimate success probability (i.e., uncertainty), future work could explore uncertainty quantification tools (e.g., \citep{gal2016dropout,lakshminarayanan2017simple, sharma2021sketching, shelmanov2021certain, park2024quantifying, park2024identifying}).
There is also a line of calibration-oriented work on LLMs, such as \citet{shen2024thermometer}, which presents another promising avenue to explore within our framework.

\section{Omitted Proofs} \label{app:proof}
\subsection{Proof of Proposition~\ref{prop:nis}}
\label{app:succes_prob_proof}

Here, we provide a proof of Proposition~\ref{prop:nis}.

\setcounter{prop}{\numexpr\getrefnumber{prop:nis}-1\relax}

\begin{prop}
For every $p\in(0,1)$ and $C\in(0,1)$, 
\begin{equation*}
\mathrm{set} \quad N_{\mathrm{IAS}}(p,C) \triangleq \frac{\log(1-C)}{\log(1-p)} \; , \quad \mathrm{then} \quad   \Bigl\lceil\, N_{\mathrm{IAS}}(p,C) \Bigr\rceil \geq N^\star(p, C) .
\end{equation*}
In other words, if the $\mathsf{PRM}$ could perfectly distinguish correct from incorrect reasoning, then selecting the best trajectory among $ N_{\mathrm{IAS}}(p,C) $ samples (i.e., ``best-of-$N_{\mathrm{IAS}}$'') guarantees an average accuracy of (at least) $C$ for questions whose per-trajectory success probability is $p$.
\end{prop}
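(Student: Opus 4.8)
The plan is to compute the exact success probability of best-of-$n$ independent sampling as an explicit function of $n$, and then invert the target-accuracy constraint to read off the minimal admissible sample count. First I would invoke the independent-sampling model underlying Definition~\ref{def:minN}: each of the $n$ drawn trajectories is incorrect independently with probability $1-p$, so all $n$ fail simultaneously with probability $(1-p)^n$, and hence $\Pr(\text{at least one of } n \text{ is correct}) = 1 - (1-p)^n$. Because this quantity is strictly increasing in $n$, the set $\{n : 1-(1-p)^n \ge C\}$ defining $N^\star(p,C)$ is upward-closed, so $N^\star$ is precisely its least element; this monotonicity is the structural fact that makes the whole argument go through.

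Next I would translate the accuracy requirement $1 - (1-p)^n \ge C$ into the equivalent form $(1-p)^n \le 1-C$ and take logarithms of both sides. Since $p \in (0,1)$ we have $\log(1-p) < 0$, so dividing through by $\log(1-p)$ reverses the inequality and yields $n \ge \log(1-C)/\log(1-p) = N_{\mathrm{IAS}}(p,C)$. The smallest integer $n$ meeting this lower bound is, by definition of the ceiling, exactly $\lceil N_{\mathrm{IAS}}(p,C) \rceil$, and therefore $\lceil N_{\mathrm{IAS}}(p,C) \rceil = N^\star(p,C)$; in particular $\lceil N_{\mathrm{IAS}}(p,C) \rceil \ge N^\star(p,C)$, which is the asserted bound (and in fact shows it is tight). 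Combined with the monotonicity established above, this means that drawing $\lceil N_{\mathrm{IAS}} \rceil$ trajectories attains success probability at least $C$, since $\lceil N_{\mathrm{IAS}} \rceil$ meets or exceeds the minimal sufficient count. Averaging this per-instance guarantee over questions with per-trajectory success probability $p$ delivers the stated ``average accuracy at least $C$'' interpretation.

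The computation is elementary, so there is no genuine obstacle; the only point that requires care is the sign of $\log(1-p)$, which is negative on $(0,1)$ and flips the inequality upon division, and is exactly why $N_{\mathrm{IAS}}$ carries that negative denominator. I would also flag the two boundary regimes as sanity checks rather than as part of the formal argument: as $p \to 0^{+}$ the quantity $N_{\mathrm{IAS}}$ diverges (arbitrarily many samples are needed on near-hopeless prefixes), while as $p \to 1^{-}$ it decreases toward a small value, matching the intuition that confident prefixes require little additional compute.
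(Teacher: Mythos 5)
Your proof is correct and follows essentially the same route as the paper's: compute $\Pr(\text{at least one success}) = 1-(1-p)^n$, rearrange to $(1-p)^n \le 1-C$, take logarithms, and divide by the negative quantity $\log(1-p)$ to flip the inequality and obtain $n \ge N_{\mathrm{IAS}}(p,C)$. Your version is in fact slightly cleaner than the paper's (which mixes $\delta$ with $C$ and $n$ with $N$), and your observation that $\lceil N_{\mathrm{IAS}}(p,C)\rceil = N^\star(p,C)$ exactly, rather than merely $\ge$, is a valid sharpening.
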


\begin{proof}
Draw $n$ trajectories independently. The probability that none is correct is
\begin{equation*}
  \Pr[\text{no successes}] = (1-p)^N.
\end{equation*}
Therefore, the probability of at least one success is
\begin{equation*}
  \Pr[\text{at least one success}]
  = 1 - (1-p)^N.
\end{equation*}
We require
\begin{equation*}
  1 - (1-p)^N \;\ge\; \delta
  \quad\Longleftrightarrow\quad
  (1-p)^N \;\le\; 1 - \delta.
\end{equation*}
Since $\log(1-p)<0$, taking logarithms we have
\begin{equation*}
  N\,\log(1-p)\;\le\;\log(1-\delta)
  \quad\Longleftrightarrow\quad
  N\;\ge\;\frac{\log(1-\delta)}{\log(1-p)}.
\end{equation*}
Finally, setting $N = N_\mathrm{IAS} (p, C) = \frac{\log(1-C)}{\log(1-p)}$, we have

\begin{equation*}
  N_{\min}(p,C)
  = \biggl\lceil\frac{\log(1-C)}{\log(1-p)}\biggr\rceil.
\end{equation*}

Finally, if one can sample $N$ independent trajectories each succeeding with probability $p$, and a perfect reward model always picks a correct one whenever it exists, then choosing
\begin{equation*}
  N
  = \biggl\lceil\frac{\log(1-C)}{\log(1-p)}\biggr\rceil
\end{equation*}
guarantees success probability at least~$C$.

\end{proof}

We also note that a similar line of theoretical analysis has been explored in parallel by \citet{schaeffer2025large}.

\subsection{Theoretical Justifications of IAS with Beam Search} 
\label{app:ias_bs}

We now prove that the same ``independent-sampling'' bound $N_{\mathrm{IAS}}(p,C)$ underlies the \emph{instance-adaptive scaling} strategies for beam search (BS+IAS).  The key observation is that, if each sampled trajectory is independently correct with probability $p$ (conditional on the prefix), then total expansions under a given $K$ and $M$ provide $K \times M$ independent tries.

\begin{prop}[BS+IAS-of-$M$]
\label{prop:bs_ias_M}
Assume $K$ candidate prefixes, each of which
independently yields a correct continuation with probability at least $p_{\min}$.
Fix a target overall success probability $C\in(0,1)$
and choose the per-prefix expansion width
\begin{equation*}
  M\;=\;
  \Bigl\lceil\,
    \tfrac{N_{\mathrm{IAS}}(p_{\min},C)}{K}\,
  \Bigr\rceil
  .
\end{equation*}
Then expanding all $K$ prefixes by $M$ continuations (for a total of
$K\! \times\! M$ trajectories) and selecting with a \emph{perfect} reward
model guarantees probability at least $C$ that \emph{some} trajectory is
correct.
\end{prop}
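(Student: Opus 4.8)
The plan is to reduce the claim directly to Proposition~\ref{prop:nis} by observing that the two-dimensional budget $(K,M)$ collapses into a single count of independent Bernoulli trials. Concretely, expanding each of the $K$ prefixes by $M$ continuations produces $K\times M$ sampled trajectories, and by the conditional-independence hypothesis each continuation is correct, independently, with probability at least $p_{\min}$ (both across the $M$ expansions of a fixed prefix and across the $K$ prefixes). The target is therefore to show that the probability of at least one correct trajectory among these $KM$ draws is at least $C$; the perfect reward model then selects a correct trajectory whenever one exists, which immediately upgrades this to the claimed overall success guarantee.

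First I would unpack the choice of $M$. Since $M=\lceil N_{\mathrm{IAS}}(p_{\min},C)/K\rceil\ge N_{\mathrm{IAS}}(p_{\min},C)/K$, multiplying by $K$ gives $KM\ge N_{\mathrm{IAS}}(p_{\min},C)$. Next I would bound the failure probability: because the $KM$ trajectories are mutually independent and each is correct with probability at least $p_{\min}$, the probability that all of them fail is at most $(1-p_{\min})^{KM}$. Then I would invoke the exact identity already established inside the proof of Proposition~\ref{prop:nis}, namely $(1-p_{\min})^{N_{\mathrm{IAS}}(p_{\min},C)}=1-C$, together with monotonicity in the exponent: since $1-p_{\min}\in(0,1)$ and $KM\ge N_{\mathrm{IAS}}(p_{\min},C)$, we get $(1-p_{\min})^{KM}\le(1-p_{\min})^{N_{\mathrm{IAS}}(p_{\min},C)}=1-C$. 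Combining these, the probability of at least one success is $1-(1-p_{\min})^{KM}\ge 1-(1-C)=C$, and the perfect-reward-model selection step finishes the argument.

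The analytical core is thus inherited essentially verbatim from Proposition~\ref{prop:nis}; the only genuinely new bookkeeping is the folding of $K$ prefixes and $M$ expansions into $KM$ independent tries, which is exactly where the conditional-independence hypothesis is used and is the closest thing to an obstacle. The subtlety I would handle carefully is the ``at least $p_{\min}$'' wording: replacing each prefix's true per-continuation success probability by the common lower bound $p_{\min}$ can only decrease the probability of obtaining at least one correct trajectory (monotonicity in $p$), so working with $p_{\min}$ yields a conservative, worst-case guarantee that still meets the target $C$. Finally, I would flag that the ceiling in the definition of $M$ is what keeps $M$ an integer while preserving $KM\ge N_{\mathrm{IAS}}(p_{\min},C)$, so the realized success probability is $\ge C$ rather than exactly $C$.
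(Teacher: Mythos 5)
Your proposal is correct and follows essentially the same route as the paper's proof: fold the $K$ prefixes and $M$ expansions into $KM$ independent trials, note $KM\ge N_{\mathrm{IAS}}(p_{\min},C)$ from the ceiling, and bound the all-fail probability by $(1-p_{\min})^{KM}\le 1-C$. Your added care about the monotonicity in $p$ for the ``at least $p_{\min}$'' hypothesis and the explicit verification that the ceiling yields $KM\ge N_{\mathrm{IAS}}(p_{\min},C)$ are both sound and slightly more explicit than the paper's version.
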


\begin{proof}
With $K\times M$ independent trials, the probability of zero successes is
$(1-p_{\min})^{K M}$.  If $K M\ge N_{\mathrm{IAS}}(p_{\min},C)$, then
\begin{equation*}
  (1-p_{\min})^{K M}\;\le\;(1-p_{\min})^{N_{\mathrm{IAS}}(p_{\min},C)}\;\le\;1-C,
\end{equation*}
so the probability of at least one success is at least
\begin{equation*}
    1-(1-p_{\min})^{K M}\ge C.
\end{equation*}
\end{proof}

\begin{prop}[BS+IAS-of-$K$]
\label{prop:bs_ias_K}
After expanding each of $K_{\max}$ prefixes by $M$ continuations,
let $p_1\ge p_2\ge\dots\ge p_{K_{\max}}$.
Define
\begin{equation*}
  K_{\mathrm{IAS}}
  \;=\;
  \min\bigl\{
    k \mid\;
    N_{\mathrm{IAS}}\!\bigl(p_k,C\bigr)\;
    \le\;k \times M \;,\;
    1\le k\le K_{\max}
  \bigr\}.
\end{equation*}
Retaining the top $K_{\mathrm{IAS}}$ prefixes and discarding the rest
ensures overall success probability $\ge C$.
\end{prop}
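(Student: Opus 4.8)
The plan is to reduce the claim to the same independent-trials argument used for Proposition~\ref{prop:nis} and Proposition~\ref{prop:bs_ias_M}, with the single extra wrinkle that the $K_{\mathrm{IAS}}$ retained prefixes carry \emph{heterogeneous} per-trajectory success probabilities. First I would fix the realized, conditionally-independent success probabilities $p_1\ge\dots\ge p_{K_{\max}}$ and assume, exactly as in the earlier propositions, that conditional on its prefix each of the $M$ continuations of prefix $i$ is correct independently with probability $p_i$. Retaining the top $K_{\mathrm{IAS}}$ prefixes then produces exactly $K_{\mathrm{IAS}}\times M$ independent Bernoulli trials.

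Next I would bound the event that all of these trials fail. Because the kept prefixes are precisely the $K_{\mathrm{IAS}}$ with the largest success probabilities, every retained index $i\le K_{\mathrm{IAS}}$ satisfies $p_i\ge p_{K_{\mathrm{IAS}}}$, hence $1-p_i\le 1-p_{K_{\mathrm{IAS}}}$. Writing the no-success probability as a product over retained prefixes and applying this inequality termwise yields
\begin{equation*}
\prod_{i=1}^{K_{\mathrm{IAS}}}(1-p_i)^{M}\;\le\;(1-p_{K_{\mathrm{IAS}}})^{K_{\mathrm{IAS}}\times M}.
\end{equation*}
This is the step that absorbs the heterogeneity: the true failure probability is at most the one obtained if every retained prefix were as weak as the weakest kept one.

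Then I would invoke the defining inequality for $K_{\mathrm{IAS}}$, namely $N_{\mathrm{IAS}}(p_{K_{\mathrm{IAS}}},C)\le K_{\mathrm{IAS}}\times M$, together with the identity $(1-p)^{N_{\mathrm{IAS}}(p,C)}=1-C$ that follows directly from $N_{\mathrm{IAS}}(p,C)=\log(1-C)/\log(1-p)$. Since $1-p_{K_{\mathrm{IAS}}}\in(0,1)$, raising it to the larger exponent only decreases it, so
\begin{equation*}
(1-p_{K_{\mathrm{IAS}}})^{K_{\mathrm{IAS}}\times M}\;\le\;(1-p_{K_{\mathrm{IAS}}})^{N_{\mathrm{IAS}}(p_{K_{\mathrm{IAS}}},C)}\;=\;1-C,
\end{equation*}
whence the probability of at least one correct completion among the retained trajectories is at least $1-(1-C)=C$, as claimed.

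The computation above is essentially identical to Proposition~\ref{prop:bs_ias_M}; the genuinely new points, and the ones I expect to be the main obstacle, are (i) the well-definedness of the minimum defining $K_{\mathrm{IAS}}$---I would either add the standing assumption that the admissible set $\{k:\,N_{\mathrm{IAS}}(p_k,C)\le kM\}$ is nonempty (equivalently, that the total available budget $K_{\max}\,M$ is large enough), or adopt the convention $K_{\mathrm{IAS}}=K_{\max}$ when it is empty and remark that the guarantee then degrades gracefully rather than failing outright; and (ii) verifying that selecting which prefixes to retain by the reward-based ordering does not disturb the conditional independence used above. Since the pruning rule only decides which prefixes to expand, and a perfect reward model returns a correct completion whenever one exists among the $K_{\mathrm{IAS}}\times M$ draws, those draws remain independent Bernoulli trials and the argument carries through unchanged.
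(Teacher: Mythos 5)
Your proposal is correct and follows essentially the same argument as the paper: treat the $K_{\mathrm{IAS}}\times M$ retained continuations as independent trials each succeeding with probability at least $p_{K_{\mathrm{IAS}}}$, bound the all-fail probability by $(1-p_{K_{\mathrm{IAS}}})^{K_{\mathrm{IAS}} M}\le(1-p_{K_{\mathrm{IAS}}})^{N_{\mathrm{IAS}}(p_{K_{\mathrm{IAS}}},C)}\le 1-C$, and conclude. Your explicit product bound absorbing the heterogeneous $p_i$ and your remarks on well-definedness of the minimum are slightly more careful than the paper's terse version, but the route is identical.
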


\begin{proof}
For the top $k$ prefixes, there are $k \times M$ independent trials,
each succeeding with probability at least $p_k$.
If $k \times M \ge N_{\mathrm{IAS}}(p_k,C)$, then
\begin{equation*}
  (1-p_k)^{k M}\;\le\;(1-p_k)^{N_{\mathrm{IAS}}(p_k,C)}\;\le\;1-C,
\end{equation*}
so the probability of at least one success is at least
\begin{equation*}    
  1-(1-p_k)^{k M}\;\ge\;C.
\end{equation*}
\end{proof}

\subsubsection{Proof of Theorem~\ref{thm:quantile}} \label{app:thm1_proof}

We require the following theorem from \cite{romano2019conformalized}, which we slightly specialize to be one- instead of two-sided.
\begin{theorem}[Theorem 1 of \cite{romano2019conformalized}, specialized]
\label{thm:CQR}
Suppose we have an exchangeable validation set $\mathcal{V}_n = \{(X_i, Y_i)\}_{i=1}^n$ where $X_i$ are features and $Y_i$ are labels. Consider a test point $X_{n+1}$ with unobserved true outcome $Y_{n+1}$. If $(X_i, Y_i)$, $i = 1,\dots, n+1$ are exchangeable, then the prediction interval $C(X_{n+1})$ constructed by the (one-sided) Split Conformal Quantile Regression algorithm (Algorithm \ref{alg:CQR}) satisfies
\[
P(Y_{n+1} \in \mathcal{C}(X_{n+1})) \geq 1 - \alpha(\mathcal{V}_n).
\]
\end{theorem}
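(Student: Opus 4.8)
The plan is to obtain the bound by chaining two results already available in the excerpt: the finite-sample coverage guarantee for conformalized quantile regression (Theorem~\ref{thm:CQR}, due to \cite{romano2019conformalized}) and the independent-sampling complexity of Proposition~\ref{prop:nis}. I take the regression target $Y$ to be the per-trajectory success probability, so that $Y_{n+1}=p_{n+1}$ is the true success probability at the test input $X_{n+1}$, and I use the hypothesis $N=\infty$ to guarantee the adaptive budget is never truncated, so that best-of-$N_{\mathrm{IAS}}$ genuinely draws $n_0:=\lceil N_{\mathrm{IAS}}(\hat r^{(\beta)},C)\rceil$ trajectories.

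First I would specialize Theorem~\ref{thm:CQR} to the one-sided lower interval $\mathcal{C}(X_{n+1})=[\hat r^{(\beta)}(X_{n+1}),\infty)$ built from the $\beta$-quantile head calibrated on the exchangeable set $\mathcal{V}_n$. This yields a data-dependent miscoverage $\alpha(\mathcal{V}_n)$ with
\begin{equation*}
P\bigl(p_{n+1}\ge \hat r^{(\beta)}(X_{n+1})\bigr)\;\ge\;1-\alpha(\mathcal{V}_n).
\end{equation*}
I would then set $\tilde\beta:=\max\{\beta,\alpha(\mathcal{V}_n)\}$; because the final bound $C(1-\tilde\beta)$ is decreasing in $\tilde\beta$, this choice both matches the required form $\tilde\beta\ge\beta$ and only weakens the guarantee, so it costs nothing and discharges the ``for some $\tilde\beta\ge\beta$'' clause for free.

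Second, I would use monotonicity: since $N_{\mathrm{IAS}}(p,C)=\log(1-C)/\log(1-p)$ is decreasing in $p$, on the coverage event $\{p_{n+1}\ge\hat r^{(\beta)}\}$ we have $n_0\ge N_{\mathrm{IAS}}(p_{n+1},C)$, and the binomial computation of Proposition~\ref{prop:nis} gives
\begin{equation*}
1-(1-p_{n+1})^{n_0}\;\ge\;1-(1-\hat r^{(\beta)})^{n_0}\;\ge\;C.
\end{equation*}
Thus the conditional success probability given the coverage event is at least $C$. Since the fresh sampling randomness is independent of the coverage event, the law of total probability yields
\begin{equation*}
P(\text{success})\;\ge\;C\cdot P\bigl(p_{n+1}\ge\hat r^{(\beta)}\bigr)\;\ge\;C\,(1-\tilde\beta),
\end{equation*}
which is exactly the claim of Theorem~\ref{thm:quantile}.

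The main obstacle I anticipate is technical honesty in the first step: Theorem~\ref{thm:CQR} certifies coverage for the \emph{conformalized} interval, whose lower endpoint is the raw quantile prediction shifted by a validation-derived offset, whereas the statement feeds $\hat r^{(\beta)}$ directly into $N_{\mathrm{IAS}}$. I would resolve this by reading $\hat r^{(\beta)}$ as the calibrated (conformalized) lower endpoint rather than the raw head output---consistent with using the calibrated quantile to size the budget---so that the interval in Theorem~\ref{thm:CQR} is exactly $[\hat r^{(\beta)},\infty)$. A secondary caveat is that conformal prediction furnishes \emph{marginal} coverage over the exchangeable draw, so the conditioning on $X_{n+1}$ in the statement should be interpreted in the marginal sense; I would flag this standard point explicitly rather than claim the much stronger conditional guarantee, which conformal methods do not in general deliver.
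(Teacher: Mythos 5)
There is a genuine gap, and it is structural: your proposal does not prove the statement under review. The statement is Theorem~\ref{thm:CQR} itself---the marginal coverage guarantee $P(Y_{n+1} \in \mathcal{C}(X_{n+1})) \geq 1 - \alpha(\mathcal{V}_n)$ for the one-sided interval produced by Algorithm~\ref{alg:CQR}---but your very first step is to \emph{invoke} Theorem~\ref{thm:CQR} as a given, and everything that follows (the coverage event, monotonicity of $N_{\mathrm{IAS}}(\cdot,C)$ in its first argument, independence of the fresh sampling randomness, multiplying by $C$) reproduces, almost line for line, the paper's proof of the \emph{downstream} result, Theorem~\ref{thm:quantile}, in Appendix~\ref{app:thm1_proof}. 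As an argument for Theorem~\ref{thm:CQR} this is circular: the conclusion appears as a premise, and nothing in your text establishes the coverage inequality itself. (In the paper, Theorem~\ref{thm:CQR} carries no proof---it is imported from \cite{romano2019conformalized}---so the content a blind proof attempt should supply is precisely the split-conformal argument.)

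The missing idea is the exchangeability/rank argument on the calibration scores. Since $\hat{q}^{(\beta)}$ is fit only on $\mathcal{I}_1$, the scores $E_i = \hat{q}^{(\beta)}(X_i) - Y_i$ for $i \in \mathcal{I}_2$, together with the test score $E_{n+1} = \hat{q}^{(\beta)}(X_{n+1}) - Y_{n+1}$, are exchangeable conditional on $\mathcal{I}_1$; hence the rank of $E_{n+1}$ among these $|\mathcal{I}_2|+1$ values is (sub-)uniform, and because Algorithm~\ref{alg:CQR} takes the empirical quantile at the inflated level $(1-\alpha)(1+1/|\mathcal{I}_2|)$, one obtains $P\bigl(E_{n+1} \le Q_{1-\alpha}(E,\mathcal{I}_2)\bigr) \ge 1-\alpha$. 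Unfolding the definition of $E_{n+1}$, this event is exactly $Y_{n+1} \ge \hat{q}^{(\beta)}(X_{n+1}) - Q_{1-\alpha}(E,\mathcal{I}_2)$, i.e., $Y_{n+1} \in \mathcal{C}(X_{n+1})$, which is the claim. Two smaller remarks on the portion you did write (which is a sound sketch of Theorem~\ref{thm:quantile}): the paper reconciles the raw versus conformalized endpoint not by redefining $\hat{r}^{(\beta)}$ as the conformalized bound, but by writing $\hat{q}^{(\beta)} = \hat{r}^{(\beta)} + \delta$ with $\delta \ge Q_{1-\tilde{\beta}}(E,\mathcal{I}_2)$, which is where the data-dependent $\tilde{\beta} \ge \beta$ originates; and your caveat that only marginal, not conditional, coverage is available is correct and matches the paper's reading.
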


\begin{algorithm}[htb]
\caption{One-Sided Split Conformal Quantile Regression (specialized from \cite{romano2019conformalized})}
\label{alg:CQR}
\begin{algorithmic}[1]
\Require Data $(X_i, Y_i) \in \mathbb{R}^p \times \mathbb{R},\ 1 \le i \le n$; Miscoverage level $\alpha \in (0,1)$; Quantile regression algorithm $\mathcal{A}$ (e.g. the QR method described in the main text\footnote{The theorem works for any quantile estimation method.}).
\Statex
\Procedure{Split Conformal QR}{}
    \State Randomly split $\{1, \ldots, n\}$ into disjoint sets $\mathcal{I}_1$ and $\mathcal{I}_2$
    \State Fit quantile function: $\hat{q}^{(\beta)} \gets \mathcal{A}(\{(X_i, Y_i) : i \in \mathcal{I}_1\})$
    \For{each $i \in \mathcal{I}_2$}
        \State Compute $E_i= \hat{q}^{(\beta)}(X_i) - Y_i$.
    \EndFor
    \State Compute $Q_{1 - \alpha}(E, \mathcal{I}_2)$, the $(1 - \alpha)(1 + 1/|\mathcal{I}_2|)$-th empirical quantile of $\{E_i : i \in \mathcal{I}_2\}$
    \State \Return Prediction interval $\mathcal{C}(x) = \left[ \hat{q}^{(\beta)}(x) - Q_{1 - \alpha}(E, \mathcal{I}_2),\infty \right)$ for $X_{n+1} = x$.
\EndProcedure
\end{algorithmic}
\end{algorithm}

We can now prove the theorem.
\begin{proof}[Proof of Theorem \ref{thm:quantile}]

    Follow the setting of Theorem \ref{thm:quantile}, and consider that our exchangeable validation set $\mathcal{V}_n$ consists of data $(X_i, P_i)$ where $X_i$ are partial reasoning traces and $P_i$ are observed future probabilities of success.\footnote{In practice, these may be noisy, but noise can be reduced with further Monte Carlo trials.} We split $\mathcal{V}_n$ at random into $\mathcal{I}_1$ which we use for training $\hat{r}^{(\beta)}$, and $\mathcal{I}_2$. 

Using Theorem \ref{thm:CQR} and Algorithm \ref{alg:CQR}, we then have that for an (exchangeable) test point $(X_{n+1}, P_{n+1})$,
\begin{equation}
P(P_{n+1} \geq \hat{q}^{(\beta)}(X_{n+1}) - Q_{1-\alpha}(E,\mathcal{I}_2)) \geq 1-\alpha.
\label{eq:ppp}
\end{equation}

Purely for notational convenience, let's set $\hat{q}^{(\beta)} = \hat{r}^{(\beta)} + \delta$ where $\delta$ and $\alpha := \tilde{\beta}$ are constants chosen using $\mathcal{I}_1$ such that $\delta \geq Q_{1-\tilde{\beta}}(E,\mathcal{I}_2) $ with high probability. This step is not strictly necessary; we simply use it to avoid having to adjust our existing quantile regressor. This guarantees that $\hat{q}^{(\beta)}(X_{n+1}) - Q_{1-\tilde{\beta}}(E,\mathcal{I}_2) \geq \hat{r}^{(\beta)}(X_{n+1})$, hence 
\begin{equation}
P(P_{n+1} \geq \hat{r}^{(\beta)}(X_{n+1})) \geq 1-\tilde{\beta}.
\label{eq:pp}
\end{equation}

Now, consider that $P_{n+1}$ is the probability that 1 generation trial will succeed at recovering the correct answer, where here the probability is conditional on $X_{n+1}$, so independent of the above success probability. By definition, the probability that at least 1 of $N_{\mathrm{IAS}}(P_{n+1}, C)$ trials will succeed is at least $C$, and $N_{\mathrm{IAS}}(\cdot, C)$ is a monotonically decreasing function in its first argument. Therefore, by \eqref{eq:pp}, with probability at least $1-\tilde{\beta}$, 
\[
N_{\mathrm{IAS}}(\hat{r}^{(\beta)}(X_{n+1}), C)\geq N_{\mathrm{IAS}}(P_{n+1}, C),
\]
and since probability of at least 1 out of $N$ success monotonically increases with $N$, with probability at least $1-\tilde{\beta}$,
\begin{equation*}
    P(\text{success best-of-}N_{\mathrm{IAS}}(\hat{r}^{(\beta)}(X_{n+1},C)  | X_{n+1}) \geq P(\text{success best-of-}N_{\mathrm{IAS}}(P_{n+1},C)  | X_{n+1})  \geq C,
\end{equation*}
and therefore by independence, 
\[
P(\text{success best-of-}N_{\mathrm{IAS}}(\hat{r}^{(\beta)}(X_{n+1},C)  | X_{n+1}) \geq C(1-\tilde{\beta}).
\]

\end{proof}

\section{Calibration via Fine-Tuning} \label{app:calibration}

\begin{figure}[htb]  %
    \centering
    \begin{subfigure}[b]{\textwidth}
        \centering
        \includegraphics[width=0.75\textwidth]{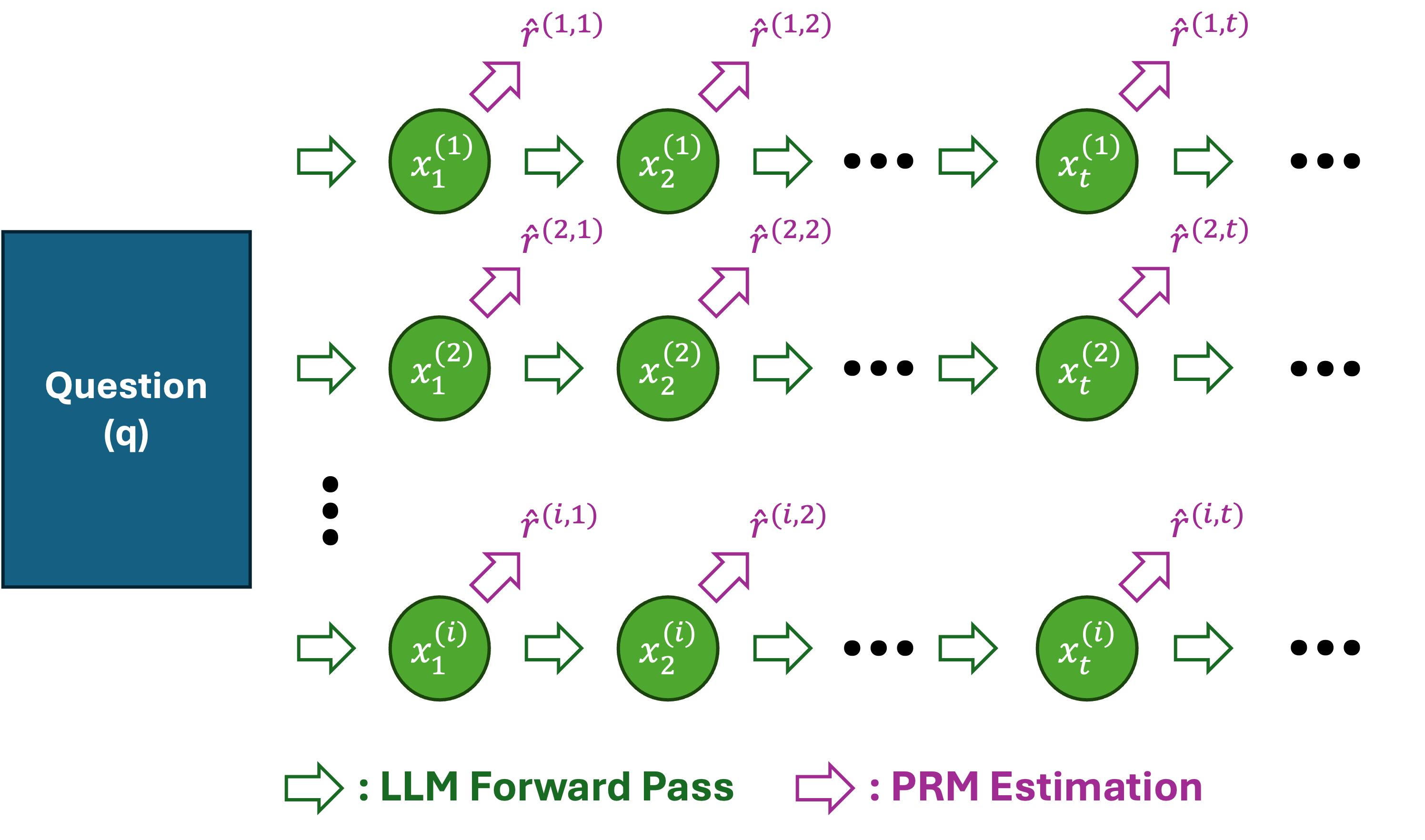}
        \caption{Forward Pass}
        \label{fig:cal_mc_forward}
    \end{subfigure}
    \begin{subfigure}[b]{\textwidth}
        \centering
        \includegraphics[width=0.9\textwidth]{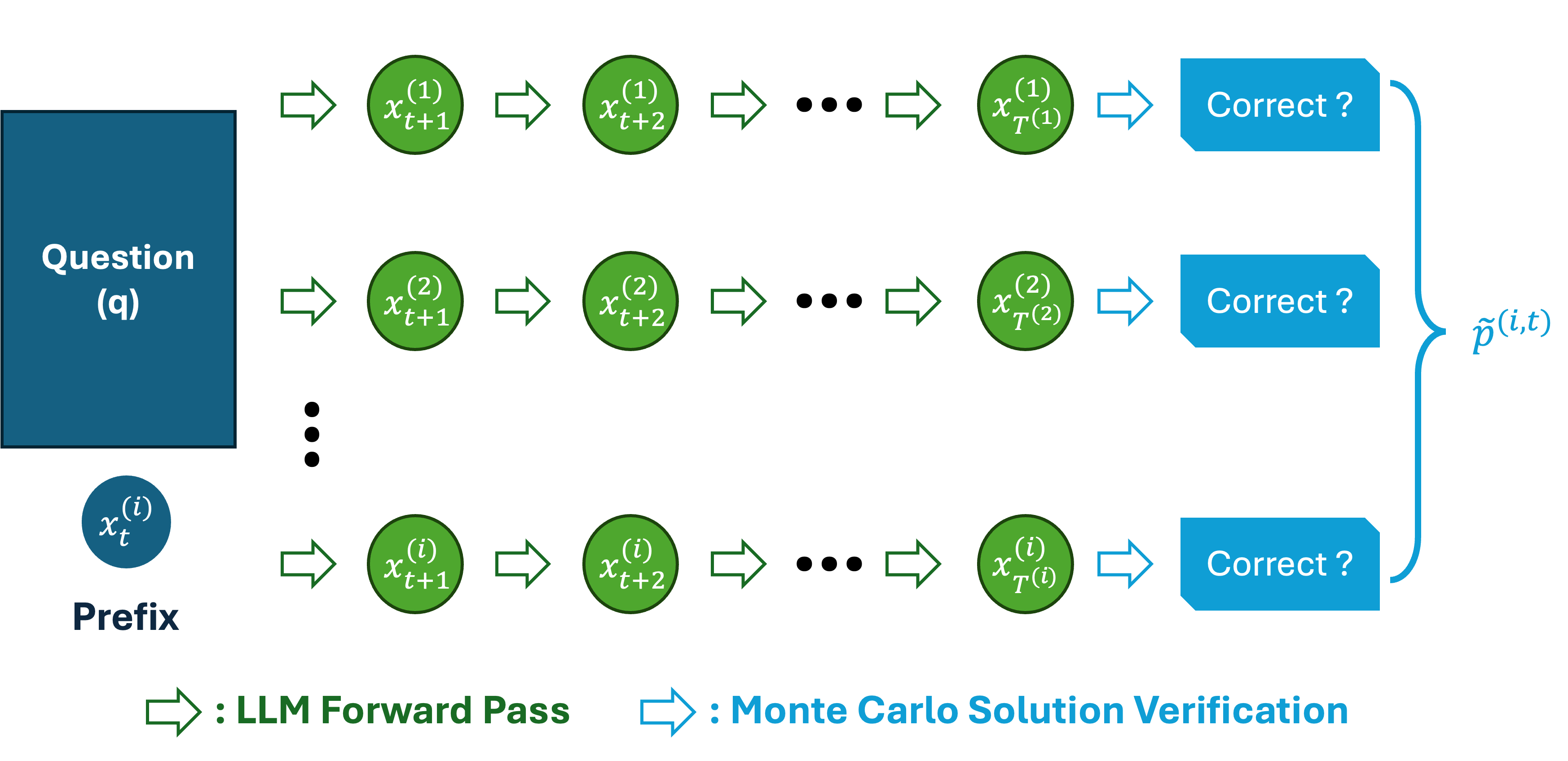}
        \caption{Monte Carlo Estimation}
        \label{fig:cal_mc_validation}
    \end{subfigure}
    \caption{For each validation $q$, we first generate  independent reasoning trajectories for $i = 1, \dots, N_{\mathrm{val}}$. For each prefix trajectory, we conduct Monte Carlo simulations to estimate the success probability, $\tilde{p}^{(i,t)}$.
    }
\end{figure}

\subsection{Calibration Set Collection} \label{app:cal_method}

To evaluate the calibration quality of a given LLM and PRM pair, we construct a validation set $\mathcal{D}_{\mathrm{val}} = \cup_{q \in \mathcal{Q}_{\mathrm{val}}} \{ (\hat{r}^{(q, i,t)}, \tilde{p}^{(q, i,t)}) \}$ through a systematic three-stage process, as described in Section~\ref{sec:calibration_data} of the main text. Here we provide additional implementation details and statistical analysis.

\paragraph{Data Collection Procedure.}
For notational simplicity, we omit the superscript $q$ in the following discussion, with the understanding that all operations are performed independently for each question $q \in \mathcal{Q}_{\mathrm{val}}$.

For each question $q \in \mathcal{Q}_{\mathrm{val}}$, we first generate $N_{\mathrm{val}} = 8$ independent reasoning trajectories using the target LLM: $\{\mathbf{x}^{(1)}, \mathbf{x}^{(2)}, \dots, \mathbf{x}^{(N_{\mathrm{val}})}\}$ (illustrated as green arrows in Figure~\ref{fig:cal_mc_forward}). Next, for each reasoning trajectory $\mathbf{x}^{(i)}$ and each of its prefix trajectories $x_{0:t}^{(i)}$ where $t \in \{0, 1, \dots, T^{(i)}\}$, we generate $N_{\mathrm{MC}} = 8$ additional follow-up trajectories conditioned on both the question and the prefix (green arrows in Figure~\ref{fig:cal_mc_validation}). By evaluating and counting the number of correct follow-up trajectories $Z^{(i,t)}$, we estimate the PRM's predicted reward $\hat{r}^{(i,t)} = \mathsf{PRM}_\phi (q, x_{0:t}^{(i)})$ and the empirical success probability $\tilde{p}^{(i,t)} \approx Z^{(i,t)} / N_{\mathrm{MC}}$ (purple arrows in Figure~\ref{fig:cal_mc_forward} and blue arrows in Figure~\ref{fig:cal_mc_validation}).

\paragraph{Hardware and Software Configuration.}
All simulations are conducted using NVIDIA V100 32GB SMX3 devices with the vLLM inference acceleration framework~\citep{kwon2023efficient}. We employ the standard sampling configuration: \texttt{top\_p = 1.0}, \texttt{top\_k = -1} (considering all tokens in the vocabulary), and temperature $T = 0.7$ for all LLMs, consistent with best practices in recent reasoning model studies~\citep{beeching2024scalingtesttimecompute, puri2025probabilistic}.

\subsection{Statistical Properties and Label Quality}

\paragraph{Unbiased Estimation.} 
The per-prefix Monte Carlo estimator $\tilde{p}^{(i,t)}$ is statistically unbiased for the true success probability. While individual estimates with $N_{\mathrm{MC}} = 8$ have non-negligible variance ($\sigma_\varepsilon^{2} = p(1-p)/N_{\mathrm{MC}} \approx 0.02$ for typical $p \approx 0.8$), our approach compensates through large-scale aggregation.

\paragraph{Aggregate Accuracy.} 
We collect between $M = 40{,}000$ and $80{,}000$ prefix-label pairs per PRM--LLM combination. Aggregating over this large number of samples substantially reduces the standard error of our calibration metrics. Specifically, the standard error of the Brier score (our primary calibration measure) scales approximately as:
\begin{equation}
\text{SE} \approx \sqrt{\frac{2\sigma_\varepsilon^{4} + 4\sigma_\varepsilon^{2} \cdot \mathrm{MSE}_{\text{true}}}{M}},
\end{equation}
where $\sigma_\varepsilon^{2} = \frac{p(1-p)}{N_{\mathrm{MC}}} \approx 0.02$ represents the per-instance variance (assuming typical success probability $p \approx 0.8$), and $\mathrm{MSE}_{\text{true}} \approx 0.08$ represents the model's intrinsic prediction error. With our collection size of $M \geq 40{,}000$, this standard error becomes sufficiently small that our reported calibration metrics are statistically reliable, and the influence of per-instance noise on global trends is negligible.

\paragraph{Impact on Quantile Regression.} 
While the aggregate statistics are accurate, the instance-wise standard error does artificially inflate the variance of the target distribution in our quantile regression objective. Increasing $N_{\mathrm{MC}}$ could potentially improve our method's performance by providing cleaner training signals. However, our experimental results demonstrate that our calibration approach is effective even with $N_{\mathrm{MC}} = 8$, achieving substantial improvements in calibration metrics across all evaluated PRM--LLM pairs.

\subsection{Additional Implementation Details}

\paragraph{Correctness Evaluation.}
Following established validation procedures from prior works~\cite{yang2024qwen2,beeching2024scalingtesttimecompute}, we determine correctness by parsing the generated answer within the \texttt{\textbackslash boxed\{\}} delimiter and comparing it against the ground truth answer. Due to the mathematical nature of our evaluation datasets (MATH and related benchmarks), there is minimal ambiguity in correctness determination—answers are either numerically or symbolically equivalent to the ground truth or they are not.

\paragraph{PRM Reward Calculation.}
The reward computation follows the official implementation provided by each PRM developer. In our experiments, we extract the final score predicted at the last reasoning step, as this approach has been shown to perform well in inference-time scaling regimes~\cite{beeching2024scalingtesttimecompute,snell2024scaling}. 

For Qwen-PRM, we use the following prompt format. The system prompt is:
\begin{verbatim}
<|im_start|>system
Please reason step by step, and put your final answer in \boxed{}.<|im_end|>
\end{verbatim}
The user prompt format is:
\begin{verbatim}
<|im_start|>user
${Question}<|im_end|>
<|im_start|>assistant
${Prefix}<extra_0><|im_end|><|endoftext|>
\end{verbatim}
where \texttt{\${Question}} represents the problem statement and \texttt{\${Prefix}} denotes the reasoning trajectory prefix (which is an empty string in the case when applying IAS for the Best-of-N regime). The reward score is computed based on the token probability at the special \texttt{<extra\_0>} marker, which corresponds to the final reasoning step. Similar prompt structures are used for other PRMs, following their respective documentation.

\paragraph{Data and Code Release.}
To ensure full reproducibility and facilitate future research, we publicly release our complete codebase, including all prompt templates, sampling configurations, and evaluation scripts. Our calibration datasets—comprising $\bigl(q, x_{0:t}^{(q, i)}, \tilde{p}^{(q, i,t)}\bigr)$ triplets of question, prefix trajectory, and empirical success probability—are available for multiple reasoning LLMs at \url{https://huggingface.co/datasets/young-j-park/prm_calibration}. The dataset includes calibration data for all 18 PRM--LLM pairs evaluated in this work.

\subsection{Baselines} \label{app:cal_baselines}

\subsubsection{Temperature Scaling}
Typically, a PRM prediction head outputs two logits, e.g., for “\textit{good}” vs. “\textit{bad}” categories. Let these logits be denoted by $\ell_{\text{good}}$ and $\ell_{\text{bad}}$. Given a temperature parameter $T > 0$, the calibrated probability is computed as: $\hat{r} \;=\; \operatorname{softmax}\!\bigl[\ell_{\text{good}}/T,\;\ell_{\text{bad}}/T\bigr]_1$, 
where the subscript $1$ selects the “\textit{good}’’ component.  
Choosing $T$ by minimizing a calibration metric (e.g., Brier score) on validation data can align $\hat{r}$ with the empirical success probability (to some extent).

\subsubsection{Isotonic Regression}
Isotonic regression calibration fits a non‐parametric, monotonic mapping from raw scores (e.g., PRM reward) to target values (e.g., success probabilities). The method applies the Pool Adjacent Violators algorithm (PAVA) \citep{de2010isotone} to learn a nondecreasing piecewise‐constant function. This mapping then stretches or compresses score regions so that, within each fitted segment, the average predicted score matches the empirical success rate.
Unlike parametric methods (e.g., temperature scaling), isotonic regression makes no assumptions about the shape of the calibration curve, allowing it to flexibly adapt to arbitrary monotonic distortions in the model's outputs.

\subsubsection{Histogram Binning}
Histogram binning calibration partitions the prediction interval into a fixed number of equal-width bins, then replaces each raw score with the empirical success rate of its bin. At test time, each new prediction is assigned to its corresponding interval and is binned to that interval’s mean. This simple non‐parametric approach corrects systematic over‐ and under‐confidence without assuming any particular shape of the calibration curve.

\subsection{Calibration via Fine-tuning} \label{app:cal_ft}
\paragraph{Advantage of fine-tuning.}
Since baseline methods correct calibration by uniformly shifting or rescaling the output probabilities (or logits) of PRMs, they are effective only when every instance's prediction is consistently overestimated or underestimated.
In contrast, fine-tuning approaches leverage the capability of LLMs to capture contextual information such as question categories and the position of the current reasoning step, enabling more comprehensive calibration.
For these reasons, we empirically observe that the simple baseline calibration methods are not adequate for PRM calibrations.

\paragraph{Parameter-efficient fine-tuning.}
To address the issue, we adopt a fine-tuning approach, arguably the most fundamental and straightforward method to calibrate the model.
Specifically, we apply LoRA \citep{hu2021lora} with a rank of $2$, a dropout rate of $0.1$, and a scaling factor of $32$. To further reduce the number of trainable parameters, we apply LoRA only for the query and value matrices in every fourth decoder layer as well as the prediction head.

The LoRA approach offers benefits for PRM calibration beyond its computational efficiency compared to full fine-tuning: it enables access to both the original and calibrated PRM scores in a single forward pass. This allows for hybrid strategies (e.g., using the calibrated score for IAS and the original score for ranking candidate reasoning), without incurring significant additional computational cost.

\paragraph{Quantile prediction.}
For quantile regression, we modify the output head by replacing the final softmax linear layer with a sigmoid layer, producing outputs for each desired quantile (e.g., $3$ quantiles corresponding to $10\%$, $50\%$, and $90\%$ in our experiments).

To ensure the initial predictions remain consistent with the original softmax outputs, we initialize the sigmoid-based output head such that the sigmoid probability for each quantile matches the softmax probability of the "good" token from the original model.

More specifically, assuming a two-token prediction scenario with logits $z_0$ and $z_1$, then softmax probabilities for two classes $0$ (i.e., ``bad'') and $1$ (i.e., ``good'') are given by:
\begin{equation*}
p(y=1) = \frac{e^{z_1}}{e^{z_0} + e^{z_1}} \quad \text{and} \quad p(y=0) = \frac{e^{z_0}}{e^{z_0} + e^{z_1}}
\end{equation*}

The probability for class $1$ simplifies to a sigmoid function:
\begin{equation*}
p(y=1) = \frac{e^{z_1}}{e^{z_0}+e^{z_1}} = \frac{1}{1+e^{-(z_1 - z_0)}} = \sigma(z_1 - z_0)
\end{equation*}

Thus, by equating the sigmoid input $z$ with the logit difference $z_1 - z_0$, we maintain consistency in initial predictions when transitioning from a softmax to a sigmoid-based output layer.

In practice, this corresponds to initializing the weights of the new linear sigmoid output layer $W$ as the difference between the original linear layer weights for the ``good'' token $W_1$ and the ``bad'' token $W_2$, i.e., $W \leftarrow W_1 - W_2$. Similarly, the bias term $b$, if it exists, is initialized as the difference of the biases from the original linear layers, i.e., $b \leftarrow b_1 - b_2$. This ensures the initial logit input to the sigmoid function directly matches the logit differences obtained from the original softmax-based model. Fine-tuning is performed using NVIDIA A100 40GB SXM4 GPUs and the TRL library \citep{vonwerra2022trl}.

\subsection{Model Calibration}
Calibration error metrics assess whether a model's predicted probabilities are consistent with observed accuracy. Two widely used calibration metrics are the Brier score \citep{brier1950verification} and the expected calibration error (ECE) \citep{naeini2015obtaining}. Brier score quantifies the mean-squared difference between predicted probabilities $\hat{y}^{(l)}$ and ground truth labels $y^{(l)}$ over $N$ instances, where $l=1,\dots,N$: 
\begin{equation*}
    \text{Brier}
    \;=\;
    \frac{1}{N}\,
    \sum_{l=1}^{N}
    \bigl(\hat{y}^{(l)} - y^{(l)}\bigr)^{2}.
\end{equation*}

ECE evaluates the discrepancy between predicted confidence and empirical accuracy by partitioning predictions into $B$ confidence bins $\{\mathcal{B}_1,\ldots,\mathcal{B}_B\}$ and averaging the absolute difference within each bin. Denote the set of indices whose confidences fall into bin $\mathcal{B}_b$ by $\mathcal{I}_b$. Then 
\begin{equation*}
    \text{ECE}
    \;=\;
    \sum_{b=1}^{B}
    \frac{|\mathcal{I}_b|}{N}\,
    \bigl|
      \operatorname{conf}(\mathcal{B}_b)
      -
      \operatorname{acc}(\mathcal{B}_b)
    \bigr|,
\end{equation*}
where
$\operatorname{conf}(\mathcal{B}_b)=\tfrac{1}{|\mathcal{I}_b|}\sum_{l\in\mathcal{I}_b} \hat{y}^{(l)}$
is the average confidence in bin $b$ and
$\operatorname{acc}(\mathcal{B}_b)=\tfrac{1}{|\mathcal{I}_b|}\sum_{l\in\mathcal{I}_b} y^{(l)}$
is the empirical accuracy. Variants of ECE include adaptive calibration error (AdaptiveCE) \citep{nixon2019measuring} and average calibration error (AverageCE) \citep{neumann2018relaxed}, which uses adaptive bin intervals and equal weighting across bins, respectivley.

\section{Experiment Setup} \label{app:setup}

We evaluate our method on two mathematical reasoning benchmarks: \texttt{MATH500} \citep{hendrycksmath2021} and \texttt{AIME24-25} (i.e., \texttt{AIME2024} and \texttt{AIME2025}).
\texttt{MATH500} is a 500-example subset of the MATH dataset, available at \url{https://huggingface.co/datasets/HuggingFaceH4/MATH-500}.
For calibration purposes, we also constructed our own \texttt{MATH500-validation} set by randomly subsampling 500 problems from the original MATH training dataset, available at \url{https://huggingface.co/datasets/hendrycks/competition_math}.
The AIME (American Invitational Mathematics Examination) consists of 15 questions per test, with two tests administered each year. Thus, we use 30 questions from 2024 and 30 from 2025, totaling 60 problems, available at \url{https://huggingface.co/datasets/HuggingFaceH4/aime_2024} and \url{https://huggingface.co/datasets/opencompass/AIME2025}, respectively.

To demonstrate the efficacy of our method across diverse LLMs, we evaluate six prominent models that vary in architecture, parameter count, and training paradigms (instruct-tuned vs. R1-distilled): Llama-3.2-1B and Llama-3.1-8B-Instruct \citep{touvron2023llama}, Qwen2.5-Math-1.5B and Qwen2.5-7B-Instruct \citep{qwen2.5}, and DeepSeek-R1-Distill-Llama and DeepSeek-R1-Distill-Qwen-8B \citep{deepseekai2025}. 
We use a standard temperature setting of $0.7$ and adopt the inference prompt template specified in \citet{puri2025probabilistic} across all models.

We use Qwen2.5-Math-PRM-7B \citep{prmlessons} as the primary PRM throughout the main manuscript, as it was the top-performing open-source small-sized PRM in PRMBench \citep{song2025prmbench}. 
We present additional results for ReasonEval-7B \citep{xia2024evaluating} and Math-Shepherd-Mistral-7B \citep{want2024shepherd} PRMs, along with comprehensive analyses in the subsequent Appendix section.
We adhered to the official prompting protocols for each PRM, as demonstrated on their respective websites.

We use NVIDIA V100 32GB SMX3 devices for every experiment except for quantile regression fine-tuning.

\section{Additional Results} \label{app:results}

\subsection{Calibration Analysis of Larger-Scale PRMs} \label{app:72b}

We extend our calibration analysis to larger-scale process reward models by evaluating the 72B parameter variant on both \texttt{MATH500} and \texttt{AIME24-25} datasets. This analysis addresses whether the calibration challenges observed in 7B models persist at larger scales, and whether our calibration method remains effective across different model sizes.

Table~\ref{tab:calibration_72b} and Figure~\ref{fig:cal_72b} present a comprehensive comparison of calibration performance across three configurations: uncalibrated 7B models, uncalibrated 72B models, and our calibrated 7B models. We evaluate six different base language models across two datasets, reporting both Brier score and Positive Brier score metrics for each configuration.

The results demonstrate that while larger 72B models exhibit reduced overestimation compared to their 7B counterparts, they still suffer from significant miscalibration issues. This observation is expected, as both model were trained with same policy model. 

These results validate two key insights: (1) the calibration problem persists across model scales and cannot be solved simply by increasing model capacity, and (2) explicit calibration methods can effectively address miscalibration. Notably, our calibrated 7B model not only outperforms the uncalibrated 7B baseline but also surpasses the much larger uncalibrated 72B model, demonstrating that targeted calibration is more effective than naive scaling.

\begin{table}[t]
\centering
\caption{Calibration performance comparison across model scales on MATH500 and AIME24-25 datasets (lower is better). Best results for each metric within each row are shown in \textbf{bold}. Results demonstrate that explicit calibration of Qwen-PRM-7B models outperforms even uncalibrated Qwen-PRM-72B models, highlighting the importance of targeted calibration over naive model scaling.}
\label{tab:calibration_72b}
\renewcommand{\arraystretch}{0.8}
\resizebox{\textwidth}{!}{%
\begin{tabular}{ll|cc|cc|cc}
\toprule
\multirow{2}{*}{Dataset} & \multirow{2}{*}{Model} & \multicolumn{2}{c|}{Uncalibrated 7B} & \multicolumn{2}{c|}{Uncalibrated 72B} & \multicolumn{2}{c}{Calibrated 7B} \\
& & Brier & Pos. Brier & Brier & Pos. Brier & Brier & Pos. Brier \\
\midrule
\multirow{6}{*}{\texttt{MATH500}} 
& Llama-3.2-1B & 0.241 & 0.223 & 0.211 & 0.078 & \textbf{0.069} & \textbf{0.047} \\
& Llama-3.1-8B & 0.205 & 0.177 & 0.177 & 0.111 & \textbf{0.121} & \textbf{0.099} \\
& Qwen-2.5-1.5B & 0.154 & 0.130 & 0.155 & \textbf{0.106} & \textbf{0.127} & 0.107 \\
& Qwen-2.5-7B & 0.101 & 0.085 & 0.118 & 0.057 & \textbf{0.082} & \textbf{0.053} \\
& R1-Llama-8B & 0.161 & 0.114 & 0.193 & 0.072 & \textbf{0.089} & \textbf{0.055} \\
& R1-Qwen-7B & 0.148 & 0.106 & 0.176 & 0.075 & \textbf{0.083} & \textbf{0.058} \\
\midrule
\multirow{6}{*}{\texttt{AIME24-25}} 
& Llama-3.2-1B & 0.194 & 0.192 & 0.157 & 0.063 & \textbf{0.003} & \textbf{0.001} \\
& Llama-3.1-8B & 0.227 & 0.223 & 0.183 & 0.122 & \textbf{0.041} & \textbf{0.035} \\
& Qwen-2.5-1.5B & 0.330 & 0.322 & 0.256 & 0.176 & \textbf{0.073} & \textbf{0.053} \\
& Qwen-2.5-7B & 0.289 & 0.282 & 0.205 & 0.139 & \textbf{0.072} & \textbf{0.066} \\
& R1-Llama-8B & 0.385 & 0.371 & 0.237 & 0.204 & \textbf{0.078} & \textbf{0.030} \\
& R1-Qwen-7B & 0.414 & 0.402 & 0.260 & 0.215 & \textbf{0.069} & \textbf{0.034} \\
\bottomrule
\end{tabular}%
}
\end{table}

\begin{figure}[t]
     \centering
     \begin{subfigure}[b]{0.4\textwidth}
         \centering
         \includegraphics[width=\textwidth]{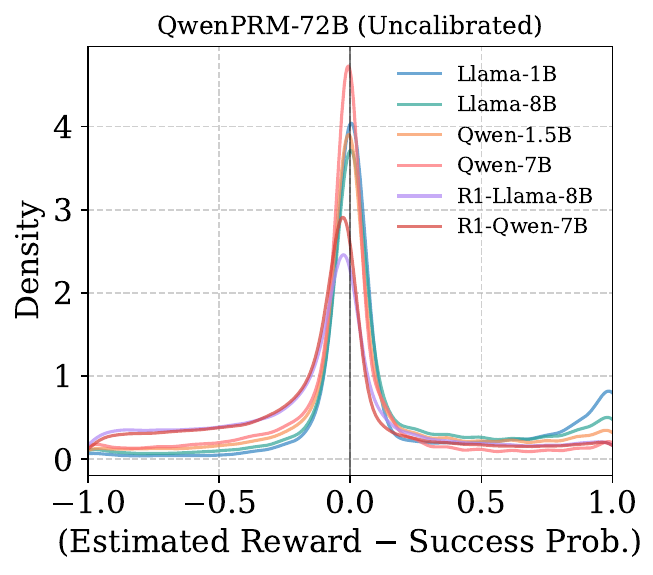}
         \caption{Uncalibrated Qwen-PRM-72B}
     \end{subfigure}
     \begin{subfigure}[b]{0.4\textwidth}
         \centering
         \includegraphics[width=\textwidth]{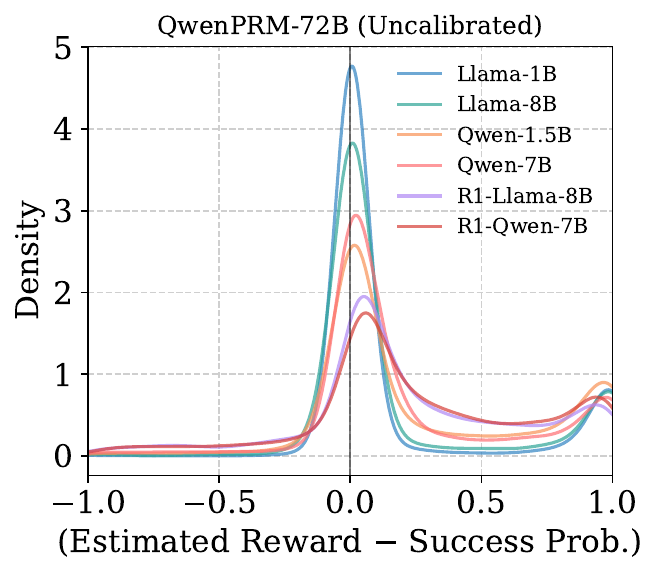}
         \caption{Uncalibrated Qwen-PRM-72B}
     \end{subfigure}
     \caption{Histogram of signed deviation (i.e., estimation error) for Qwen-PRM-72B on the \texttt{MATH500} (in-distribution) and \texttt{AIME24-25} (out-of-distribution) datasets. Positive error indicates overestimation. While larger 72B model exhibit reduced overestimation compared to their 7B counterparts, they still suffer from significant miscalibration issues.}
    \label{fig:cal_72b}
\end{figure}

\subsection{Histogram of Estimation Error}

We examine the histogram of the deviations of the estimated reward from the true success probability ($\hat{r}^{(i,t)} - \tilde{p}^{(i,t)}$), with both uncalibrated and calibrated PRMs.

As shown in Figures~\ref{fig:cal_math500} and \ref{fig:cal_aimes}, the error densities of uncalibrated PRMs are skewed to the right, with minimal mass on the left, indicating consistent overestimation. This effect is further amplified when weaker LLMs are used to generate responses, or when evaluated on more challenging datasets such as \texttt{AIME24-25}. 

It is noteworthy that native PRMs are designed to estimate success in a model-agnostic manner; consequently, they tend to overestimate success probabilities for weaker models, resulting in inflated calibration errors.
However, calibration effectively mitigates the issue and transforms the PRM error histograms into approximately unbiased zero-mean distributions across different models and dataset distributions.

\subsection{Comparison with Calibration Baselines}

In this section, we present calibration results for our method and several baselines across three PRMs.
In addition to reporting the average Brier score, we also plot the 95\% confidence intervals.
The results are shown in Figures~\ref{fig:posthoc_cal_qwen_full}, \ref{fig:posthoc_cal_shepherd}, and \ref{fig:posthoc_cal_reasoneval}.

The Shepherd PRM exhibits a trend similar to the Qwen PRM: our method consistently outperforms the baseline methods, whereas the baselines perform poorly on the out-of-distribution \texttt{AIME24-25} dataset. The ReasonEval PRM, which is specifically trained to produce more robust outputs (especially for intermediate steps), appears to be already well-calibrated to some extent. Nevertheless, our method consistently surpasses the baseline approaches. Interestingly, despite its initially poor calibration, the Qwen PRM achieves better final calibration quality than both Shepherd and ReasonEval after fine-tuning.

We additionally compare our method against Adaptive Temperature Scaling (ATS)~\citep{xie2024calibrating, shen2024thermometer}, a recent calibration method that learns instance-specific temperature parameters. Since ATS was originally designed for standard language model outputs rather than PRM regression-style scores, we adapted it for our setting. Specifically, we modified ATS to train a transformer head that dynamically selects temperature parameters based on the PRM-generated hidden states of both the question and solution prefix, enabling context-aware calibration similar to our approach.

Table~\ref{tab:ats_comparison} presents comprehensive comparisons across three calibration metrics. The modified ATS consistently outperforms simpler baselines, validating the importance of context-aware calibration.
On the in-distribution MATH500 dataset, ATS achieves competitive performance—for instance, reducing AdaCE from 0.241 to 0.178 for R1-Qwen-7B. However, our method achieves superior calibration with AdaCE of 0.107 for the same model.

The performance gap becomes more pronounced on the out-of-distribution AIME24-25 dataset. While ATS provides improvements over uncalibrated models (e.g., reducing AdaCE from 0.558 to 0.440 for R1-Qwen-7B), our method demonstrates substantially better generalization with an AdaCE of only 0.090. This pattern holds consistently across all models and metrics. For example, on Llama-3.2-1B, our method achieves an AdaCE of 0.011 compared to 0.125 for ATS, and a Brier score of 0.003 compared to 0.065.

\begin{table}[t]
\centering
\caption{Calibration error for various methods and models on MATH500 (in-distribution) and AIME24-25 (out-of-distribution). Values are presented as adaptive calibration error (AdaCE) and Brier scores, where lower is better. The best result for each metric within a row is \textbf{bolded}. We use Qwen-PRM-7B. Acronyms: TS (Temperature Scaling), ATS (Adpative Temperature Scaling), IR (Isotonic Regression), and HB (Histogram Binning).}
\label{tab:ats_comparison}
\renewcommand{\arraystretch}{0.85}
\resizebox{\textwidth}{!}{
\begin{tabular}{ll|cc|cc|cc|cc|cc|cc}
\toprule
\multirow{2}{*}{Dataset} & \multirow{2}{*}{Model} & \multicolumn{2}{c|}{Uncalibrated} & \multicolumn{2}{c|}{TS} & \multicolumn{2}{c|}{ATS} & \multicolumn{2}{c|}{IR} & \multicolumn{2}{c|}{HB} & \multicolumn{2}{c}{Calibrated (Ours)} \\
& & AdaCE & Brier & AdaCE & Brier & AdaCE & Brier & AdaCE & Brier & AdaCE & Brier & AdaCE & Brier \\
\midrule
\multirow{6}{*}{\texttt{MATH500}} 
& Llama-3.2-1B & .283 & .241 & .265 & .190 & .181 & .099 & .212 & .097 & .213 & .097 & \textbf{.081} & \textbf{.069} \\
& Llama-3.1-8B & .263 & .205 & .244 & .168 & .237 & .130 & .321 & .166 & .323 & .166 & \textbf{.167} & \textbf{.121} \\
& Qwen-2.5-1.5B & .218 & .154 & .205 & .140 & .205 & \textbf{.123} & .271 & .145 & .274 & .146 & \textbf{.155} & .127 \\
& Qwen-2.5-7B & .146 & .101 & .135 & .093 & .135 & \textbf{.076} & .201 & .092 & .205 & .094 & \textbf{.100} & .082 \\
& R1-Llama-8B & .251 & .161 & .223 & .149 & .196 & .107 & .297 & .141 & .298 & .141 & \textbf{.113} & \textbf{.089} \\
& R1-Qwen-7B & .241 & .148 & .202 & .140 & .178 & .103 & .289 & .132 & .289 & .132 & \textbf{.107} & \textbf{.083} \\
\midrule
\multirow{6}{*}{\texttt{AIME24-25}} 
& Llama-3.2-1B & .236 & .194 & .210 & .193 & .125 & .065 & .177 & .065 & .177 & .065 & \textbf{.011} & \textbf{.003} \\
& Llama-3.1-8B & .284 & .227 & .248 & .206 & .281 & .125 & .392 & .215 & .396 & .215 & \textbf{.086} & \textbf{.041} \\
& Qwen-2.5-1.5B & .401 & .330 & .393 & .293 & .367 & .184 & .477 & .331 & .486 & .337 & \textbf{.105} & \textbf{.073} \\
& Qwen-2.5-7B & .355 & .289 & .354 & .249 & .271 & .146 & .390 & .250 & .404 & .261 & \textbf{.098} & \textbf{.072} \\
& R1-Llama-8B & .526 & .385 & .528 & .310 & .438 & .220 & .590 & .400 & .588 & .399 & \textbf{.128} & \textbf{.078} \\
& R1-Qwen-7B & .558 & .414 & .570 & .340 & .440 & .230 & .613 & .430 & .613 & .430 & \textbf{.090} & \textbf{.069} \\
\bottomrule
\end{tabular}
}
\end{table}

\subsection{Inference-time Scaling with IAS} \label{app:ias}

We first reiterate that our goal is ``not'' to introduce a new inference-time scaling method that universally outperforms existing approaches.
Instead, we aim to enable inference-time scaling to allocate compute budgets dynamically based on a model's estimated likelihood of answering correctly.
The proposed IAS strategy enables us to adaptively determine, on a \emph{per-instance} basis, the number of samples that best balance accuracy and computational cost.

\paragraph{IAS for best-of-$N$.}
We first present results and analysis of the proposed IAS strategy across different LLMs and PRMs.
Specifically, we report performance under the best-of-$N$ framework in Tables~\ref{tab:bon_qwen}, \ref{tab:bon_shepherd}, and \ref{tab:bon_reasoneval}.
To ensure statistical significance, the pass@1 rate is estimated by averaging accuracy over $64$ independent forward passes. Similarly, for the IAS approach, we report the score based on $100$ different trials for each question, based on the determined $N_{\mathrm{IAS}}$ value computed individually per question.

As observed, the IAS approach adaptively allocates the sampling budget while largely preserving overall accuracy. 
Ironically, the ReasonEval PRM performs poorly under the best-of-$N$ (BoN) strategy, often yielding results worse than the pass@1 rate. 
However, this is not a limitation of the IAS strategy itself, but rather a reflection of the ReasonEval PRM's weakness in selecting the correct answer from among candidate responses.
Notably, the budgets determined by IAS for ReasonEval are comparable to those derived from the other PRMs (See Figures~\ref{fig:acc_by_difficulty_qwen_full}, \ref{fig:acc_by_difficulty_shepherd_full}, and \ref{fig:acc_by_difficulty_reasoneval_full} for how effectively IAS adaptively selects $N$.).
In contrast, PRMs like Qwen exhibit strong ranking capabilities despite poor initial calibration. Our calibration method proves particularly effective in such cases, as it preserves the PRM’s original strengths while enabling effective IAS and improving its reliability.

\paragraph{IAS for beam search.}
To further demonstrate the effectiveness of the proposed IAS approach, we evaluate it under a beam search setup, which requires PRM evaluation over intermediate reasoning trajectories. However, due to limitations of the vLLM acceleration framework---which does not support fine-tuned custom models or provide access to internal states---inference was significantly slower than simple forward passes. In our setting ($N = 8 \times 8 = 64$, $M = 8$) and computation resource (i.e., V100 gpus), a single search over 500 examples took approximately 10 hours, resulting in each experiment requiring over 200 hours to complete. To mitigate this computational burden, we adopted a simplified approach: instead of performing beam search at every step, we applied it every five steps. Given that typical reasoning trajectories span 20 or more steps, this amounted to roughly five beam search applications per example.
Due to the computational burden, we were only able to perform a single trial for each beam search experiment.

Tables~\ref{tab:bs_qwen}, \ref{tab:bs_shepherd}, and \ref{tab:bs_reasoneval} present the pass@1 accuracy for standard beam search (BS), BS with the proposed IASoK, and BS with IASoM, evaluated using the Qwen, Shepherd, and ReasonEval PRMs. 
For the Shepherd and ReasonEval PRMs, we conduct experiments using the \texttt{MATH100} (the first $100$ examples among \texttt{MATH500}) and \texttt{AIME2024} datasets.
To ensure a fair comparison, all experiments were conducted using the calibrated PRM. The results are consistent with the observations made for the Qwen PRM, as discussed in the main text. 

\subsection{Ablation Studies on IAS Parameters} \label{app:ablation}

We provide an ablation study on two hyperparameters that govern the trade-off between computational cost and accuracy, under the best-of-$N$ setup.

\paragraph{A target probability ($C$).}
The target probability serves as a parameter that acts as a constant multiplier on the expression $1 / \log (1 - p)$. We present results for values of $C$ such that $\log(1 - C)$ takes on values $-0.125$, $-0.25$, $-0.5$, $-1.0$, $-2.0$, $-3.0$, $-4.0$, and $-5.0$. Figure~\ref{fig:ablation_C} shows the budget-accuracy trade-off plots. We use $C = 0.99$, as it appears to provide a reasonable balance between accuracy and computational cost.

\paragraph{A quantile parameter ($\beta$).}

The parameter $\beta$ represents a quantile that controls how conservatively we estimate the success probability. We report results for $\beta = 0.1$, $0.5$, and $0.9$. Figure~\ref{fig:ablation_quantile} presents the budget-accuracy trade-off plots. We adopt a conservative setting of $\beta = 0.1$, as it consistently achieves strong performance. However, we note that even the $90\%$ quantile ($\beta = 0.9$) often yields reasonably high accuracy with significant budget savings, highlighting the flexibility of the approach.

Again, as also shown in Figures~\ref{fig:acc_by_difficulty_qwen_full}, \ref{fig:acc_by_difficulty_shepherd_full}, and \ref{fig:acc_by_difficulty_reasoneval_full}, accuracy tends to increase monotonically with higher inference budget, thus a ``sweet spot'' doesn't exist. Exploring a principled metric to assess the cost-performance ratio within the inference-time scaling framework remains an interesting direction for future work.

\clearpage

\newcommand{\appsubfigscale}{0.45}
\begin{figure}[hbt]
     \centering
     \begin{subfigure}[b]{\appsubfigscale\textwidth}
         \centering
         \includegraphics[width=\textwidth]{figures/err_histogram_plot_uncalibrated_QwenPRM-7B_math500.pdf}
         \caption{Uncalibrated Qwen-PRM}
         \label{fig:err_qwen_uncal_math500_}
     \end{subfigure}
     \begin{subfigure}[b]{\appsubfigscale\textwidth}
         \centering
         \includegraphics[width=\textwidth]{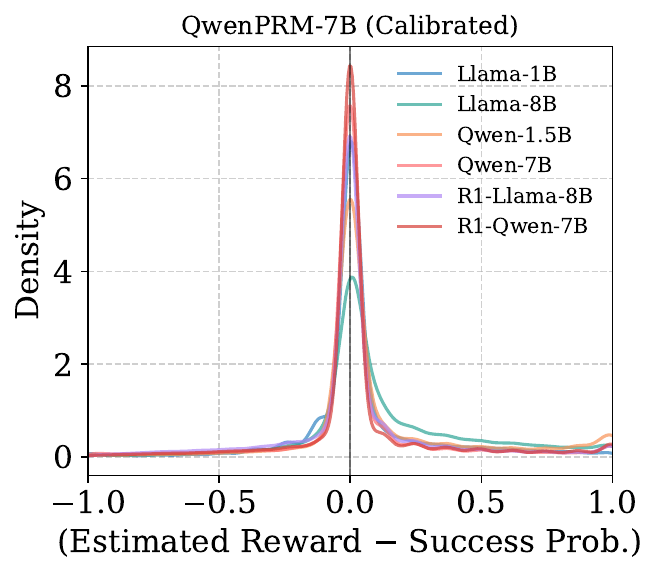}
         \caption{Calibrated Qwen-PRM}
         \label{fig:err_qwen_cal_math500}
     \end{subfigure}
     \begin{subfigure}[b]{\appsubfigscale\textwidth}
         \centering
         \includegraphics[width=\textwidth]{figures/err_histogram_plot_uncalibrated_Shepherd-7B_math500.pdf}
         \caption{Uncalibrated Shepherd-PRM}
         \label{fig:err_shepherd_uncal_math500_}
     \end{subfigure}     \begin{subfigure}[b]{\appsubfigscale\textwidth}
         \centering
         \includegraphics[width=\textwidth]{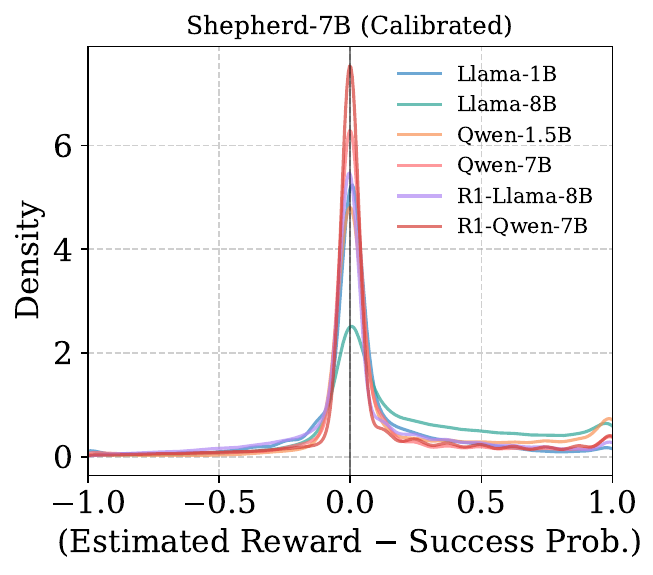}
         \caption{Calibrated Shepherd-PRM}
         \label{fig:err_shepherd_cal_math500}
     \end{subfigure}
     \begin{subfigure}[b]{\appsubfigscale\textwidth}
         \centering
         \includegraphics[width=\textwidth]{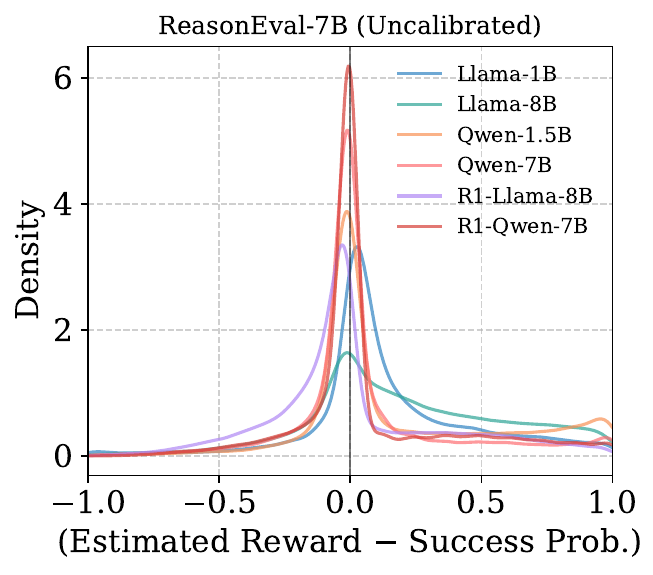}
         \caption{Uncalibrated ReasonEval-PRM}
         \label{fig:err_ReasonEval_uncal_math500_}
     \end{subfigure}
     \begin{subfigure}[b]{\appsubfigscale\textwidth}
         \centering
         \includegraphics[width=\textwidth]{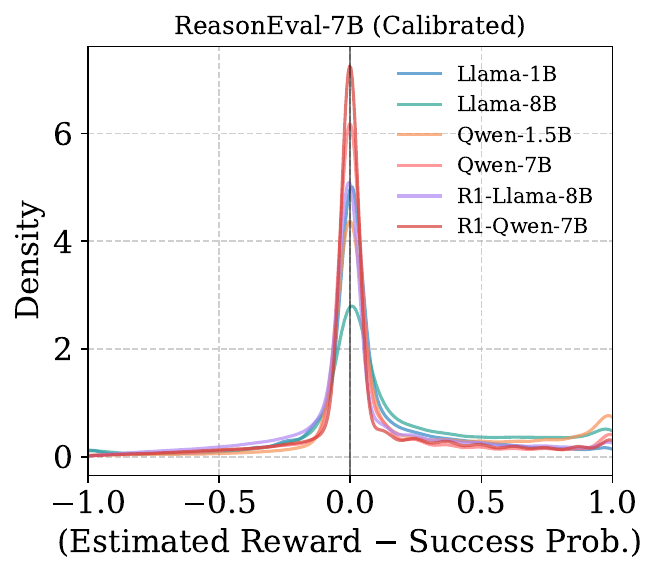}
         \caption{Calibrated ReasonEval-PRM}
         \label{fig:err_ReasonEval_cal_math500}
     \end{subfigure}
     \caption{Histogram of signed deviation (i.e., estimation error) for Qwen, Shepherd, and ReasonEval PRMs evaluated on the \texttt{MATH500} (in-distribution) datasets. Positive error indicates overestimation. As shown, \textbf{uncalibrated PRMs---the left column---tend to be optimistic (e.g., exhibiting higher density on the right side compared to the left and/or forming a noticeable peak around 1.0).} This becomes particularly pronounced for weaker models. In contrast, our calibration approach---the right column---transforms the error histograms into unbiased zero-mean distributions, consistently across various models.}
    \label{fig:cal_math500}
\end{figure}

\begin{figure}[hbt]
     \centering
     \begin{subfigure}[b]{\appsubfigscale\textwidth}
         \centering
         \includegraphics[width=\textwidth]{figures/err_histogram_plot_uncalibrated_QwenPRM-7B_aimes.pdf}
         \caption{Uncalibrated Qwen-PRM}
         \label{fig:err_qwen_uncal_aimes_}
     \end{subfigure}
     \begin{subfigure}[b]{\appsubfigscale\textwidth}
         \centering
         \includegraphics[width=\textwidth]{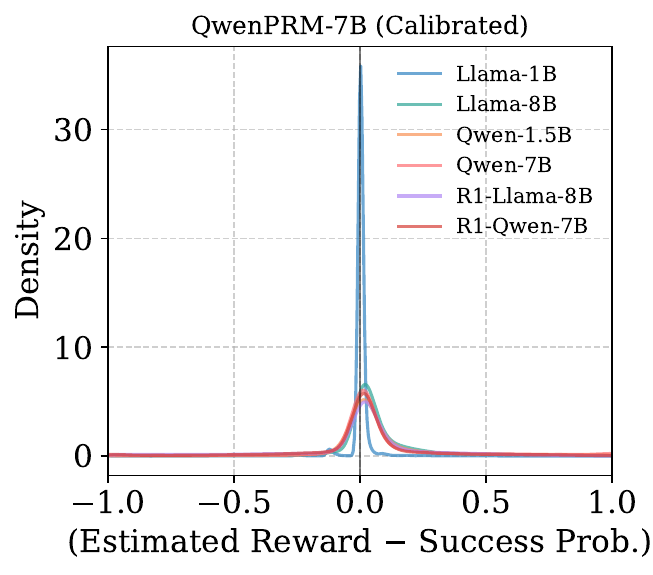}
         \caption{Calibrated Qwen-PRM}
         \label{fig:err_qwen_cal_aimes}
     \end{subfigure}
     \begin{subfigure}[b]{\appsubfigscale\textwidth}
         \centering
         \includegraphics[width=\textwidth]{figures/err_histogram_plot_uncalibrated_Shepherd-7B_aimes.pdf}
         \caption{Uncalibrated Shepherd-PRM}
         \label{fig:err_shepherd_uncal_aimes_}
     \end{subfigure}     \begin{subfigure}[b]{\appsubfigscale\textwidth}
         \centering
         \includegraphics[width=\textwidth]{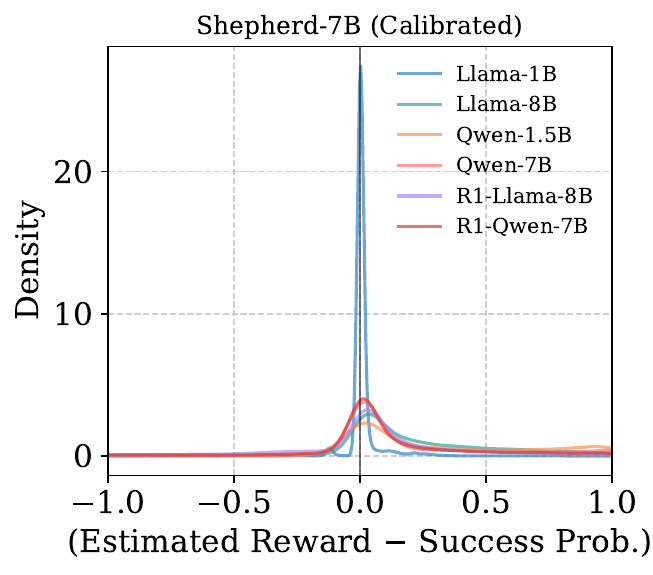}
         \caption{Calibrated Shepherd-PRM}
         \label{fig:err_shepherd_cal_aimes}
     \end{subfigure}
     \begin{subfigure}[b]{\appsubfigscale\textwidth}
         \centering
         \includegraphics[width=\textwidth]{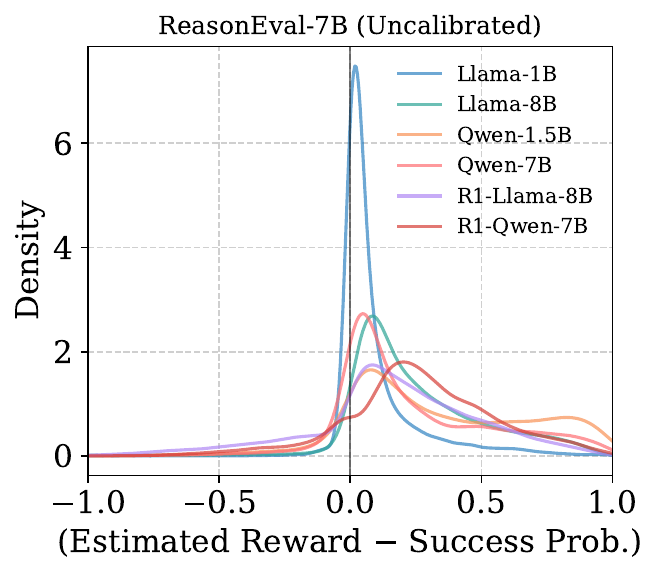}
         \caption{Uncalibrated ReasonEval-PRM}
         \label{fig:err_ReasonEval_uncal_aimes_}
     \end{subfigure}
     \begin{subfigure}[b]{\appsubfigscale\textwidth}
         \centering
         \includegraphics[width=\textwidth]{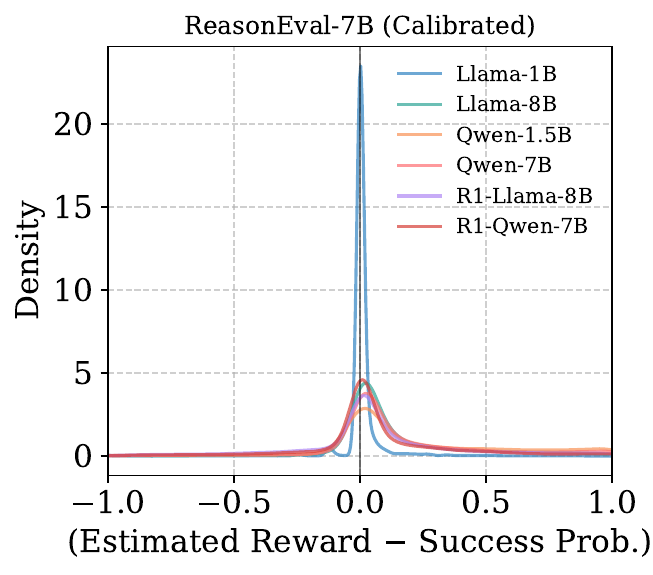}
         \caption{Calibrated ReasonEval-PRM}
         \label{fig:err_ReasonEval_cal_aimes}
     \end{subfigure}
     \caption{Histogram of signed deviation (i.e., estimation error) for Qwen, Shepherd, and ReasonEval PRMs evaluated on the \texttt{AIME24-25} (out-of-distribution) datasets. Positive error indicates overestimation. As shown, uncalibrated PRMs---the left column---tend to be optimistic (e.g., exhibiting higher density on the right side compared to the left and/or forming a noticeable peak around $1.0$). \textbf{This becomes particularly pronounced for more challenging, out-of-distribution problems.} In contrast, our calibration approach---the right column---transforms the error histograms into unbiased zero-mean distributions, consistently across various models and dataset distributions.}
    \label{fig:cal_aimes}
\end{figure}

\begin{figure}[hbt]
     \centering
     \begin{subfigure}[b]{\appsubfigscale\textwidth}
         \centering
         \includegraphics[width=\textwidth]{figures/brier_scores_errbar_posthoc_QwenPRM-7B_Llama-3.2-1B-Instruct.pdf}
     \end{subfigure}
     \begin{subfigure}[b]{\appsubfigscale\textwidth}
         \centering
         \includegraphics[width=\textwidth]{figures/brier_scores_errbar_posthoc_QwenPRM-7B_Llama-3.1-8B-Instruct.pdf}
     \end{subfigure}
     \begin{subfigure}[b]{\appsubfigscale\textwidth}
         \centering
         \includegraphics[width=\textwidth]{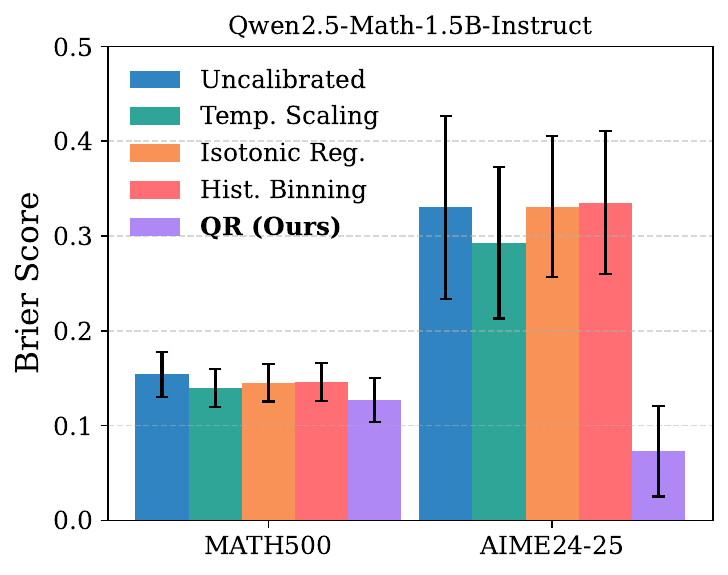}
     \end{subfigure}
     \begin{subfigure}[b]{\appsubfigscale\textwidth}
         \centering
         \includegraphics[width=\textwidth]{figures/brier_scores_errbar_posthoc_QwenPRM-7B_Qwen2.5-Math-7B-Instruct.pdf}
     \end{subfigure}
     \begin{subfigure}[b]{\appsubfigscale\textwidth}
         \centering
         \includegraphics[width=\textwidth]{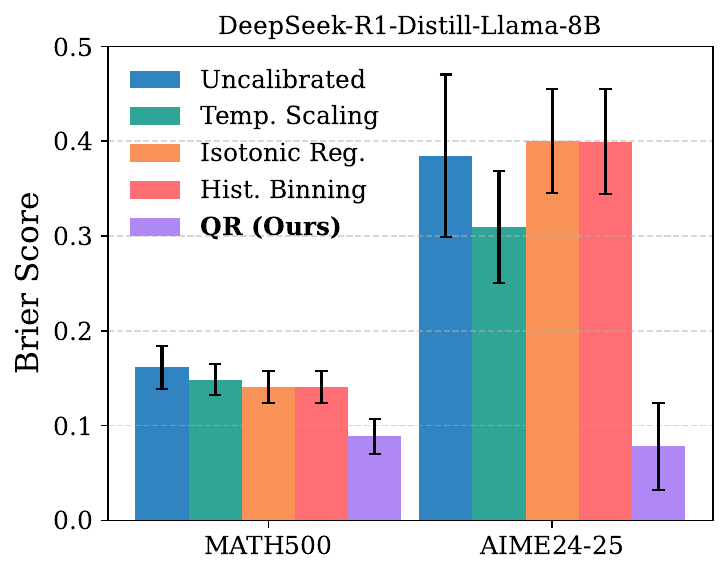}
     \end{subfigure}
     \begin{subfigure}[b]{\appsubfigscale\textwidth}
         \centering
         \includegraphics[width=\textwidth]{figures/brier_scores_errbar_posthoc_QwenPRM-7B_DeepSeek-R1-Distill-Qwen-7B.pdf}
     \end{subfigure}
     \caption{Comparison of our method with popular calibration techniques---temperature scaling, isotonic regression, and histogram binning---on the \textbf{Qwen} PRM across \texttt{MATH500} and \texttt{AIME24-25}.}
    \label{fig:posthoc_cal_qwen_full}
\end{figure}

\begin{figure}[hbt]
     \centering
     \begin{subfigure}[b]{\appsubfigscale\textwidth}
         \centering
         \includegraphics[width=\textwidth]{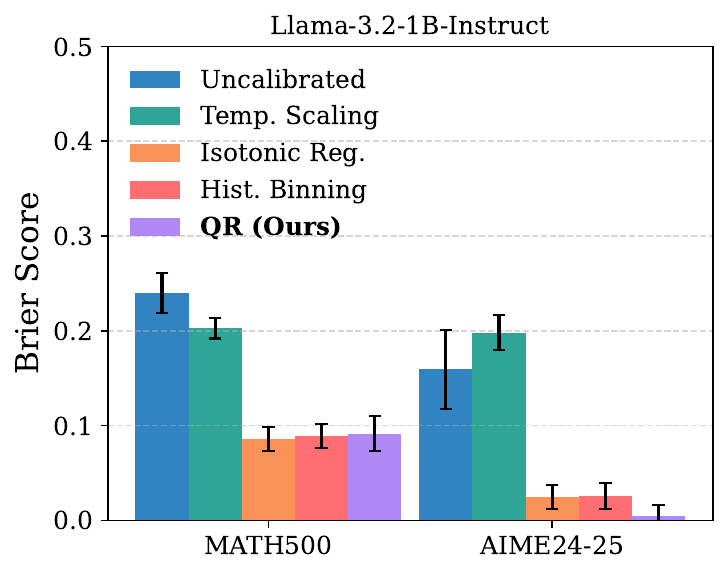}
     \end{subfigure}
     \begin{subfigure}[b]{\appsubfigscale\textwidth}
         \centering
         \includegraphics[width=\textwidth]{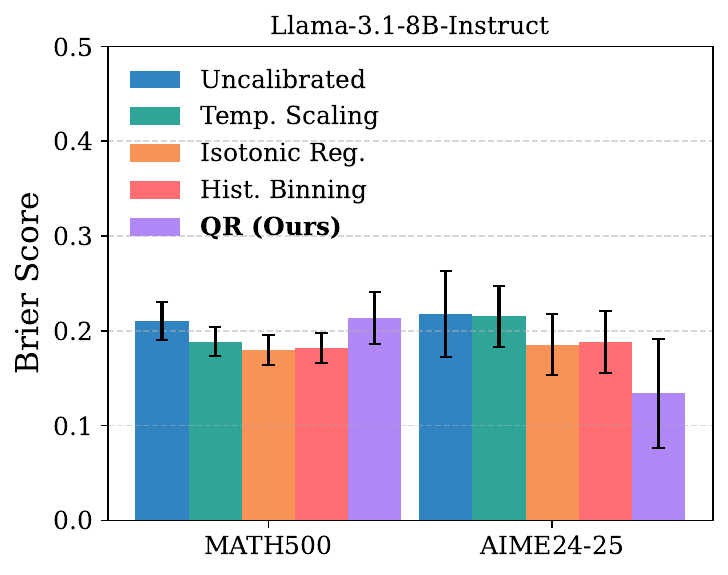}
     \end{subfigure}
     \begin{subfigure}[b]{\appsubfigscale\textwidth}
         \centering
         \includegraphics[width=\textwidth]{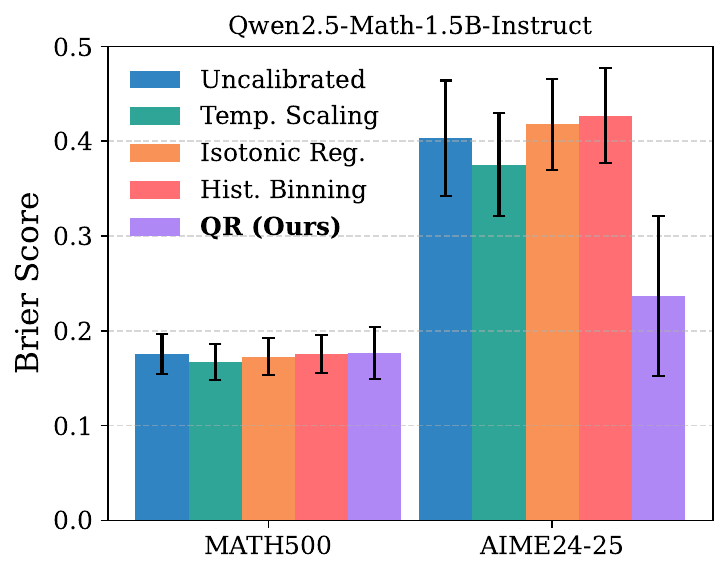}
     \end{subfigure}
     \begin{subfigure}[b]{\appsubfigscale\textwidth}
         \centering
         \includegraphics[width=\textwidth]{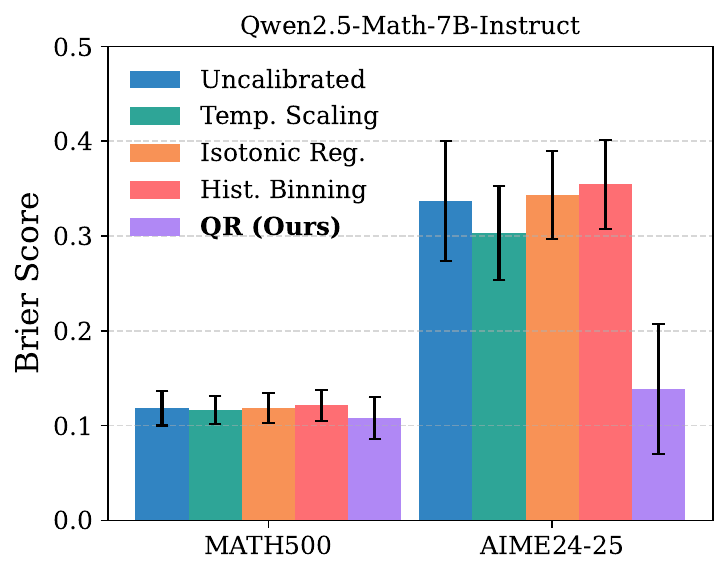}
     \end{subfigure}
     \begin{subfigure}[b]{\appsubfigscale\textwidth}
         \centering
         \includegraphics[width=\textwidth]{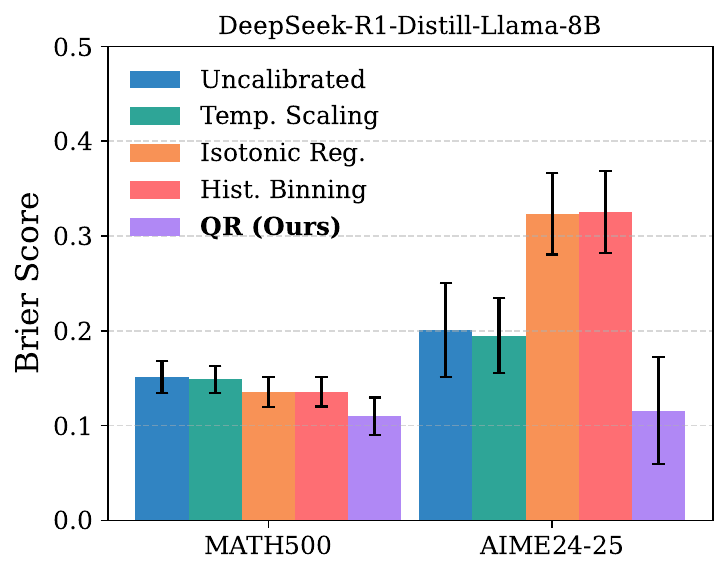}
     \end{subfigure}
     \begin{subfigure}[b]{\appsubfigscale\textwidth}
         \centering
         \includegraphics[width=\textwidth]{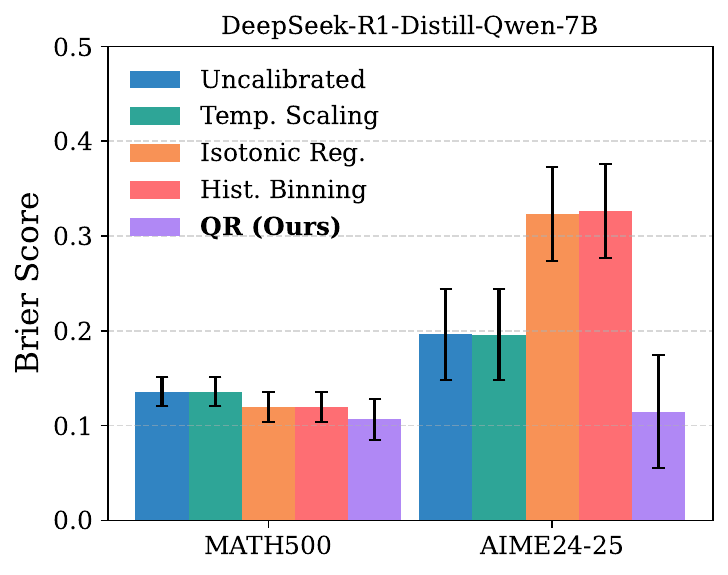}
     \end{subfigure}
     \caption{Comparison of our method with popular calibration techniques---temperature scaling, isotonic regression, and histogram binning---on the \textbf{Shepherd} PRM across \texttt{MATH500} and \texttt{AIME24-25}.}
    \label{fig:posthoc_cal_shepherd}
\end{figure}

\begin{figure}[hbt]
     \centering
     \begin{subfigure}[b]{\appsubfigscale\textwidth}
         \centering
         \includegraphics[width=\textwidth]{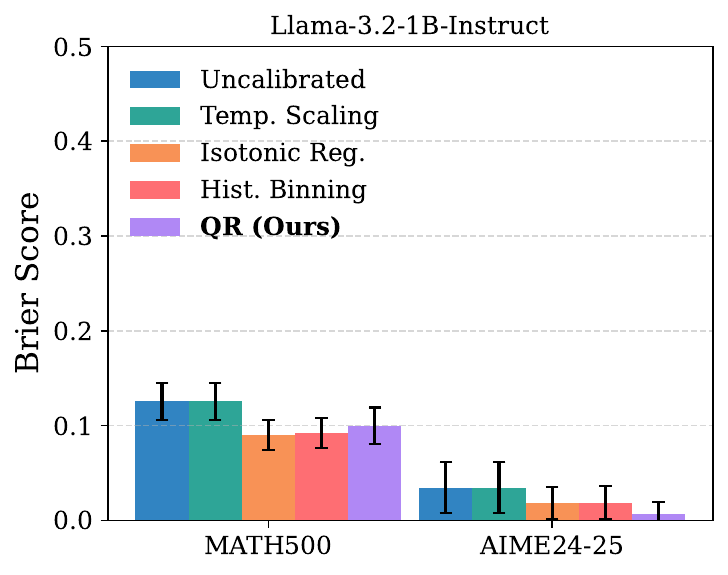}
     \end{subfigure}
     \begin{subfigure}[b]{\appsubfigscale\textwidth}
         \centering
         \includegraphics[width=\textwidth]{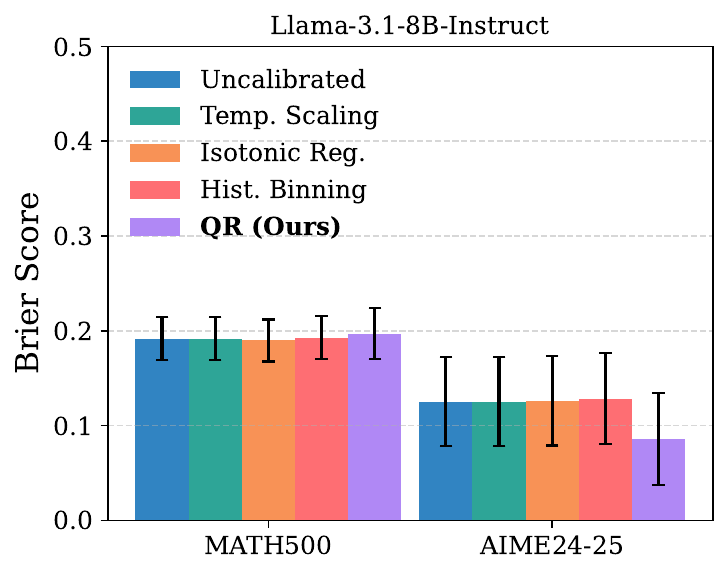}
     \end{subfigure}
     \begin{subfigure}[b]{\appsubfigscale\textwidth}
         \centering
         \includegraphics[width=\textwidth]{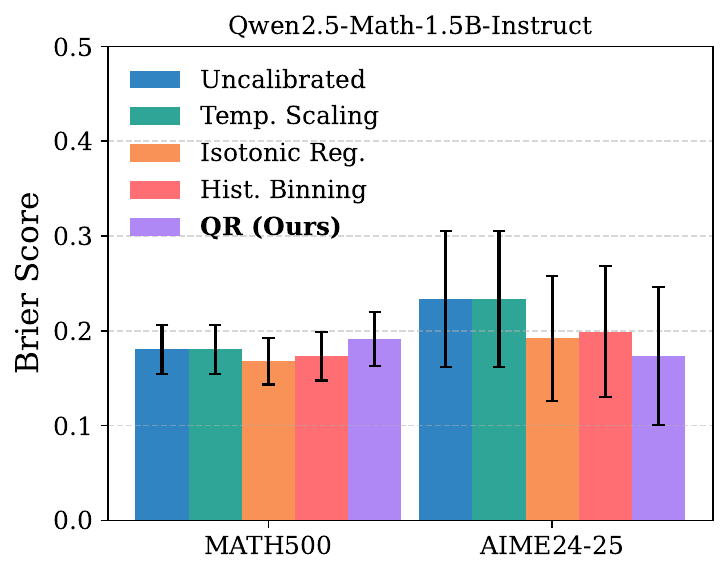}
     \end{subfigure}
     \begin{subfigure}[b]{\appsubfigscale\textwidth}
         \centering
         \includegraphics[width=\textwidth]{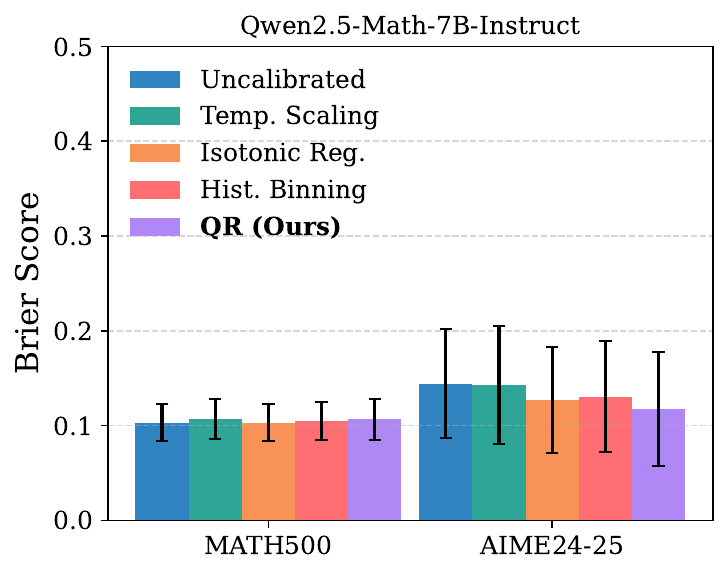}
     \end{subfigure}
     \begin{subfigure}[b]{\appsubfigscale\textwidth}
         \centering
         \includegraphics[width=\textwidth]{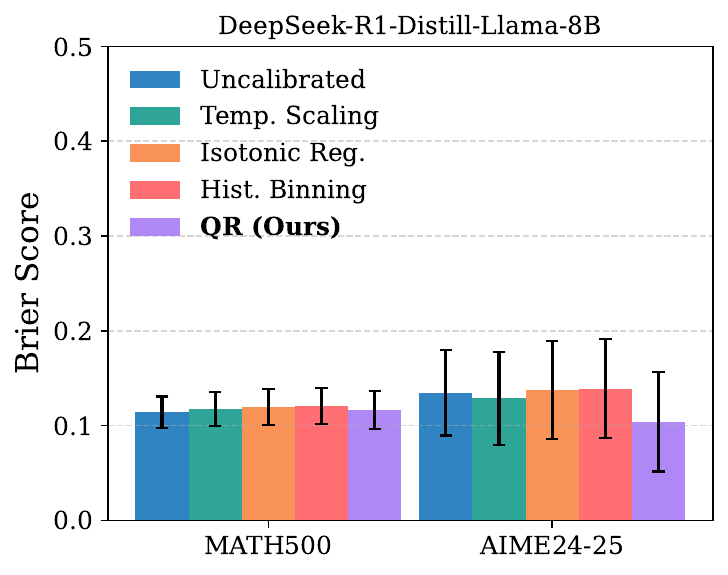}
     \end{subfigure}
     \begin{subfigure}[b]{\appsubfigscale\textwidth}
         \centering
         \includegraphics[width=\textwidth]{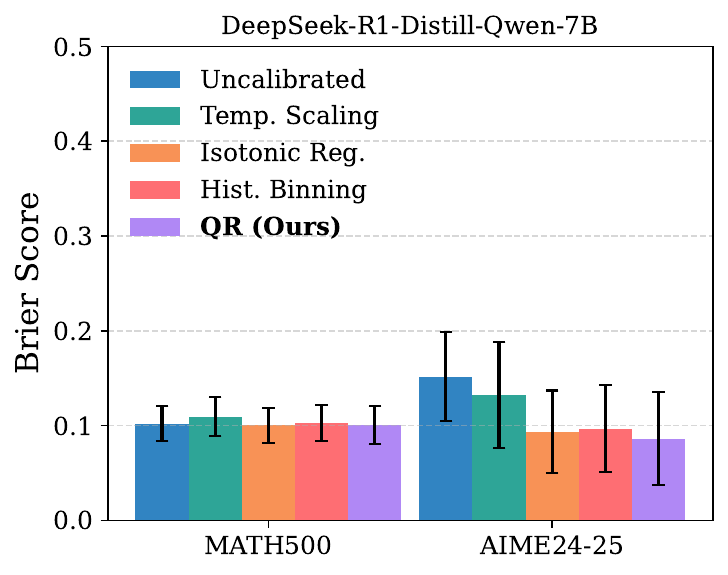}
     \end{subfigure}
     \caption{Comparison of our method with popular calibration techniques---temperature scaling, isotonic regression, and histogram binning---on the \textbf{ReasonEval} PRM across \texttt{MATH500} and \texttt{AIME24-25}.}
    \label{fig:posthoc_cal_reasoneval}
\end{figure}

\clearpage

\begin{table}[tb]
\centering
\caption{Evaluation of the best-of-$N$ method with a fixed-$N$ strategy (BoN), compared against the proposed instance-adaptive sampling strategy (BoN+IAS), with calibrated \textbf{Shepherd} PRM.
We report both accuracy and normalized computational cost (budget). 
Relative improvements over Pass@1 accuracy are highlighted in light blue.}
\label{tab:bon_shepherd}
\vspace{1em}
\renewcommand{\arraystretch}{0.75}
\resizebox{0.9\textwidth}{!}{%
\begin{tabular}{ll|rr|rr|rr}
\toprule
&  & \multicolumn{2}{c}{Baselines} & \multicolumn{2}{c}{w/ Uncal. PRM} & \multicolumn{2}{c}{w/ Calib. PRM}\\
\cmidrule(lr){3-4} \cmidrule(lr){5-6} \cmidrule(lr){7-8}
Dataset & Model & Pass@1 & BoN & BoN+IAS & Budget & BoN+IAS & Budget \\
\midrule
\multirow{6}{*}{\centering \texttt{MATH500}} & Llama-3.2-1B & \cellcolor[RGB]{255,255,255}0.2255 & \cellcolor[RGB]{187,246,245}0.3800 & \cellcolor[RGB]{213,249,248}0.3205 & 0.0852 & \cellcolor[RGB]{194,247,246}0.3633 & 0.6379 \\
 & Llama-3.1-8B & \cellcolor[RGB]{255,255,255}0.4659 & \cellcolor[RGB]{212,249,248}0.5620 & \cellcolor[RGB]{221,250,250}0.5419 & 0.0852 & \cellcolor[RGB]{220,250,249}0.5446 & 0.2481 \\
 & Qwen-2.5-1.5B & \cellcolor[RGB]{255,255,255}0.6970 & \cellcolor[RGB]{221,250,250}0.7740 & \cellcolor[RGB]{230,251,251}0.7518 & 0.0852 & \cellcolor[RGB]{235,252,252}0.7414 & 0.1223 \\
 & Qwen-2.5-7B & \cellcolor[RGB]{255,255,255}0.7994 & \cellcolor[RGB]{234,252,252}0.8460 & \cellcolor[RGB]{235,252,252}0.8437 & 0.0852 & \cellcolor[RGB]{239,253,252}0.8345 & 0.1048 \\
 & R1-Llama-8B & \cellcolor[RGB]{255,255,255}0.6734 & \cellcolor[RGB]{215,249,249}0.7640 & \cellcolor[RGB]{221,250,250}0.7493 & 0.0852 & \cellcolor[RGB]{209,249,248}0.7767 & 0.3106 \\
 & R1-Qwen-7B & \cellcolor[RGB]{255,255,255}0.7556 & \cellcolor[RGB]{214,249,249}0.8480 & \cellcolor[RGB]{229,251,251}0.8137 & 0.0852 & \cellcolor[RGB]{226,251,250}0.8206 & 0.2477 \\
\midrule
\multirow{6}{*}{\centering \texttt{AIME24-25}} & Llama-3.2-1B & \cellcolor[RGB]{255,255,255}0.0042 & \cellcolor[RGB]{255,254,254}0.0000 & \cellcolor[RGB]{253,254,254}0.0067 & 0.1206 & \cellcolor[RGB]{255,254,254}0.0000 & 0.9956 \\
 & Llama-3.1-8B & \cellcolor[RGB]{255,255,255}0.0268 & \cellcolor[RGB]{252,254,254}0.0333 & \cellcolor[RGB]{247,254,253}0.0442 & 0.1206 & \cellcolor[RGB]{251,254,254}0.0355 & 0.7320 \\
 & Qwen-2.5-1.5B & \cellcolor[RGB]{255,255,255}0.0932 & \cellcolor[RGB]{230,251,251}0.1500 & \cellcolor[RGB]{238,252,252}0.1310 & 0.1206 & \cellcolor[RGB]{234,252,252}0.1397 & 0.6159 \\
 & Qwen-2.5-7B & \cellcolor[RGB]{255,255,255}0.0885 & \cellcolor[RGB]{205,248,247}0.2000 & \cellcolor[RGB]{228,251,251}0.1497 & 0.1206 & \cellcolor[RGB]{227,251,251}0.1507 & 0.5740 \\
 & R1-Llama-8B & \cellcolor[RGB]{255,255,255}0.0784 & \cellcolor[RGB]{230,251,251}0.1333 & \cellcolor[RGB]{223,250,250}0.1505 & 0.1206 & \cellcolor[RGB]{212,249,248}0.1760 & 0.8664 \\
 & R1-Qwen-7B & \cellcolor[RGB]{255,255,255}0.1411 & \cellcolor[RGB]{185,246,244}0.3000 & \cellcolor[RGB]{217,250,249}0.2263 & 0.1206 & \cellcolor[RGB]{195,247,246}0.2755 & 0.8456 \\
\bottomrule
\end{tabular}
}
\end{table}

\begin{table}[tb]
\centering
\caption{Evaluation of the best-of-$N$ method with a fixed-$N$ strategy (BoN), compared against the proposed instance-adaptive sampling strategy (BoN+IAS), with calibrated \textbf{ReasonEval} PRM.
We report both accuracy and normalized computational cost (budget). 
Relative improvements/degradations over Pass@1 accuracy are highlighted in light blue/red.}
\label{tab:bon_reasoneval}
\vspace{1em}
\renewcommand{\arraystretch}{0.75}
\resizebox{0.9\textwidth}{!}{%
\begin{tabular}{ll|rr|rr|rr}
\toprule
&  & \multicolumn{2}{c}{Baselines} & \multicolumn{2}{c}{w/ Uncal. PRM} & \multicolumn{2}{c}{w/ Calib. PRM}\\
\cmidrule(lr){3-4} \cmidrule(lr){5-6} \cmidrule(lr){7-8}
Dataset & Model & Pass@1 & BoN & BoN+IAS & Budget & BoN+IAS & Budget \\
\midrule
\multirow{6}{*}{\centering \texttt{MATH500}} & Llama-3.2-1B & \cellcolor[RGB]{255,255,255}0.2255 & \cellcolor[RGB]{226,251,250}0.2900 & \cellcolor[RGB]{227,251,250}0.2887 & 0.0773 & \cellcolor[RGB]{218,250,249}0.3095 & 0.4672 \\
 & Llama-3.1-8B & \cellcolor[RGB]{255,255,255}0.4659 & \cellcolor[RGB]{249,254,254}0.4780 & \cellcolor[RGB]{237,252,252}0.5055 & 0.0773 & \cellcolor[RGB]{239,253,252}0.5007 & 0.2570 \\
 & Qwen-2.5-1.5B & \cellcolor[RGB]{255,255,255}0.6970 & \cellcolor[RGB]{238,252,252}0.7340 & \cellcolor[RGB]{236,252,252}0.7400 & 0.0773 & \cellcolor[RGB]{241,253,252}0.7286 & 0.1324 \\
 & Qwen-2.5-7B & \cellcolor[RGB]{255,255,255}0.7994 & \cellcolor[RGB]{255,246,248}0.7500 & \cellcolor[RGB]{249,254,254}0.8129 & 0.0773 & \cellcolor[RGB]{255,254,254}0.7948 & 0.1523 \\
 & R1-Llama-8B & \cellcolor[RGB]{255,255,255}0.6734 & \cellcolor[RGB]{255,230,237}0.5300 & \cellcolor[RGB]{255,248,250}0.6335 & 0.0773 & \cellcolor[RGB]{255,245,247}0.6142 & 0.2069 \\
 & R1-Qwen-7B & \cellcolor[RGB]{255,255,255}0.7556 & \cellcolor[RGB]{255,233,239}0.6280 & \cellcolor[RGB]{255,248,250}0.7194 & 0.0773 & \cellcolor[RGB]{255,245,248}0.7016 & 0.1653 \\
 \midrule
 \multirow{6}{*}{\centering \texttt{AIME24-25}} & Llama-3.2-1B & \cellcolor[RGB]{255,255,255}0.0042 & \cellcolor[RGB]{255,254,254}0.0000 & \cellcolor[RGB]{254,254,254}0.0050 & 0.1521 & \cellcolor[RGB]{255,254,254}0.0000 & 0.9852 \\
 & Llama-3.1-8B & \cellcolor[RGB]{255,255,255}0.0268 & \cellcolor[RGB]{252,254,254}0.0333 & \cellcolor[RGB]{254,254,254}0.0287 & 0.1521 & \cellcolor[RGB]{250,254,254}0.0370 & 0.8773 \\
 & Qwen-2.5-1.5B & \cellcolor[RGB]{255,255,255}0.0932 & \cellcolor[RGB]{255,253,253}0.0833 & \cellcolor[RGB]{254,254,254}0.0955 & 0.1521 & \cellcolor[RGB]{255,252,252}0.0762 & 0.7430 \\
 & Qwen-2.5-7B & \cellcolor[RGB]{255,255,255}0.0885 & \cellcolor[RGB]{255,248,250}0.0500 & \cellcolor[RGB]{244,253,253}0.1117 & 0.1521 & \cellcolor[RGB]{255,250,252}0.0647 & 0.8471 \\
 & R1-Llama-8B & \cellcolor[RGB]{255,255,255}0.0784 & \cellcolor[RGB]{255,247,249}0.0333 & \cellcolor[RGB]{255,248,250}0.0388 & 0.1521 & \cellcolor[RGB]{255,245,248}0.0238 & 0.8263 \\
 & R1-Qwen-7B & \cellcolor[RGB]{255,255,255}0.1411 & \cellcolor[RGB]{255,231,237}0.0000 & \cellcolor[RGB]{255,244,247}0.0807 & 0.1521 & \cellcolor[RGB]{255,239,243}0.0468 & 0.8333 \\

\bottomrule
\end{tabular}
}
\end{table}

\clearpage

\begin{table}[tb]
\centering
\caption{Evaluation of the beam search method with a fixed-$N$/$M$, compared against the proposed adaptive sampling strategy (IASoK and IASoM) using calibrated \textbf{Shepherd} PRMs.
We report both accuracy and computational cost (budget), measured by the average number of LLM generations per question normalized by that of a fixed-budget BS baseline. 
Relative improvements over the Pass@1 accuracy are highlighted in light blue.}
\label{tab:bs_shepherd}
\vspace{1em}
\renewcommand{\arraystretch}{0.75}
\resizebox{0.9\textwidth}{!}{%
\begin{tabular}{ll|rr|rr|rr}
\toprule
&  & \multicolumn{2}{c}{Baselines} & \multicolumn{4}{c}{IAS w/ Calibrated PRM} \\
\cmidrule(lr){3-4} \cmidrule(lr){5-8}
Dataset & Model & Pass@1 & BS & BS+IASoK & Budget & BS+IASoM & Budget \\
\midrule
\multirow{6}{*}{\texttt{MATH100}} & Llama-3.2-1B & \cellcolor[RGB]{255,255,255}0.2255 & \cellcolor[RGB]{178,245,243}0.4000 & \cellcolor[RGB]{178,245,243}0.4000 & 0.9549 & \cellcolor[RGB]{165,243,241}0.4300 & 1.0634 \\
 & Llama-3.1-8B & \cellcolor[RGB]{255,255,255}0.4659 & \cellcolor[RGB]{209,249,248}0.5700 & \cellcolor[RGB]{222,250,250}0.5400 & 0.5377 & \cellcolor[RGB]{213,249,248}0.5600 & 0.5902 \\
 & Qwen-2.5-1.5B & \cellcolor[RGB]{255,255,255}0.6970 & \cellcolor[RGB]{214,249,249}0.7900 & \cellcolor[RGB]{214,249,249}0.7900 & 0.4806 & \cellcolor[RGB]{218,250,249}0.7800 & 0.5014 \\
 & Qwen-2.5-7B & \cellcolor[RGB]{255,255,255}0.7994 & \cellcolor[RGB]{255,251,252}0.7800 & \cellcolor[RGB]{215,249,249}0.8900 & 0.5382 & \cellcolor[RGB]{215,249,249}0.8900 & 0.5794 \\
 & R1-Llama-8B & \cellcolor[RGB]{255,255,255}0.6734 & \cellcolor[RGB]{203,248,247}0.7900 & \cellcolor[RGB]{190,246,245}0.8200 & 0.5264 & \cellcolor[RGB]{186,246,244}0.8300 & 0.5672 \\
 & R1-Qwen-7B & \cellcolor[RGB]{255,255,255}0.7556 & \cellcolor[RGB]{209,249,248}0.8600 & \cellcolor[RGB]{200,248,247}0.8800 & 0.4063 & \cellcolor[RGB]{191,246,245}0.9000 & 0.4983 \\
 \midrule
\multirow{6}{*}{\texttt{AIME24}} & Llama-3.2-1B & \cellcolor[RGB]{255,255,255}0.0078 & \cellcolor[RGB]{243,253,253}0.0333 & \cellcolor[RGB]{243,253,253}0.0333 & 1.0000 & \cellcolor[RGB]{243,253,253}0.0333 & 1.0000 \\
 & Llama-3.1-8B & \cellcolor[RGB]{255,255,255}0.0479 & \cellcolor[RGB]{232,252,251}0.1000 & \cellcolor[RGB]{232,252,251}0.1000 & 0.8689 & \cellcolor[RGB]{246,253,253}0.0667 & 0.9636 \\
 & Qwen-2.5-1.5B & \cellcolor[RGB]{255,255,255}0.0969 & \cellcolor[RGB]{253,254,254}0.1000 & \cellcolor[RGB]{253,254,254}0.1000 & 0.7565 & \cellcolor[RGB]{253,254,254}0.1000 & 0.8523 \\
 & Qwen-2.5-7B & \cellcolor[RGB]{255,255,255}0.0979 & \cellcolor[RGB]{239,253,252}0.1333 & \cellcolor[RGB]{239,253,252}0.1333 & 0.7994 & \cellcolor[RGB]{239,253,252}0.1333 & 0.9051 \\
 & R1-Llama-8B & \cellcolor[RGB]{255,255,255}0.0656 & \cellcolor[RGB]{255,243,246}0.0000 & \cellcolor[RGB]{239,253,252}0.1000 & 0.8958 & \cellcolor[RGB]{239,253,252}0.1000 & 0.9065 \\
 & R1-Qwen-7B & \cellcolor[RGB]{255,255,255}0.1354 & \cellcolor[RGB]{255,249,250}0.1000 & \cellcolor[RGB]{167,243,242}0.3333 & 0.9917 & \cellcolor[RGB]{182,245,244}0.3000 & 1.0713 \\
\bottomrule
\end{tabular}%
}
\end{table}

\begin{table}[tb]
\centering
\caption{Evaluation of the beam search method with a fixed-$N$/$M$, compared against the proposed adaptive sampling strategy (IASoK and IASoM) using calibrated \textbf{ReasonEval} PRMs.
We report both accuracy and computational cost (budget), measured by the average number of LLM generations per question normalized by that of a fixed-budget BS baseline. 
Relative improvements/degradations over Pass@1 accuracy are highlighted in light blue/red.}
\label{tab:bs_reasoneval}
\vspace{1em}
\renewcommand{\arraystretch}{0.75}
\resizebox{0.9\textwidth}{!}{%
\begin{tabular}{ll|rr|rr|rr}
\toprule
&  & \multicolumn{2}{c}{Baselines} & \multicolumn{4}{c}{IAS w/ Calibrated PRM} \\
\cmidrule(lr){3-4} \cmidrule(lr){5-8}
Dataset & Model & Pass@1 & BS & BS+IASoK & Budget & BS+IASoM & Budget \\
\midrule
\multirow{6}{*}{\texttt{MATH100}} & Llama-3.2-1B & \cellcolor[RGB]{255,255,255}0.2255 & \cellcolor[RGB]{204,248,247}0.3400 & \cellcolor[RGB]{178,245,243}0.4000 & 0.5030 & \cellcolor[RGB]{191,246,245}0.3700 & 0.6730 \\
 & Llama-3.1-8B & \cellcolor[RGB]{255,255,255}0.4659 & \cellcolor[RGB]{209,249,248}0.5700 & \cellcolor[RGB]{209,249,248}0.5700 & 0.3606 & \cellcolor[RGB]{209,249,248}0.5700 & 0.4490 \\
 & Qwen-2.5-1.5B & \cellcolor[RGB]{255,255,255}0.6970 & \cellcolor[RGB]{218,250,249}0.7800 & \cellcolor[RGB]{222,250,250}0.7700 & 0.4171 & \cellcolor[RGB]{214,249,249}0.7900 & 0.4281 \\
 & Qwen-2.5-7B & \cellcolor[RGB]{255,255,255}0.7994 & \cellcolor[RGB]{219,250,249}0.8800 & \cellcolor[RGB]{223,251,250}0.8700 & 0.4664 & \cellcolor[RGB]{223,251,250}0.8700 & 0.4840 \\
 & R1-Llama-8B & \cellcolor[RGB]{255,255,255}0.6734 & \cellcolor[RGB]{255,213,224}0.3500 & \cellcolor[RGB]{203,248,247}0.7900 & 0.3379 & \cellcolor[RGB]{186,246,244}0.8300 & 0.4088 \\
 & R1-Qwen-7B & \cellcolor[RGB]{255,255,255}0.7556 & \cellcolor[RGB]{255,213,224}0.4800 & \cellcolor[RGB]{195,247,246}0.8900 & 0.3173 & \cellcolor[RGB]{195,247,246}0.8900 & 0.3628 \\
 \midrule
\multirow{6}{*}{\texttt{AIME24}} & Llama-3.2-1B & \cellcolor[RGB]{255,255,255}0.0078 & \cellcolor[RGB]{255,253,254}0.0000 & \cellcolor[RGB]{255,253,254}0.0000 & 0.9258 & \cellcolor[RGB]{255,253,254}0.0000 & 0.9782 \\
 & Llama-3.1-8B & \cellcolor[RGB]{255,255,255}0.0479 & \cellcolor[RGB]{255,252,253}0.0333 & \cellcolor[RGB]{255,246,249}0.0000 & 0.4007 & \cellcolor[RGB]{255,252,253}0.0333 & 0.7371 \\
 & Qwen-2.5-1.5B & \cellcolor[RGB]{255,255,255}0.0969 & \cellcolor[RGB]{238,252,252}0.1333 & \cellcolor[RGB]{253,254,254}0.1000 & 0.3134 & \cellcolor[RGB]{255,249,251}0.0667 & 0.5084 \\
 & Qwen-2.5-7B & \cellcolor[RGB]{255,255,255}0.0979 & \cellcolor[RGB]{254,254,254}0.1000 & \cellcolor[RGB]{254,254,254}0.1000 & 0.3626 & \cellcolor[RGB]{255,249,251}0.0667 & 0.5155 \\
 & R1-Llama-8B & \cellcolor[RGB]{255,255,255}0.0656 & \cellcolor[RGB]{254,254,254}0.0667 & \cellcolor[RGB]{239,253,252}0.1000 & 0.7205 & \cellcolor[RGB]{195,247,246}0.2000 & 0.8866 \\
 & R1-Qwen-7B & \cellcolor[RGB]{255,255,255}0.1354 & \cellcolor[RGB]{226,251,250}0.2000 & \cellcolor[RGB]{211,249,248}0.2333 & 0.6203 & \cellcolor[RGB]{167,243,242}0.3333 & 0.6861 \\
\bottomrule
\end{tabular}%
}
\end{table}

\clearpage

\begin{figure}[htb]
     \centering
     \begin{subfigure}[b]{\appsubfigscale\textwidth}
         \centering
         \includegraphics[width=\textwidth]{figures/acc_by_difficulty_bon_QwenPRM-7B_Llama-3.2-1B-Instruct.pdf}
     \end{subfigure}
     \begin{subfigure}[b]{\appsubfigscale\textwidth}
         \centering
         \includegraphics[width=\textwidth]{figures/acc_by_difficulty_bon_QwenPRM-7B_Llama-3.1-8B-Instruct.pdf}
     \end{subfigure}
     \begin{subfigure}[b]{\appsubfigscale\textwidth}
         \centering
         \includegraphics[width=\textwidth]{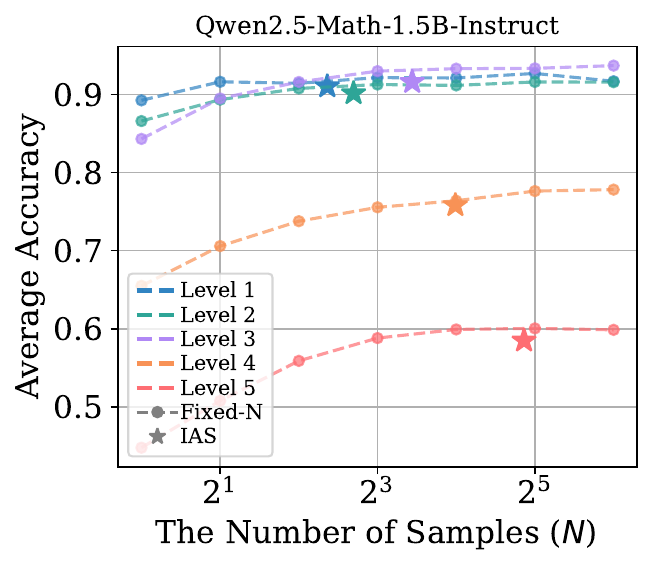}
     \end{subfigure}
     \begin{subfigure}[b]{\appsubfigscale\textwidth}
         \centering
         \includegraphics[width=\textwidth]{figures/acc_by_difficulty_bon_QwenPRM-7B_Qwen2.5-Math-7B-Instruct.pdf}
     \end{subfigure}
     \begin{subfigure}[b]{\appsubfigscale\textwidth}
         \centering
         \includegraphics[width=\textwidth]{figures/acc_by_difficulty_bon_QwenPRM-7B_DeepSeek-R1-Distill-Qwen-7B.pdf}
     \end{subfigure}
     \begin{subfigure}[b]{\appsubfigscale\textwidth}
         \centering
         \includegraphics[width=\textwidth]{figures/acc_by_difficulty_bon_QwenPRM-7B_DeepSeek-R1-Distill-Qwen-7B.pdf}
     \end{subfigure}
     \caption{IAS scales with problem difficulty in \texttt{MATH500}, with \textbf{Qwen} PRM. 
     Average accuracy over the test points with varying difficulty levels (1: easy, 5: hard) is illustrated. Regular fixed-$N$ and our IAS approaches are indicated by dashed lines and stars, respectively, and IAS allocates more samples to harder questions.}
    \label{fig:acc_by_difficulty_qwen_full}
\end{figure}

\begin{figure}[htb]
     \centering
     \begin{subfigure}[b]{\appsubfigscale\textwidth}
         \centering
         \includegraphics[width=\textwidth]{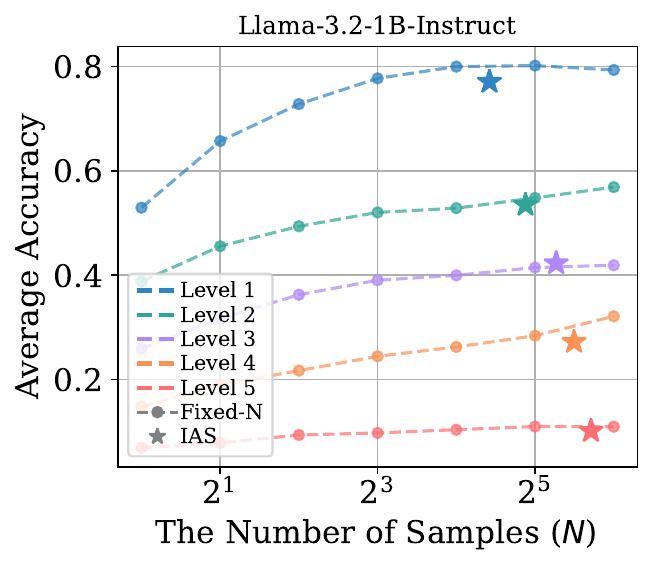}
     \end{subfigure}
     \begin{subfigure}[b]{\appsubfigscale\textwidth}
         \centering
         \includegraphics[width=\textwidth]{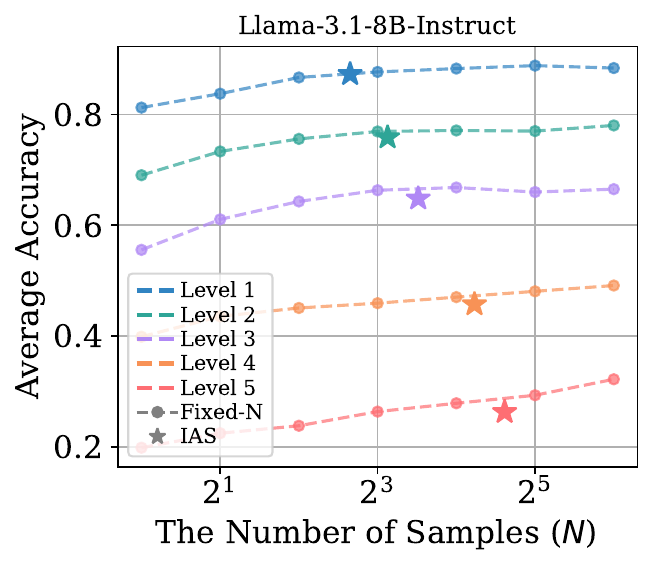}
     \end{subfigure}
     \begin{subfigure}[b]{\appsubfigscale\textwidth}
         \centering
         \includegraphics[width=\textwidth]{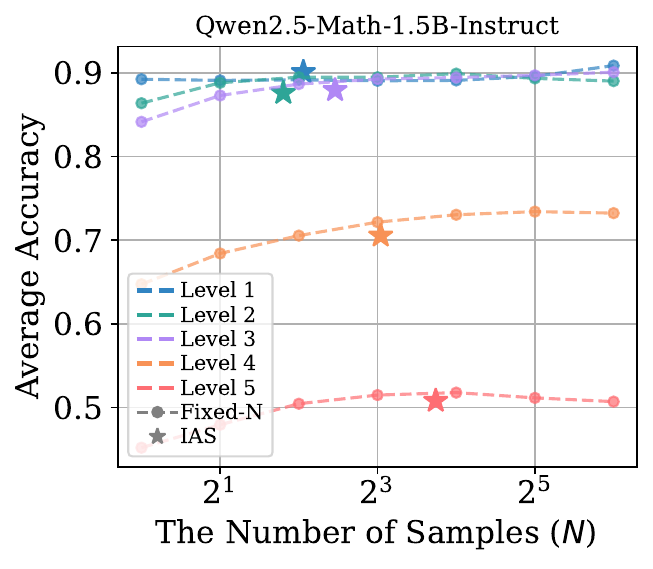}
     \end{subfigure}
     \begin{subfigure}[b]{\appsubfigscale\textwidth}
         \centering
         \includegraphics[width=\textwidth]{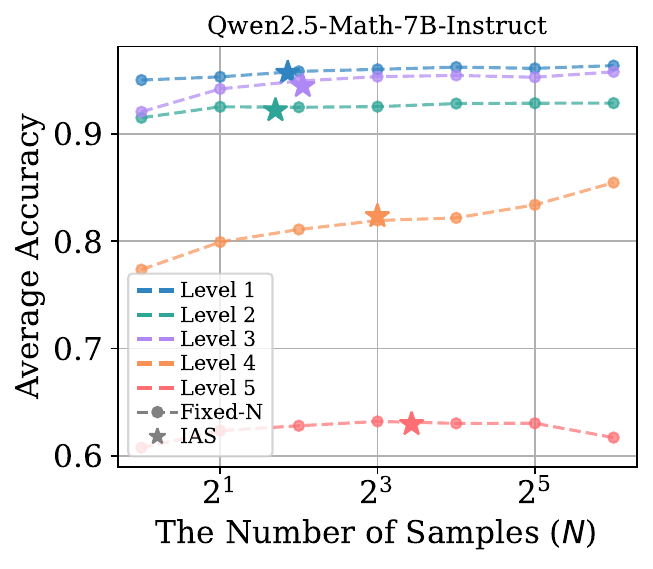}
     \end{subfigure}
     \begin{subfigure}[b]{\appsubfigscale\textwidth}
         \centering
         \includegraphics[width=\textwidth]{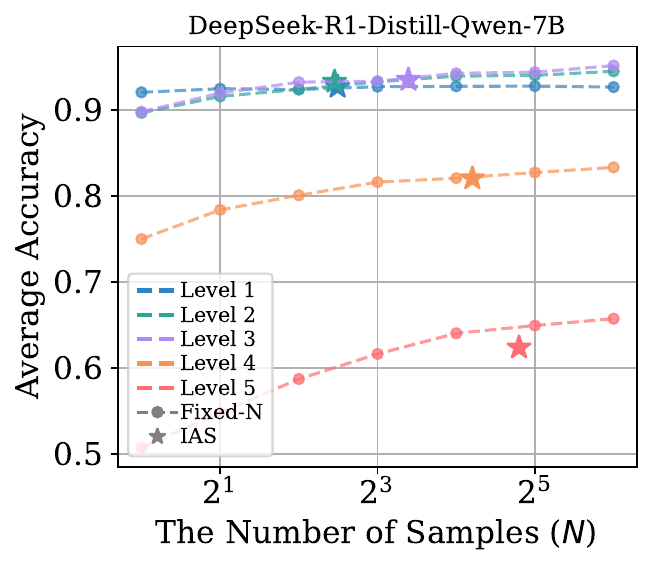}
     \end{subfigure}
     \begin{subfigure}[b]{\appsubfigscale\textwidth}
         \centering
         \includegraphics[width=\textwidth]{figures/acc_by_difficulty_bon_Shepherd-7B_DeepSeek-R1-Distill-Qwen-7B.pdf}
     \end{subfigure}
     \caption{IAS scales with problem difficulty in \texttt{MATH500}, with \textbf{Shepherd} PRM. 
     Average accuracy over the test points with varying difficulty levels (1: easy, 5: hard) is illustrated. Regular fixed-$N$ and our IAS approaches are indicated by dashed lines and stars, respectively, and IAS allocates more samples to harder questions.}
    \label{fig:acc_by_difficulty_shepherd_full}
\end{figure}

\begin{figure}[htb]
     \centering
     \begin{subfigure}[b]{\appsubfigscale\textwidth}
         \centering
         \includegraphics[width=\textwidth]{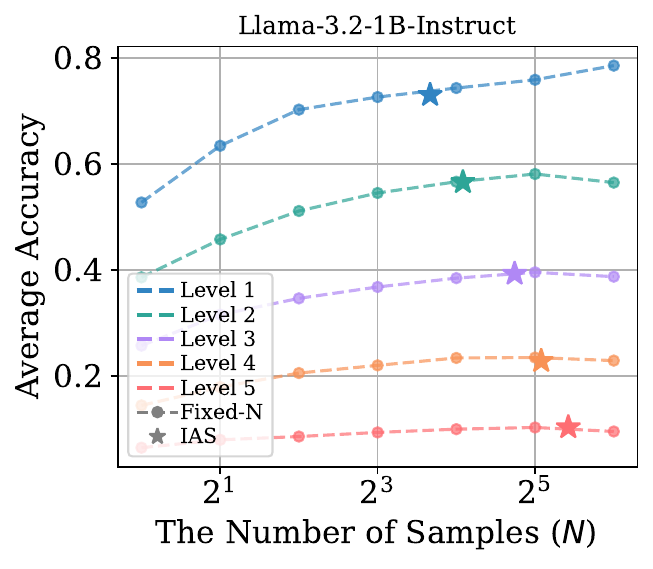}
     \end{subfigure}
     \begin{subfigure}[b]{\appsubfigscale\textwidth}
         \centering
         \includegraphics[width=\textwidth]{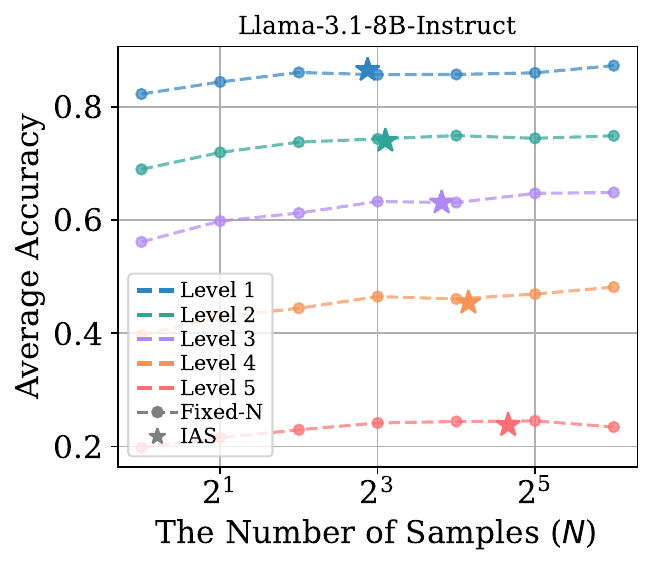}
     \end{subfigure}
     \begin{subfigure}[b]{\appsubfigscale\textwidth}
         \centering
         \includegraphics[width=\textwidth]{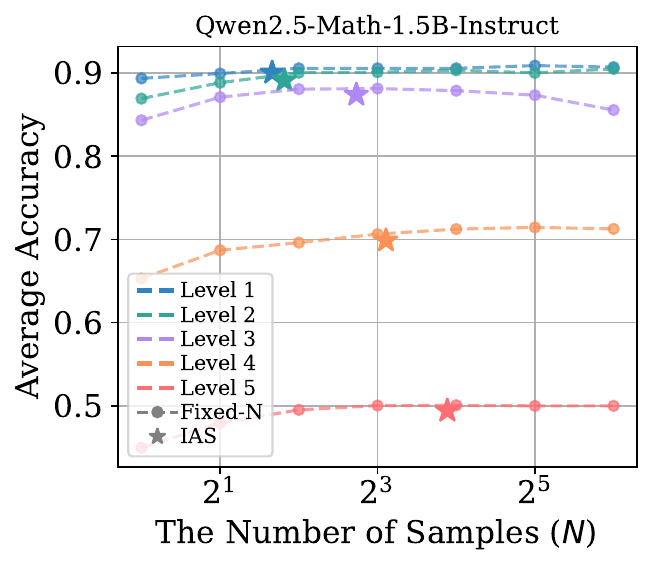}
     \end{subfigure}
     \begin{subfigure}[b]{\appsubfigscale\textwidth}
         \centering
         \includegraphics[width=\textwidth]{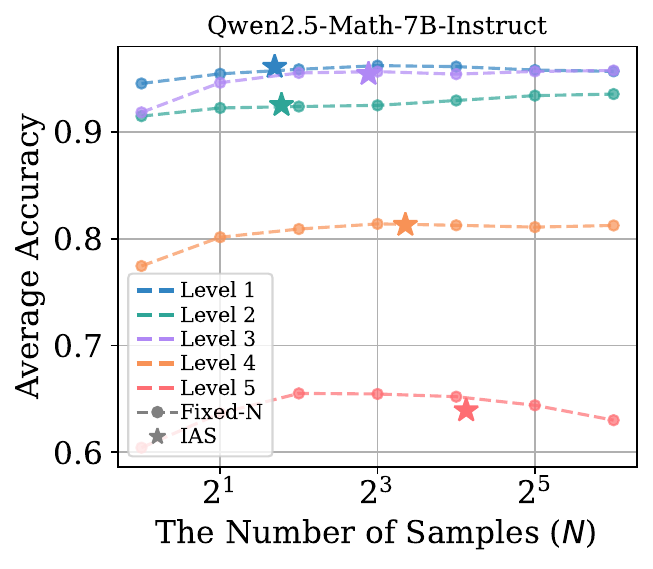}
     \end{subfigure}
     \begin{subfigure}[b]{\appsubfigscale\textwidth}
         \centering
         \includegraphics[width=\textwidth]{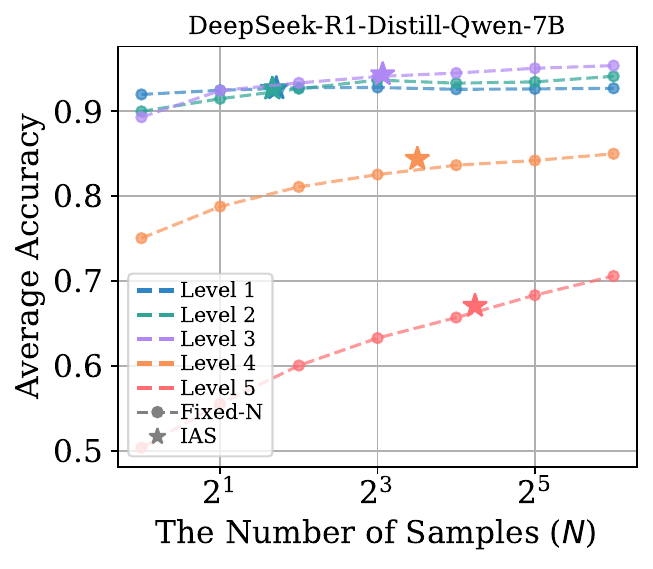}
     \end{subfigure}
     \begin{subfigure}[b]{\appsubfigscale\textwidth}
         \centering
         \includegraphics[width=\textwidth]{figures/acc_by_difficulty_bon_Shepherd-7B_DeepSeek-R1-Distill-Qwen-7B.pdf}
     \end{subfigure}
     \caption{IAS scales with problem difficulty in \texttt{MATH500}, with \textbf{ReasonEval} PRM. 
     Average accuracy over the test points with varying difficulty levels (1: easy, 5: hard) is illustrated. Regular fixed-$N$ and our IAS approaches are indicated by dashed lines and stars, respectively, and IAS allocates more samples to harder questions.}
    \label{fig:acc_by_difficulty_reasoneval_full}
\end{figure}

\clearpage

\renewcommand{\appsubfigscale}{0.45}
\begin{figure}[htb]
     \centering
     \begin{subfigure}[b]{\appsubfigscale\textwidth}
         \centering
         \includegraphics[width=\textwidth]{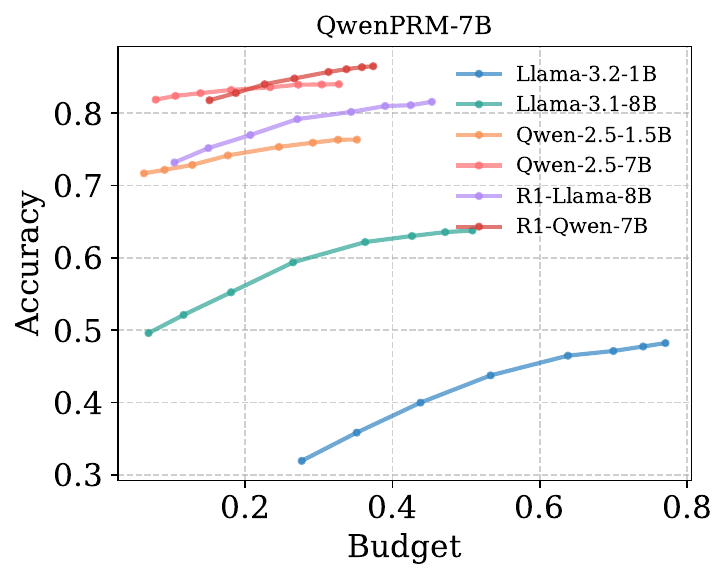}
         \caption{Qwen-PRM on \texttt{MATH500}}
     \end{subfigure}
     \begin{subfigure}[b]{\appsubfigscale\textwidth}
         \centering
         \includegraphics[width=\textwidth]{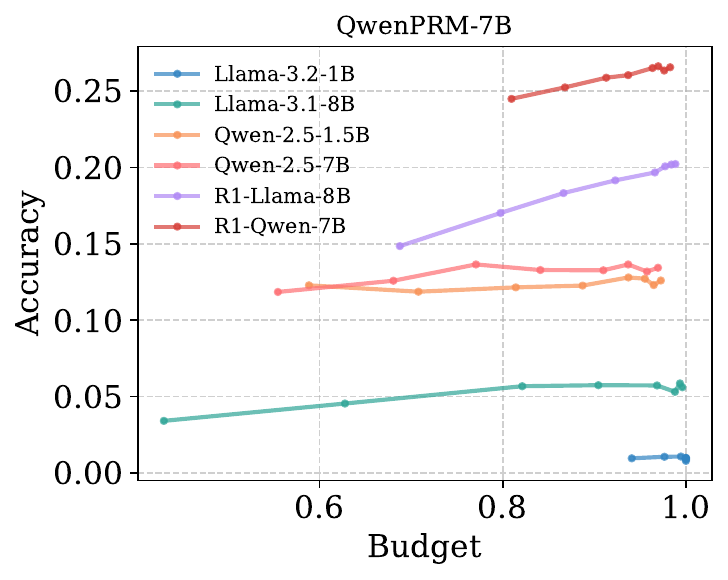}
         \caption{Qwen-PRM on \texttt{AIME24-25}}
     \end{subfigure}
     \begin{subfigure}[b]{\appsubfigscale\textwidth}
         \centering
         \includegraphics[width=\textwidth]{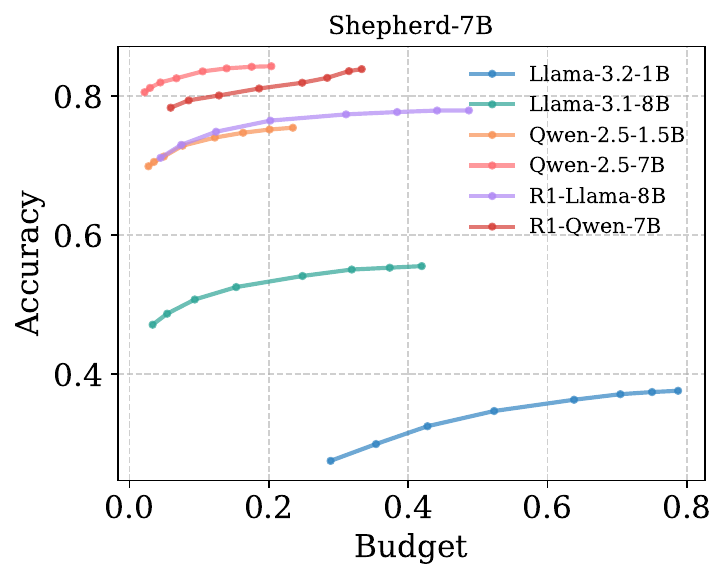}
         \caption{Shepherd-PRM on \texttt{MATH500}}
     \end{subfigure}
     \begin{subfigure}[b]{\appsubfigscale\textwidth}
         \centering
         \includegraphics[width=\textwidth]{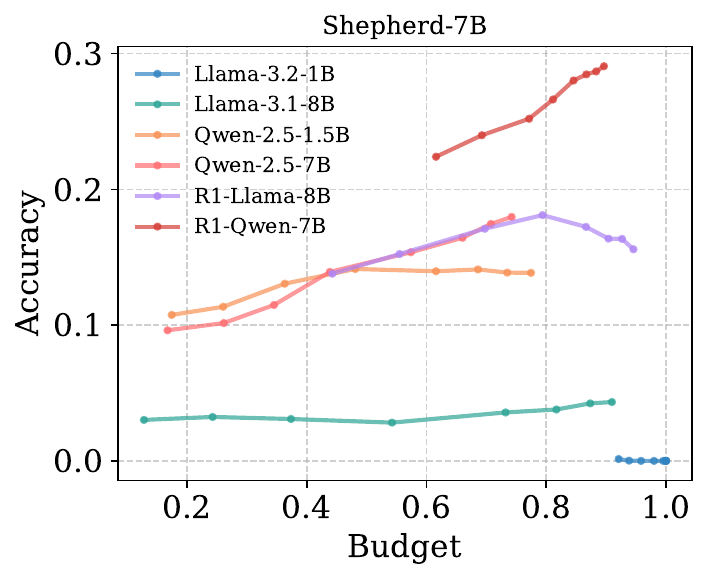}
         \caption{Shepherd-PRM on \texttt{AIME24-25}}
     \end{subfigure}
     \begin{subfigure}[b]{\appsubfigscale\textwidth}
         \centering
         \includegraphics[width=\textwidth]{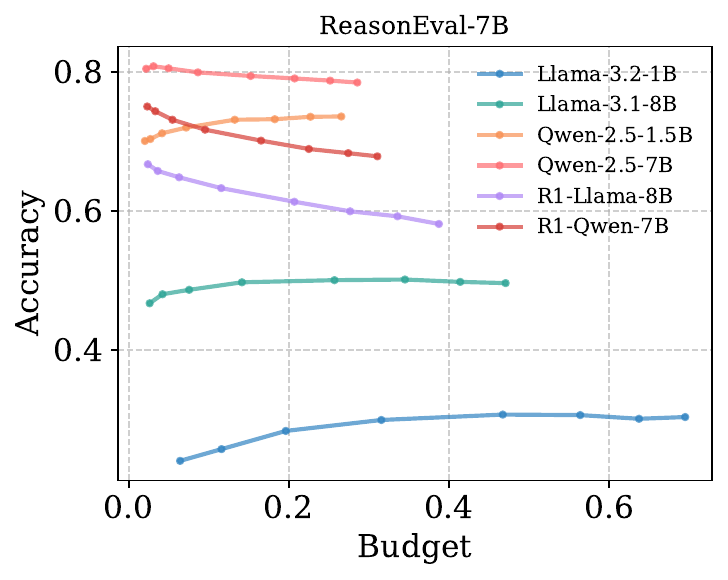}
         \caption{ReasonEval-PRM on \texttt{MATH500}}
     \end{subfigure}
     \begin{subfigure}[b]{\appsubfigscale\textwidth}
         \centering
         \includegraphics[width=\textwidth]{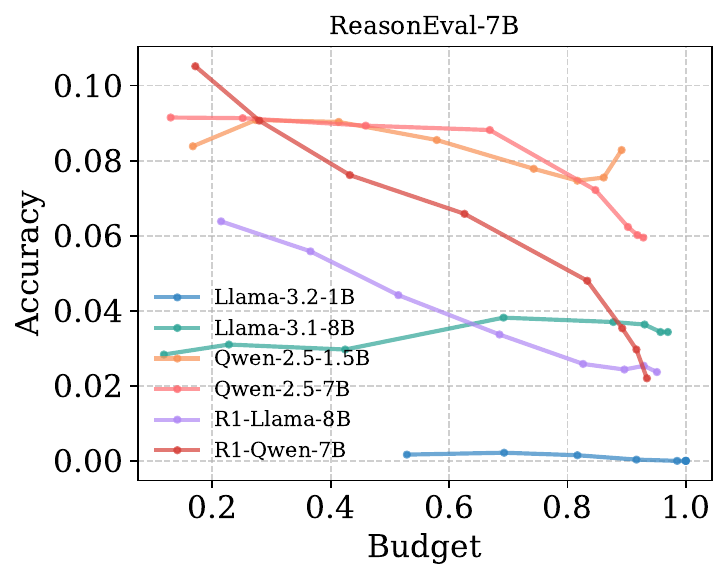}
         \caption{ReasonEval-PRM on \texttt{AIME24-25}}
     \end{subfigure}
     \caption{Budget vs. accuracy plots on \texttt{MATH500} and \texttt{AIME24-25} datasets with varying $C$.}
    \label{fig:ablation_C}
\end{figure}

\renewcommand{\appsubfigscale}{0.45}
\begin{figure}[htb]
     \centering
     \begin{subfigure}[b]{\appsubfigscale\textwidth}
         \centering
         \includegraphics[width=\textwidth]{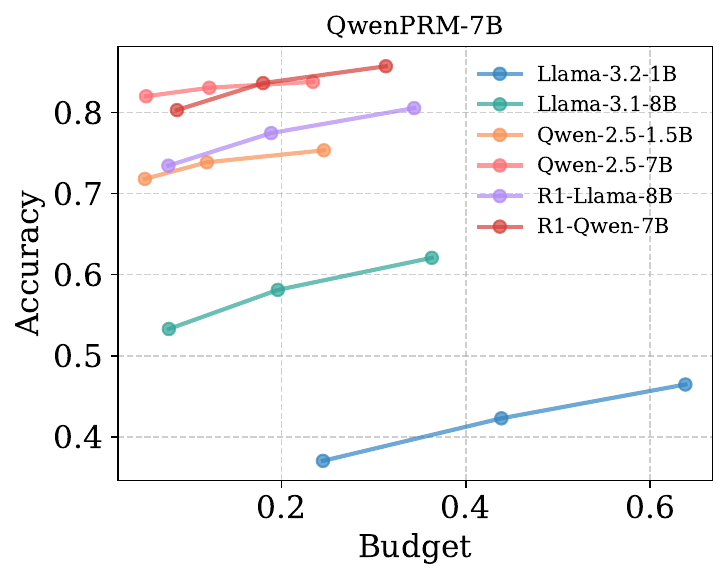}
         \caption{Qwen-PRM on \texttt{MATH500}}
     \end{subfigure}
     \begin{subfigure}[b]{\appsubfigscale\textwidth}
         \centering
         \includegraphics[width=\textwidth]{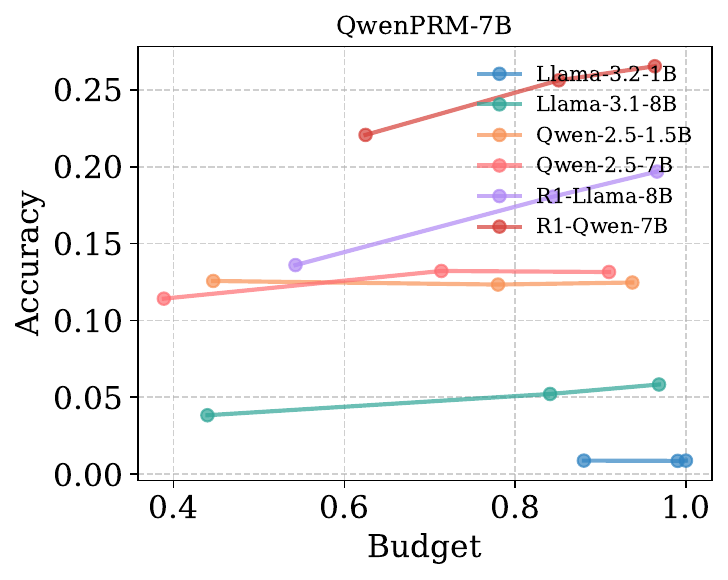}
         \caption{Qwen-PRM on \texttt{AIME24-25}}
     \end{subfigure}
     \begin{subfigure}[b]{\appsubfigscale\textwidth}
         \centering
         \includegraphics[width=\textwidth]{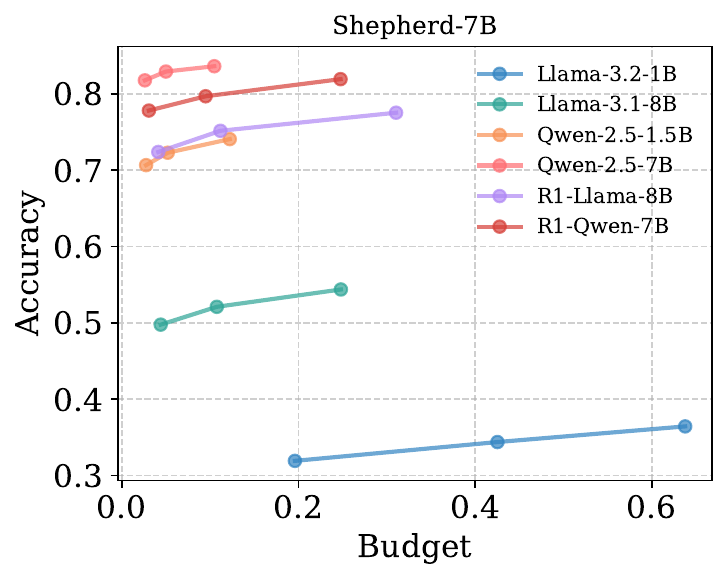}
         \caption{Shepherd-PRM on \texttt{MATH500}}
     \end{subfigure}
     \begin{subfigure}[b]{\appsubfigscale\textwidth}
         \centering
         \includegraphics[width=\textwidth]{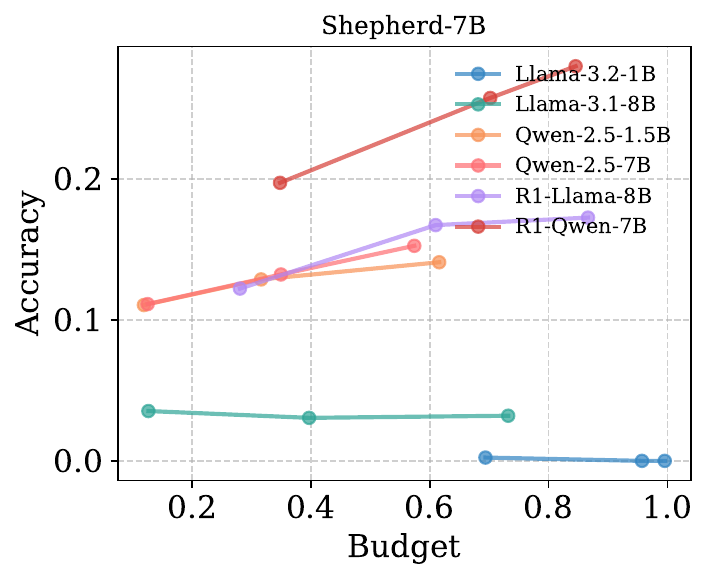}
         \caption{Shepherd-PRM on \texttt{AIME24-25}}
     \end{subfigure}
     \begin{subfigure}[b]{\appsubfigscale\textwidth}
         \centering
         \includegraphics[width=\textwidth]{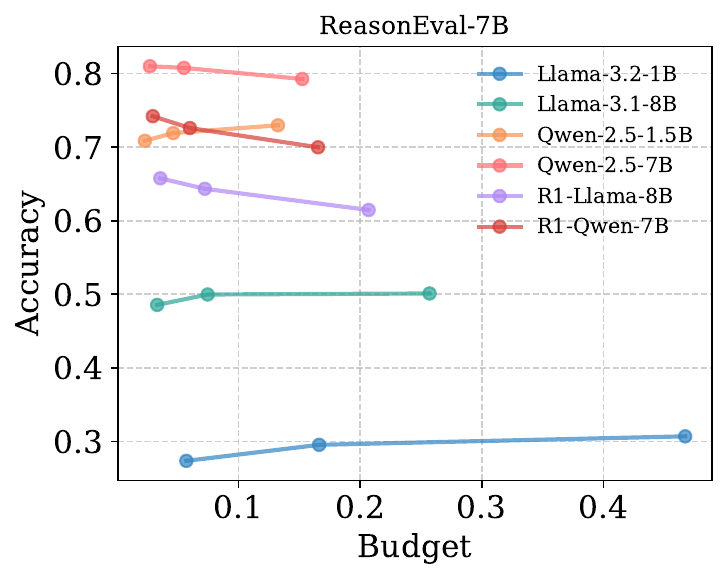}
         \caption{ReasonEval-PRM on \texttt{MATH500}}
     \end{subfigure}
     \begin{subfigure}[b]{\appsubfigscale\textwidth}
         \centering
         \includegraphics[width=\textwidth]{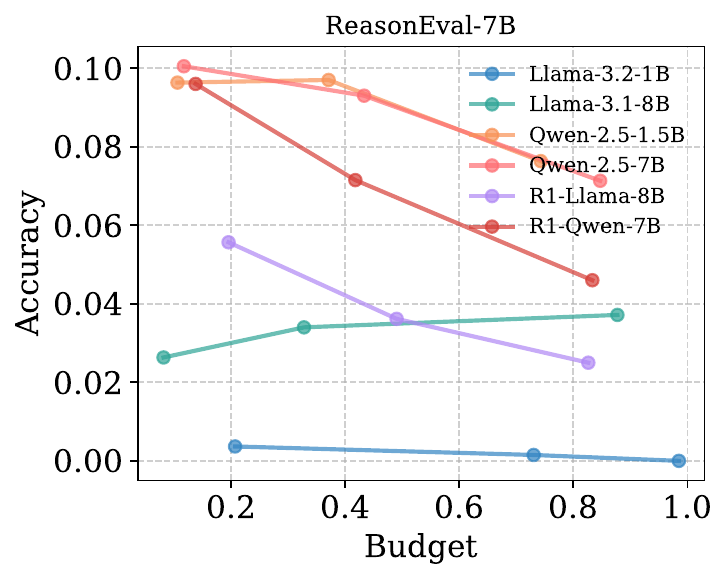}
         \caption{ReasonEval-PRM on \texttt{AIME24-25}}
     \end{subfigure}
     \caption{Budget vs. accuracy plots on \texttt{MATH500} and \texttt{AIME24-25} datasets with varying $\beta$.}
    \label{fig:ablation_quantile}
\end{figure}

\end{document}